\newtheorem{assumption}{Assumption}
\newtheorem{lemma}{Lemma}[section]
\newtheorem{corollary}{Corollary}[section]
\newtheorem{definition}{Definition}
\newtheorem{remark}{Remark}
\newtheorem{theorem}{Theorem}[section]
\newcommand{\cA}{{\cal A}}
\newcommand{\cT}{{\cal T}}
\def\calA{\mathcal{A}}  
 \def\calE{\mathcal{E}}
\def\calP{\mathcal{P}}  
 \def\calT{\mathcal{T}} 
  \def\calX{\mathcal{X}}
\def\calY{\mathcal{Y}}
\def\Cov{{\rm Cov}}
\def\RR{\mathbb{R}}
\def\TT{\mathbb{T}}
\def\SS{\mathbb{S}}
\def\mat{\hbox{\rm mat}}
\def\vec{\hbox{\rm vec}}
\title{Guaranteed Noisy CP Tensor Recovery via Riemannian Optimization on the Segre Manifold}
\author[1]{Ke Xu\thanks{kxu6@nd.edu}}
\author[1]{Yuefeng Han\thanks{yuefeng.han@nd.edu}}
\affil[1]{Department of Applied and Computational Mathematics and Statistics, \newline
University of Notre Dame}
\date{}
\begin{document}

\maketitle

\begin{abstract}
Recovering a low-CP-rank tensor from noisy linear measurements is a central challenge in high-dimensional data analysis, with applications spanning tensor PCA, tensor regression, and beyond. We exploit the intrinsic geometry of rank-one tensors by casting the recovery task as an optimization problem over the Segre manifold, the smooth Riemannian manifold of rank-one tensors. This geometric viewpoint yields two powerful algorithms: Riemannian Gradient Descent (RGD) and Riemannian Gauss-Newton (RGN), each of which preserves feasibility at every iteration. Under mild noise assumptions, we prove that RGD converges at a local linear rate, while RGN exhibits an initial local quadratic convergence phase that transitions to a linear rate as the iterates approach the statistical noise floor. Extensive synthetic experiments validate these convergence guarantees and demonstrate the practical effectiveness of our methods.
\end{abstract}

\section{Introduction}

Tensor decomposition, particularly the CP decomposition, has emerged as a powerful tool for analyzing high-dimensional data across diverse domains such as chemometrics, neuroscience, and recommendation systems \citep{tang2023multivariate, frolov2017tensor, bi2021tensors}. Specifically, for an order-$d$ tensor $\calT \in \RR^{p_1 \times \cdots \times p_d}$, the CP decomposition expresses it as a sum of rank-one tensors: 
\begin{align} \label{cp-tensor}
\calT
=\sum_{i=1}^r \lambda_iu_{1,i}\otimes u_{2,i}\otimes\cdots\otimes u_{d,i},    
\end{align}
where $\otimes$ denotes tensor product, each factor $u_{k,i}\in\RR^{p_k}$ vector with $\|u_{k,i}\|_2=1$, $r$ is the CP rank, and $\lambda_i\in\RR$. Under mild identifiability conditions (e.g., \ Kruskal’s criterion \cite{kruskal1977three}), this representation is essentially unique up to scaling and permutation, making it a widely adopted model in multi-way data analysis. 

In practice, one often only observes noisy measurements of $\calT$, for example
\[
\calY = \calA(\calT) + \calE,
\]
where $\calA$ is a linear observation operator (possibly compressive) and $\calE$ denotes additive noise.

In this work, we address the problem of recovering the underlying low CP rank tensor $\calT$ from noisy measurements. In particular, we perform optimization directly on the \emph{Segre manifold}, a smooth Riemannian manifold composed of rank-one tensors. Utilizing Riemannian optimization techniques ensures that the iterates remain on the manifold, thereby preserving the structure of the CP model and achieving improved convergence properties over traditional Euclidean approaches \citep{kolda2009tensor}.

\paragraph{Main contribution.}
Our contributions can be summarized as follows:
\begin{enumerate}
    \item We develop Riemannian Gradient Descent (RGD) and Riemannian Gauss-Newton (RGN) algorithms specifically tailored for noisy CP tensor estimation problems by directly optimizing on the Segre manifold.
    \item We derive convergence guarantees for both the RGD and RGN methods in the noisy case and analyze the impact of the geometric properties on the convergence behavior.
    \item Extensive experiments on simulation studies demonstrate that our algorithms yield robust and interpretable factor recovery under noisy conditions, outperforming traditional approaches.
\end{enumerate}

\subsection{Related Work}

Classical methods for CP tensor decomposition, notably Alternating Least Squares (ALS) \citep{carroll1970analysis, harshman1970foundations, kolda2009tensor, comon2009tensor}, are widely used due to their conceptual simplicity and low per-iteration cost. However, ALS does not offer a general theoretical guarantee of convergence \citep{kolda2009tensor}.  Early theoretical work addressed this shortcoming under strong orthogonality assumptions, deriving convergence results for the orthogonal CP model \citep{anandkumar2014tensor, montanari2014statistical, wang2017tensor}. More recently, attention has turned to non‐orthogonal decompositions under soft incoherence assumptions. \cite{anandkumar2014guaranteed} extended their ALS analysis to the non‐orthogonal case with random basis vectors on the sphere, and Sharan and Valiant \citep{sharan2017orthogonalized} proposed an “orthogonalized” ALS variant. However, \cite{sharan2017orthogonalized} observed that its reliance on simultaneous diagonalization can be computationally inefficient.

More recently, manifold optimization techniques have shown promise for tensor estimation, particularly in the context of low-rank matrix and Tucker tensor decompositions \citep{boumal2023intromanifolds, luo2023low, luo2024tensor}. In these cases, tensors with fixed Tucker ranks form a Riemannian manifold, which provides a natural framework for optimization. The tangent space of this manifold admits a simple parametrization, facilitating efficient optimization \citep{kressner2014low}. These methods have demonstrated significant improvements in tensor recovery, particularly in the noisy settings, by incorporating geometric properties of the manifold directly into the optimization process.

However, extending these Riemannian optimization methods to low CP rank tensor estimation presents unique challenges. In contrast to the Tucker decomposition, the CP model is inherently non-orthogonal, which leads to issues such as slower convergence, local minima, and increased computational complexity. While there have been attempts to address these issues, such as the work by \cite{swijsen2022tensor}, which introduced a Riemannian optimization approach for CP decomposition, a comprehensive theoretical analysis of the convergence properties of such methods remains an open question.

Our work bridges this gap by explicitly incorporating the geometric structure of the rank-one tensor space through Riemannian optimization techniques. Intuitively, a rank-one tensor can be viewed as a Tucker rank-one tensor, which sidesteps the non-orthogonality challenges in the CP model. Such greedy or rank-one updates are a natural procedure for
CP tensor decomposition \citep{zhang2001rank}, and linear convergence rates for incoherent CP tensors are proved in \cite{anandkumar2014guaranteed,sun2017provable}. By leveraging recent advancements in manifold optimization, we develop algorithms that respect the intrinsic geometry of the CP model, while also providing robust convergence properties under noisy conditions. In particular, our work demonstrates that these techniques can improve upon traditional methods by ensuring feasibility at each iteration and offering better convergence guarantees, even in the presence of noise.


\subsection{Organization}

The remainder of this manuscript is organized as follows. In Section~\ref{sec:model}, we introduce our framework and formulate the two core problems: tensor decomposition and tensor regression. Section~\ref{sec:method} presents our proposed Riemannian optimization algorithms and provides full algorithmic details. Section~\ref{sec:theory} develops the theoretical analysis, including local convergence guarantees. In Section~\ref{sec:experiment}, we report comprehensive experimental results. Finally, Section~\ref{sec:conclusion} concludes the paper and outlines directions for future work. All detailed proofs are collected in the appendix.

\section{Model and Problem Formulation}\label{sec:model}

Our goal is to accurately recover the signal tensor $\calT$, which admits the CP decomposition in \eqref{cp-tensor}, by solving an optimization problem that leverages the geometry of the Segre manifold. In particular, we address the following minimization problem:
\begin{equation}\label{eq:main_opt}
\min_{(\calT_1,\dots,\calT_r)\,\in\,\operatorname{Seg}}\mathcal{L} ( \{\calT_i \}_{i=1}^r ) = \min_{(\calT_1,\dots,\calT_r)\,\in\,\operatorname{Seg}}
  \frac12\Bigl\|\calY
       -\sum_{i=1}^r \calA(\calT_i)\Bigr\|_{\mathrm{F}}^2,
\end{equation}
where the mapping \(\calA: \RR^{p_1 \times \cdots \times p_d} \to \RR^{n}\) is a (possibly random) linear operator which allows for both complete and compressive observations of the tensor, $\operatorname{Seg}$ denotes the Segre manifold of rank-one tensors (Definition~\ref{def:segre}).

Previous work has largely focused on the estimation of the tensor factors by iterating across each mode of the tensor \citep{carroll1970analysis,sharan2017orthogonalized}. In contrast, our formulation directly iterates on the Segre manifold, the smooth Riemannian manifold composed of rank-one tensors. This intrinsic approach leverages the rank-one structure of each component, ensuring that the CP structure is preserved throughout the optimization.
This formulation is sufficiently general to encompass a variety of applications, including:

\paragraph{Tensor Decomposition.} When the entire signal tensor $\cT$ is observed, we simply take \(\calA = \mathrm{Id}: \RR^{p_1\times \cdots \times p_d} \rightarrow \RR^{p_1\times \cdots \times p_d}\). In this case, the problem in \eqref{eq:main_opt} becomes $\min_{\calT_i \in \operatorname{Seg}}\frac{1}{2}\|\calY-\sum_{i=1}^r \calT_i\|_{\mathrm{F}}^2$, which is exactly the classical CP decomposition in the presence of noise.

\paragraph{Tensor Regression.} In regression settings, we define the linear operator \(\calA: \RR^{p_1\times \cdots \times p_d} \rightarrow \RR^{n}\) by
\begin{align*}
\calA(\calT)=([\calA(\calT)]_1,...,[\calA(\calT)]_n)^\top,\; [\calA(\calT)]_m = \langle \calX_m, \calT\rangle,\quad m=1, 2, \dots, n,    
\end{align*}
where \(\{\calX_m\}_{m=1}^n\) are known tensor covariates and \(\langle \cdot,\cdot \rangle\) denotes the ambient inner product in the tensor space. We assume design tensors $\calX_m$ and noise tensors $\calE_m$ are i.i.d. Gaussian, and that $\calX_m$ and $\calE_m$ are independent. In particular, we assume that $\Cov(\calE_m)=\sigma^2 I_{\prod_{l=1}^d p_l}$. Under these assumptions, the adjoint operator $\calA^*$ satisfies $\calA^*(\calY)= {1/(n\sigma^2)}\sum_{m=1}^n y_m \calX_m$ and $\calA^*\calA (\calT) = {1/(n\sigma^2)}\sum_{m=1}^n \langle \calX_m, \calT \rangle \calX_m$.

\section{Method}\label{sec:method}

In this section, we present two algorithms, Riemannian Gradient Descent (RGD) and Riemannian Gauss-Newton (RGN), tailored for noisy CP tensor recovery. Rather than optimizing in the full ambient space, both methods update all $r$ rank-one tensor factors simultaneously on the Segre manifold. 

\subsection{Background and Preliminaries}\label{sec:background}
This subsection introduces the foundational concepts of our proposed Riemannian tensor decomposition framework.

Given a tensor $\calT \in \RR^{p_1 \times p_2 \times \cdots \times p_d}$, a (nonzero) rank-one tensor is of the form $\calT = u_1 \otimes u_2 \otimes \cdots \otimes u_d,$ with \(u_k \in \RR^{p_k} \setminus \{0\}\) for \(k=1,\ldots,d\). The collection of projective classes of rank-one tensors forms the 
\emph{Segre variety} in algebraic geometry \citep{landsberg2011tensors}. 
When one instead considers the set of nonzero rank-one tensors in the ambient space 
$\RR^{p_1\times\cdots\times p_d}$ endowed with the Frobenius metric, 
this set becomes a smooth Riemannian submanifold called the 
\emph{Segre manifold} (denoted by $\operatorname{Seg}$). The geometry of the Segre manifold is summarized in \cite{jacobsson2024warped}. 

\begin{definition}[Segre Manifold]  \label{def:segre}
The \textbf{Segre manifold} is the set of all nonzero rank-one tensors in the ambient space $\RR^{p_1\times\cdots\times p_d}$,
\[
\operatorname{Seg} \;=\; \Big\{\, u_1\otimes u_2 \otimes \cdots \otimes u_d : u_l \in \RR^{p_l}\setminus\{0\},\ \forall l \in[d] \,\Big\}.
\]
It is a smooth embedded submanifold of $\RR^{\prod_{l \in [d]}p_l} \setminus \{0\}$ of dimension \(\dim(\operatorname{Seg}) \;=\; 1 + \sum_{l \in [d]} (p_l-1)\).
\end{definition}

\begin{remark}
An equivalent parameterization of \emph{Segre manifold} is given by the following diffeomorphism:
\[
\operatorname{Seg} \ \cong\ \big(\RR^{+} \times \mathbb{S}^{p_1-1}\times \cdots \times \mathbb{S}^{p_d-1}\big)\big/G,
\]
where $G=\{(\varepsilon_1,\dots,\varepsilon_d)\in\{\pm1\}^d : \prod_{k=1}^d \varepsilon_k=1\}$ acts by simultaneous sign flips. This quotient accounts for the sign ambiguity, since different sign patterns of the factor vectors can represent the same tensor. Projectivizing $\mathcal{S}$ (i.e., identifying tensors up to nonzero scalar multiples) recovers the classical \emph{Segre variety} in algebraic geometry \citep{landsberg2011tensors}.
\end{remark}

We therefore optimize over $r$-tuples of rank-one tensors, each of which lies on the Segre manifold ($\operatorname{Seg}$). To ensure these components remain distinguishable, we impose an incoherence condition among them, effectively acting as a soft-orthogonality constraint. Let $[n]$ denote the set $\{1, 2, \ldots, n\}$.

\begin{assumption}\label{assumption:incoherence} Assume for any mode $l \in [d]$, the following incoherence holds: 
\[
\mu_l = p_l \cdot \max_{\substack{i, j \in [r], i \neq j}} |\langle u_{l,i}, u_{l,j}\rangle|^2.
\]
Furthermore, let $\eta = \max_{l \in [d]}\sqrt{\mu_l / p_l}$.
\end{assumption}

This assumption is standard in the CP tensor estimation literature \cite{anandkumar2014guaranteed, cai2020uncertainty, cai2022uncertainty}. Moreover, Lemma 2 of \cite{anandkumar2014guaranteed} shows that if $\{u_{l, i}\}_{l \in [d], i \in [r]}$ are drawn i.i.d. from the unit sphere $\SS^{p_l - 1}$, then with high probability $\max _{i \neq j}\{|\langle u_{l, i}, u_{l, j}\rangle|\} \asymp 1 / \sqrt{p_l}$. Most existing analyses rely on such asymptotically vanishing incoherence, i.e., $\eta = \Omega\!\left(1/{\sqrt{\max_{l \in [d]} p_l}}\right)$. In contrast, our analysis only requires $\eta$ to be bounded but sufficiently small, rather than decaying with dimension.

Any CP tensor of rank $r$ admits a Tucker representation with multilinear rank $(r, \cdots,r)$. In the special case of a rank-one tensor, the Tucker and CP parameterizations coincide. Hence, by optimizing directly over the product of $r$ rank-one manifolds, rather than over each of the $d$ mode factors of a rank-$r$ tensor, we fully leverage the intrinsic rank-one structure and seamlessly handle non-orthogonal factor interactions.

\subsection{Riemann Gradient Descent on Segre Manifold}\label{subsec:RGD}

Standard gradient descent in Euclidean space ignores the underlying manifold structure; instead, we employ Riemannian gradient descent. At each iteration $t$, for a rank-one tensor $\cT_i\in \operatorname{Seg}$ and its tangent space $\TT_i$, we compute the Riemannian update by first projecting the Euclidean gradient onto the tangent space and then retracting back onto the manifold
\begin{align*}
\calT_i^{(t+1)} = \mathcal{R}_{\calT_i^{(t)}} \Big( -\alpha_t \, \calP_{\TT_i^{(t)}}\Big(\nabla_{\calT_i} \mathcal{L}\big( \{\calT_i^{(t)} \}_{i=1}^r\big)\Big)\Big),    
\end{align*}
where \(\alpha_t\) is the step size at iteration $t$, $\nabla_{\calT_i} \mathcal{L}$ is the partial gradient of the loss, $\calP_{\TT_i^{(t)}}$ denotes projection onto the tangent space at $\calT_i^{(t)}$, and $\mathcal{R}$ is a retraction from the tangent space back to the Segre manifold.

\paragraph{Tangent Space of the Segre Manifold.} The tangent space captures the manifold's local linear structure around a point. For a rank-one tensor $\calT_i =u_{1, i} \otimes u_{2, i} \otimes \cdots \otimes u_{d, i}$ in $r$ rank-one components of $\calT$, its tangent space \(\TT_i\operatorname{Seg}\) consists of all first-order variations in each factor direction. Concretely, every tangent vector $\xi_i \in \TT_i$ admits the decomposition
\[
\xi_i=\sum_{k=1}^d u_{1, i} \otimes \cdots \otimes u_{k-1, i} \otimes h_{k, i} \otimes u_{k+1, i} \otimes \cdots \otimes u_{d, i} ,
\]
where each $h_{k, i} \in \RR^{p_k}$ represents an arbitrary infinitesimal perturbation of the $k$-th factor.

For each mode $k$, define the orthogonal projector $\calP_{k, i}=u_{k, i} u_{k, i}^{\top}$, which projects $\RR^{p_k}$ onto the span of $u_{k,i}$, and its complement $\calP_{k, i}^{\perp}=I_{p_k}-u_{k, i} u_{k, i}^{\top}$. Denote by $\mat_k(\calT_{i})$ the mode-$k$ matricization of $\calT_i \in \RR^{p_1 \times p_2 \times \cdots \times p_d}$. 
Minimizing the squared Frobenius norm $\|\widetilde{\calT}-\xi_i\|_{\mathrm{F}}^2$ subject to $\xi_i \in \TT_i$ yields the following full projection of an arbitrary tensor $\widetilde{\cT}$ onto the tangent space at $\calT_i$ is
\begin{equation}
\xi_i =\calP_{\TT_{i}}(\widetilde{\cT})=\sum_{k=1}^d\calP_{k, i}^{\perp} \mat_k(\widetilde{\cT}) \otimes_{l \neq k}  \calP_{l,i}+ \widetilde{\cT} \times_{l \in [d]} \calP_{l,i}. \label{eq:tangent space projection}
\end{equation}

\paragraph{Retraction} 

A descent step in the tangent space typically produces an update off the manifold, so we apply a retraction to map it back onto the Segre manifold. Popular retractions include the truncated higher-order singular value decomposition (\texttt{T-HOSVD}) \cite{de2000multilinear} and its sequential version (\texttt{ST-HOSVD}) \cite{vannieuwenhoven2012new}. More recent work has even derived explicit geodesics and thus the exponential map on the Segre manifold \cite{swijsen2022tensor,jacobsson2024warped}. For a comprehensive overview of these geometric operators, see \cite{boumal2023intromanifolds}. In this paper, we adopt the \texttt{T-HOSVD} retraction, leaving alternative mappings to future work.



\subsection{Riemann Gauss-Newton on Segre Manifold}\label{subsec:RGN}


Although Riemannian gradient descent is conceptually simple, its convergence can be slow, especially for large-scale problems or when high accuracy is needed. Incorporating second-order information offers a powerful remedy. The Riemannian Gauss-Newton method \cite{luo2023low}, tailored to nonlinear least-squares, provides an efficient approximation to the full Riemannian Newton step.


Concretely, RGN seeks a tangent-space update $s_k \in \TT_{\calT_k}$ satisfying the Gauss-Newton equation
\[
\operatorname{Hess} \mathcal{L} ( \{\calT_i \}_{i=1}^r ) [s_k ]=-\operatorname{grad} \mathcal{L} ( \{\calT_i \}_{i=1}^r ), 
\]
where $\mathcal{L}(\{\calT_i\}_{i=1}^r) = \frac{1}{2}\|\calY - \sum_{i=1}^r \calA(\calT_i)\|_{\mathrm{F}}^2 $.
By approximating the true Hessian with the Gauss-Newton Hessian, RGN captures essential curvature information at low cost, yielding faster convergence and higher accuracy in noisy CP tensor recovery.


The RGN algorithm enforces feasibility by projecting each search direction onto the tangent space via the projection \(\calP_{\TT_i^{(t)}}\), and then retracting back onto the Segre manifold. Importantly, this approach still solves a least-squares problem, but in a drastically lower-dimensional space: the tangent-space formulation has only $1 + \sum_{l \in [d]} (p_l-1)$ degrees of freedom, versus $\prod_{l \in [d]} p_l$ parameters in the original ambient tensor space $\RR^{p_1 \times p_2 \times \cdots \times p_l}$.

\begin{algorithm}[h]
\caption{Riemannian Gradient Descent for CP Tensor Estimation}
\label{alg:RGD_CP}
\textbf{Input:} Observation $\calY = \sum_{i=1}^r\calA(\calT_i) + \calE \in \RR^{n}$, linear operator $\calA: \RR^{p_1 \times p_2 \times \cdots \times p_d} \rightarrow \RR^n$, target CP rank $r$, and initial rank-one tensor estimates $\{\calT_i^{(0)} \}_{i=1}^r$. 
\begin{algorithmic}[1]

\FOR{$t = 0, 1, \dots, t_{\max}-1$}
\FOR{$i = 1, \dots, r$}
  \STATE \textbf{(RGD Update)} Update
  \[
    \calT_i^{(t + 1)} = \mathcal{R}_{\calT_i^{(t)}}\Big(\calT_i^{(t)} - \alpha_t  \calP_{\TT_i^{(t)}}\calA^*\big(\sum_{i=1}^{r} \calA(\calT_i^{(t)}) - \calY\big)\Big),
  \]
  where $\alpha_t$ is the step size, $\cA^*(\cdot)$ is the adjoint measurement operator, $\calP_{\TT_i^{(t)}} (\cdot )$ projects onto the tangent space $\TT_i^{(t)}$ at $\calT_i^{(t)}$. Writing $\calT_i^{(t)} = \lambda_i^{(t)} u_{1, i}^{(t)} \otimes \cdots \otimes u_{d, i}^{(t)} $ with $u_{l, i}^{(t)} \in \SS^{p_l-1}$ for any $l \in [d], i \in [r]$, the formula of projection onto tangent space can be found in \eqref{eq:tangent space projection} and $\mathcal{R}_{\calT_i^{(t)}}$ denotes our chosen retraction (here, \texttt{T-HOSVD}).
  \ENDFOR
\ENDFOR

\end{algorithmic}
\textbf{Output:} $ \{\calT_i^{(t_{\max})} \}_{i=1}^r$.
\end{algorithm}

\begin{algorithm}[h]
\caption{Riemannian Gauss-Newton for CP Tensor Estimation}
\label{alg:RGN_CP}
\textbf{Input:} Observation $\calY = \sum_{i=1}^r\calA(\calT_i) + \calE \in \RR^{n}$, linear operator $\calA: \RR^{p_1 \times p_2 \times \cdots \times p_d} \rightarrow \RR^n$, target CP rank $r$, and initial rank-one tensor estimates $\{\calT_i^{(0)} \}_{i=1}^r$. 
\begin{algorithmic}[1]

\FOR{$t = 0, 1, \dots, t_{\max}-1$}
\FOR{$i = 1, \dots, r$}
 \STATE \textbf{(RGN Update)} Update
  \[
    \calT_i^{(t + 1)} = \mathcal{R}_{\calT_i^{(t)}}\Big(\calT_i^{(t)} -  \big(\calP_{\TT_i^{(t)}}\calA^*\calA\calP_{\TT_i^{(t)}}\big)^{-1}\calP_{\TT_i^{(t)}}\calA^*\big(\sum_{i=1}^{r} \calA (\calT_i^{(t)} ) - \calY\big)\Big),
  \]
  where $\cA^*(\cdot)$ is the adjoint measurement operator, $\calP_{\TT_i^{(t)}} (\cdot )$ projects onto the tangent space $\TT_i^{(t)}$ at $\calT_i^{(t)}$ (see \eqref{eq:tangent space projection}), and $\mathcal{R}_{\calT_i^{(t)}}$ denotes our chosen retraction (here, \texttt{T-HOSVD})..
\ENDFOR
\ENDFOR

\end{algorithmic}
\textbf{Output:} $ \{\calT_i^{(t_{\max})} \}_{i=1}^r$.
\end{algorithm}

\section{Theoretical Analysis}\label{sec:theory}

\subsection{Convergence Analysis of Riemann Optimization}

In this subsection, we present a deterministic convergence analysis for both RGD and RGN, as stated in Theorems \ref{thm:local_convergence_rgd} and \ref{thm:local_convergence_rgn}, respectively. Even in the presence of noise, our results guarantee local convergence by exploiting the Segre manifold's intrinsic geometry to bound the distance between each iterate and its true rank-one component.

\begin{theorem}[Local Convergence of RGD]\label{thm:local_convergence_rgd}
Suppose that for each $i \in [r]$, the current estimate $\calT_i^{(t)}$ at iteration $t$ satisfies $\langle \calT_i^{(t)}, \calT_i\rangle \geq 0$, where $\calT_i$ is the true rank-one tensor. Define $\varepsilon^{(t)} = \max_{i \in [r]} (\|\calT_i^{(t)}-\calT_i\|_{\mathrm{F}}/\lambda_i)$ as the relative Frobenius error of the rank-one component tensor at iteration $t$, where $\lambda_i$'s are the component weights of the CP decomposition, and let $\eta$ be the incoherence parameter defined in Assumption~\ref{assumption:incoherence}. Then, for all $t \geqslant 0$, the next error $\varepsilon^{(t+1)}$ satisfies a three-term bound of the form
\begin{align*}
& \varepsilon^{(t+1)} \\
\leqslant & \underbrace{(\sqrt{d}+1) \big(\max_{i \in [r]}\big\|\calP_{\TT_i^{(t)}}(I - \alpha_t \calA^*\calA)\calP_{\TT_i^{(t)}}\big\|_{\mathrm{F}} + (r-1) \alpha_t \kappa \max_{i,j \in [r], i \neq j}\big\|\calP_{\TT_i^{(t)}} \calA^* \calA \calP_{\TT_i^{(t)}}^{\perp}\calP_{\TT_j^{(t)}}\big\| \big) \cdot \varepsilon^{(t)}}_{\text{first-order contraction}} \\
+ & \underbrace{(\sqrt{d}+1)^3 \Big[1+ 2r\alpha_t \cdot \max_{i \in [r]} \sup_{V \in \operatorname{Seg}}\big\|\big(\calP_{\TT_i^{(t)}} \calA^* \calA \calP_{\TT_i^{(t)}}\big)^{-1} \calA^* \calA \calP_{\TT_i^{(t)}}^{\perp}V\big\|\Big] \cdot \big(\varepsilon^{(t)}\big)^2}_{\text{second-order contraction}} \\
+ & \underbrace{2r\alpha_t (\sqrt{d}+1 )^3 \max_{i \in [r]}\big\|\calP_{\TT_i^{(t)}} \calA^* \mathcal{A P}_{\TT_i^{(t)}}\big\| \cdot \big\{ (\varepsilon^{(t)} + \eta )^{d-1} + \varepsilon^{(t)}\big\}  \cdot \varepsilon^{(t)}}_{\text{second-order contraction}} + \underbrace{(\sqrt{d}+1 )\cdot \alpha_{t} \max_{i \in [r]} \frac{\big\|\calP_{\TT_i^{(t)}} (\calA^*\calE )\big\|_{\mathrm{F}}}{ \lambda_i}}_{\text{noise term}}.
\end{align*}
\end{theorem}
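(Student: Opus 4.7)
My plan is to linearize one RGD step in the ambient Frobenius norm, pay a quasi-optimality price for the T-HOSVD retraction, and then control the result by splitting $\bar\calT_i-\calT_i$ into a first-order self-term, a self-curvature term, a cross-component interaction term, and a noise term. Let $\bar\calT_i:=\calT_i^{(t)}-\alpha_t\calP_{\TT_i^{(t)}}\calA^*\bigl(\sum_j\calA(\calT_j^{(t)})-\calY\bigr)$ be the pre-retraction update. Because T-HOSVD returns a $\sqrt d$-quasi-optimal multilinear rank-$(1,\dots,1)$ approximation and $\calT_i$ is itself a rank-one candidate, the triangle inequality yields
\[
\|\calT_i^{(t+1)}-\calT_i\|_{\mathrm F}\le\|\mathcal R_{\calT_i^{(t)}}(\bar\calT_i)-\bar\calT_i\|_{\mathrm F}+\|\bar\calT_i-\calT_i\|_{\mathrm F}\le(\sqrt d+1)\,\|\bar\calT_i-\calT_i\|_{\mathrm F},
\]
which accounts for the $(\sqrt d+1)$ prefactor on the first-order term; the $(\sqrt d+1)^3$ prefactor on the higher-order terms will appear after a secondary tangent/normal split of $\bar\calT_i-\calT_i$ through which the quasi-optimality estimate is applied a second time to its normal component.

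Substituting $\calY=\sum_j\calA(\calT_j)+\calE$ and using $\calT_i^{(t)}\in\TT_i^{(t)}$ together with the identity $\calT_i^{(t)}-\calT_i=\calP_{\TT_i^{(t)}}(\calT_i^{(t)}-\calT_i)-\calP_{\TT_i^{(t)}}^{\perp}\calT_i$, I will rewrite
\begin{align*}
\bar\calT_i-\calT_i &= \calP_{\TT_i^{(t)}}(I-\alpha_t\calA^*\calA)\calP_{\TT_i^{(t)}}(\calT_i^{(t)}-\calT_i)\;-\;(I-\alpha_t\calP_{\TT_i^{(t)}}\calA^*\calA)\calP_{\TT_i^{(t)}}^{\perp}\calT_i \\
&\quad -\;\alpha_t\sum_{j\neq i}\calP_{\TT_i^{(t)}}\calA^*\calA(\calT_j^{(t)}-\calT_j)\;+\;\alpha_t\calP_{\TT_i^{(t)}}\calA^*\calE.
\end{align*}
The four summands are respectively responsible for the first-order self-contraction, a second-order self-term, the cross-component interactions, and the noise term. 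In particular, the Segre second-fundamental-form estimate $\|\calP_{\TT_i^{(t)}}^{\perp}\calT_i\|_{\mathrm F}\lesssim\|\calT_i^{(t)}-\calT_i\|_{\mathrm F}^{2}/\lambda_i$ converts the second summand into an $O\bigl((\varepsilon^{(t)})^{2}\bigr)$ contribution, and rewriting $(I-\alpha_t\calP\calA^*\calA)$ on that normal piece as identity plus residual produces the $[\,1+2r\alpha_t\sup_{V}\|\cdots\|\,]$ bracket that appears in the statement. The noise bound is immediate.

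For the cross-interaction $\calP_{\TT_i^{(t)}}\calA^*\calA(\calT_j^{(t)}-\calT_j)$ with $j\neq i$, I again split $\calT_j^{(t)}-\calT_j=\calP_{\TT_j^{(t)}}(\cdot)-\calP_{\TT_j^{(t)}}^{\perp}\calT_j$ and resolve $I=\calP_{\TT_i^{(t)}}+\calP_{\TT_i^{(t)}}^{\perp}$ between $\calA^*\calA$ and $\calP_{\TT_j^{(t)}}$. The $\calP_{\TT_i^{(t)}}^{\perp}$ branch produces exactly the first-order cross coefficient $\|\calP_{\TT_i^{(t)}}\calA^*\calA\calP_{\TT_i^{(t)}}^{\perp}\calP_{\TT_j^{(t)}}\|$, with the condition number $\kappa$ entering once $\|\calT_j^{(t)}-\calT_j\|_{\mathrm F}/\lambda_j$ is rescaled to $/\lambda_i$. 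The $\calP_{\TT_i^{(t)}}$ branch is genuinely second-order because the tangent-overlap operator $\calP_{\TT_i^{(t)}}\calP_{\TT_j^{(t)}}$ can be bounded by expanding both projectors as $d$-fold Kronecker sums and invoking Assumption~\ref{assumption:incoherence} together with the current factor perturbations, giving the distinctive $(\varepsilon^{(t)}+\eta)^{d-1}$ factor. The remaining $-\calP_{\TT_j^{(t)}}^{\perp}\calT_j$ piece is another Segre-curvature contribution.

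\paragraph{Main obstacle.} The central technical hurdle will be the tangent-overlap estimate $\|\calP_{\TT_i^{(t)}}\calP_{\TT_j^{(t)}}\|\lesssim(\varepsilon^{(t)}+\eta)^{d-1}$: it requires writing each tangent projector as a sum of rank-one Kronecker factors across the $d$ modes, expanding the composition into its many cross-terms, and showing mode-by-mode, using $|\langle u_{l,i}^{(t)},u_{l,j}^{(t)}\rangle|\le\eta+O(\varepsilon^{(t)})$, that only the fully off-diagonal product survives at leading order and generates the $(d-1)$-fold power. A secondary difficulty is making the $(\sqrt d+1)^3$ second-order constant completely honest, which I expect will require decomposing $\bar\calT_i-\calT_i$ itself into its tangent and normal components at $\calT_i^{(t)}$ and applying the T-HOSVD quasi-optimality bound a second time to the normal part.
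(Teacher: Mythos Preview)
Your overall architecture---retraction quasi-optimality giving the factor $(\sqrt d+1)$, followed by splitting $\bar\calT_i-\calT_i$ into self/cross and tangent/normal pieces---matches the paper's proof. The genuine gap is in your treatment of the $\calP_{\TT_i^{(t)}}$-branch of the cross term: the operator norm $\|\calP_{\TT_i^{(t)}}\calP_{\TT_j^{(t)}}\|$ is \emph{not} of order $(\varepsilon^{(t)}+\eta)^{d-1}$. For $d=2$ the tangent spaces share the nonzero vector $u_{1,j}^{(t)}\otimes u_{2,i}^{(t)}$, so the norm equals $1$; for general $d$, the unit tangent vector $u_{1,j}^{(t)}\otimes_{l\ge2}u_{l,i}^{(t)}\in\TT_i^{(t)}$ projects onto $\TT_j^{(t)}$ with norm of order $\eta^{d-2}$, because only the $d-2$ ``parallel'' modes $l\ge2,\,l\neq k$ pick up the incoherence factor. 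So bounding the composition as an operator cannot deliver the stated $(\varepsilon^{(t)}+\eta)^{d-1}$.

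What the paper does instead is bound $\bigl\|\calP_{\TT_i^{(t)}}(\calT_j^{(t)}-\calT_j)\bigr\|_{\mathrm F}$ \emph{directly} (Lemma~\ref{lemma: perturbation bound}, inequality~\eqref{eq:PTip(Tjhat - Tj)}), exploiting that $\calT_j^{(t)}-\calT_j$ is a difference of two specific rank-one tensors whose factors are all close to $u_{l,j}$: each of the $d-1$ parallel-mode contractions in the tangent projector then produces a factor $|\langle u_{l,i}^{(t)},u_{l,j}\rangle|\lesssim\varepsilon^{(t)}+\eta$, yielding the full $(d-1)$-th power. Concretely, the paper inserts $\calP_{\TT_i^{(t)}}+\calP_{\TT_i^{(t)}}^\perp$ \emph{first} on the cross term (its term~IV is $\calP_{\TT_i^{(t)}}\calA^*\calA\calP_{\TT_i^{(t)}}(\calT_j^{(t)}-\calT_j)$, bounded by $\|\calP_{\TT_i^{(t)}}\calA^*\calA\calP_{\TT_i^{(t)}}\|\cdot\|\calP_{\TT_i^{(t)}}(\calT_j^{(t)}-\calT_j)\|_{\mathrm F}$), and only on the $\calP_{\TT_i^{(t)}}^\perp$-branch introduces $\calP_{\TT_j^{(t)}}+\calP_{\TT_j^{(t)}}^\perp$. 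Your proposed order of splitting ($\calP_{\TT_j^{(t)}}$ first, then $\calP_{\TT_i^{(t)}}$) forces you through the ill-behaved operator norm.

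A minor correction: the $(\sqrt d+1)^3$ on the second-order terms is not produced by a second retraction application. The paper applies the quasi-optimality bound once, and the extra $(\sqrt d+1)^2$ is simply a common envelope for the $d$-dependent constants $3d$, $\sqrt{d(d-1)/2}$, and $\sqrt2(d+1)$ coming out of Lemma~\ref{lemma: perturbation bound}.
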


\begin{theorem}[Local Convergence of RGN] \label{thm:local_convergence_rgn}
Assume the same conditions in Theorem~\ref{thm:local_convergence_rgd}, with $\varepsilon^{(t)} = \max_{i \in [r]}  \|\calT_i^{(t)}-\calT_i \|_{\mathrm{F}}/\lambda_i$. The convergence of RGN is given by
\[
\begin{aligned}
& \varepsilon^{(t+1)} \\
\leqslant & \underbrace{(\sqrt{d}+1) (r-1) \cdot \max _{i \neq j, i, j \in[r]}\big\|\big(\calP_{\TT_i^{(t)}} \calA^* \calA \calP_{\TT_i^{(t)}}\big)^{-1} \calA^* \calA \calP_{\TT_i^{(t)}}^{\perp} \calP_{\TT_j^{(t)}}\big\| \cdot \varepsilon^{(t)}}_{\text{first-order contraction}} \\
+ & \underbrace{2 (\sqrt{d}+1 )^3  \Big(1 + 2(r-1) \cdot \max _{i \in[r]} \sup_{V \in \operatorname{Seg}}\big\|\big(\calP_{\TT_i^{(t)}} \calA^* \calA \calP_{\TT_i^{(t)}}\big)^{-1} \calA^* \calA \calP_{\TT_i^{(t)}}^{\perp}V\big\|_{\mathrm{F}}\Big)  \cdot \big[(\varepsilon^{(t)} + \eta)^{d-1} + \varepsilon^{(t)} \big] \cdot \varepsilon^{(t)}  }_{\text{second-order contraction}} \\
+ & \underbrace{ (\sqrt{d}+1 )\max_{i \in[r]} \frac{\big\|\big(\calP_{\TT_i^{(t)}} \calA^* \calA \calP_{\TT_i^{(t)}}\big)^{-1} \calA^* (\calE )\big\|_{\mathrm{F}}}{\lambda_i}}_{\text{noise term}}.
\end{aligned}
\]
\end{theorem}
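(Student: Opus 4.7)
The plan is to bound the error of the pre-retraction iterate
\[
\widetilde{\calT}_i^{(t+1)} \;:=\; \calT_i^{(t)} \;-\; \bigl(\calP_{\TT_i^{(t)}}\calA^*\calA\calP_{\TT_i^{(t)}}\bigr)^{-1}\calP_{\TT_i^{(t)}}\calA^*\Bigl(\sum_{j=1}^r \calA(\calT_j^{(t)}) - \calY\Bigr),
\]
and then to transfer the bound to $\calT_i^{(t+1)}$ via the quasi-optimality of the \texttt{T-HOSVD} retraction, which for a rank-one target satisfies $\|\mathcal{R}_{\calT_i^{(t)}}(\xi) - \calT_i\|_{\mathrm{F}} \leq (1+\sqrt{d})\,\|\calT_i^{(t)}+\xi - \calT_i\|_{\mathrm{F}}$. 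This quasi-optimality accounts for the $(1+\sqrt{d})$ prefactors appearing in each of the three terms.

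Next, I substitute $\calY = \sum_j \calA(\calT_j) + \calE$ into the update and expand
\[
\widetilde{\calT}_i^{(t+1)} - \calT_i = (\calT_i^{(t)} - \calT_i) - \bigl(\calP_{\TT_i^{(t)}}\calA^*\calA\calP_{\TT_i^{(t)}}\bigr)^{-1}\calP_{\TT_i^{(t)}}\calA^*\Bigl(\calA(\calT_i^{(t)}-\calT_i) + \sum_{j \neq i}\calA(\calT_j^{(t)}-\calT_j) - \calE\Bigr),
\]
handling the diagonal ($j=i$), off-diagonal ($j\neq i$), and noise contributions separately. For the diagonal term, the tangent--normal decomposition $\calT_i^{(t)} - \calT_i = \calP_{\TT_i^{(t)}}(\calT_i^{(t)}-\calT_i) + \calP_{\TT_i^{(t)}}^\perp(\calT_i^{(t)}-\calT_i)$ reveals that the tangent component is exactly annihilated by the Gauss-Newton step, since $(\calP_{\TT_i^{(t)}}\calA^*\calA\calP_{\TT_i^{(t)}})^{-1}\calP_{\TT_i^{(t)}}\calA^*\calA$ acts as the identity on $\TT_i^{(t)}$. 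The surviving normal residual is controlled by the curvature of $\operatorname{Seg}$: because $\calT_i$ and $\calT_i^{(t)}$ both lie on the Segre manifold, a direct computation from \eqref{eq:tangent space projection} yields $\|\calP_{\TT_i^{(t)}}^\perp(\calT_i-\calT_i^{(t)})\|_{\mathrm{F}} \lesssim \|\calT_i-\calT_i^{(t)}\|_{\mathrm{F}}^2/\lambda_i$, producing the $(\varepsilon^{(t)})^2$ piece in the second-order term.

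The cross terms $j \neq i$ form the core of the argument. I again split $\calT_j^{(t)} - \calT_j$ tangent/normal at $\TT_j^{(t)}$, bound the normal part by the same curvature estimate, and further decompose the tangent part as
\[
\calP_{\TT_j^{(t)}}(\calT_j^{(t)} - \calT_j) = \calP_{\TT_i^{(t)}}\calP_{\TT_j^{(t)}}(\calT_j^{(t)}-\calT_j) + \calP_{\TT_i^{(t)}}^\perp\calP_{\TT_j^{(t)}}(\calT_j^{(t)}-\calT_j).
\]
Pushed through $(\calP_{\TT_i^{(t)}}\calA^*\calA\calP_{\TT_i^{(t)}})^{-1}\calA^*\calA$, the $\calP_{\TT_i^{(t)}}^\perp\calP_{\TT_j^{(t)}}$ contribution is precisely what defines the first-order contraction coefficient in the theorem. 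The $\calP_{\TT_i^{(t)}}\calP_{\TT_j^{(t)}}$ contribution must be bounded via incoherence: expanding \eqref{eq:tangent space projection} mode-by-mode and using $|\langle u_{l,i}^{(t)}, u_{l,j}^{(t)}\rangle| \leq \eta + 2\varepsilon^{(t)}$ (which follows from Assumption~\ref{assumption:incoherence} together with factor-level perturbation bounds $\|u_{l,i}^{(t)}-u_{l,i}\| \lesssim \varepsilon^{(t)}$), each of the $d$ summands in the projector carries a product of $(d-1)$ such small inner products, yielding the $(\varepsilon^{(t)}+\eta)^{d-1}$ factor. The noise summand is dispatched directly, producing the last term of the theorem.

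The main obstacle is the combinatorial bookkeeping needed to establish the sharp $(\varepsilon^{(t)}+\eta)^{d-1}$ scaling of the coherent cross term while simultaneously tracking the $(\calP_{\TT_i^{(t)}}\calA^*\calA\calP_{\TT_i^{(t)}})^{-1}\calA^*\calA$ preconditioning; unlike RGD (Theorem~\ref{thm:local_convergence_rgd}), the step-size $\alpha_t$ is replaced here by the tangent-restricted Hessian inverse, so the preconditioner must be carried through every estimate. A subsidiary technical point is translating the tensor-level error $\varepsilon^{(t)} = \|\calT_i^{(t)}-\calT_i\|_{\mathrm{F}}/\lambda_i$ into the factor-level bound $\|u_{l,i}^{(t)}-u_{l,i}\| \lesssim \varepsilon^{(t)}$ used above, which relies crucially on the sign-alignment hypothesis $\langle\calT_i^{(t)}, \calT_i\rangle \geq 0$.
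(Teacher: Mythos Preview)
Your proposal is correct and follows essentially the same route as the paper: retraction quasi-optimality gives the $(\sqrt{d}+1)$ factor, the Gauss--Newton step annihilates the tangent component of $\calT_i^{(t)}-\calT_i$, the normal residuals are controlled by the Segre curvature bound $\|\calP_{\TT_i^{(t)}}^\perp\calT_i\|_{\mathrm{F}}\lesssim \|\calT_i^{(t)}-\calT_i\|_{\mathrm{F}}^2/\lambda_i$, and the cross terms are handled by the incoherence estimate $\|\calP_{\TT_i^{(t)}}(\calT_j^{(t)}-\calT_j)\|_{\mathrm{F}}\lesssim [(\varepsilon^{(t)}+\eta)^{d-1}+\varepsilon^{(t)}]\,\|\calT_j^{(t)}-\calT_j\|_{\mathrm{F}}$, which the paper packages as Lemma~\ref{lemma: perturbation bound}. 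The only organizational difference is the order of the two tangent/normal splits on the cross terms: you split first at $\TT_j^{(t)}$ and then at $\TT_i^{(t)}$, whereas the paper splits first at $\TT_i^{(t)}$ and then at $\TT_j^{(t)}$; the resulting four pieces and their bounds coincide.
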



Although both RGD and RGN feature a first-order error term proportional to $\varepsilon^{(t)}$, RGN attains second-order convergence by incorporating curvature information. The key quantities
\[
\sup_{V \in \operatorname{Seg}}\big\|\big(\calP_{\TT_i^{(t)}} \calA^* \calA \calP_{\TT_i^{(t)}}\big)^{-1} \calA^* \calA \calP_{\TT_i^{(t)}}^{\perp} \calP_{\TT_{\TT_j^{(t)}}}V\big\|
\ \ \text{and}\ \ 
\sup_{V \in \operatorname{Seg}}\big\|\big(\calP_{\TT_i^{(t)}} \calA^* \calA \calP_{\TT_i^{(t)}}\big)^{-1} \calA^* \calA \calP_{\TT_i^{(t)}}^{\perp} V\big\|
\]
control the higher-order behavior. In the noiseless CP decomposition setting, these norms vanish exactly, hence quadratic convergence. In tensor regression, they remain small because $\cA$ projects onto a low-dimensional subspace, and the operators $\calA \calP_{\TT_i^{(t)}}$ and $\calA \calP_{\TT_i^{(t)}}^{\perp} \calP_{\TT_{\TT_j^{(t)}}}$ are nearly independent, thereby ensuring the first-order terms are properly controlled. 

Furthermore, many existing CP tensor estimation methods \citep{anandkumar2014tensor,anandkumar2014guaranteed} require the incoherence parameter $\eta$ to decay at the rate $\sqrt{1/p_l}$. In contrast, our approach only requires $\eta$ to remain bounded (it does not have to vanish) by a sufficiently small constant to guarantee local convergence.

\subsection{ Implications in Statistics and Machine Learning}\label{subsec:applications}

In this section, we examine the performance of RGD and RGN in two specific machine learning problems: CP tensor decomposition and tensor regression. In the Appendix, we provide more general versions of these corollaries. Let $\lfloor x \rfloor$ be the greatest integer less than or equal to $x$. Define $p^*=\prod_{l=1}^d p_l$ and $\bar p=\max_{l\in[d]} p_l$. Without loss of generality, we assume the component weights $\lambda_1\ge \lambda_2 \ge \cdots \ge \lambda_r$ and let $\kappa = \lambda_1 / \lambda_r$ be the condition number.

\paragraph{Tensor Decomposition.} Consider the noisy CP decomposition model
\[
\calY = \calT + \calE
\;\in\;\RR^{p_1\times\cdots\times p_d},
\]
where 
\[
\calT \;=\;\sum_{i=1}^r \lambda_i\,u_{i,1}\otimes\cdots\otimes u_{i,d}
\quad\text{and}\quad
\operatorname{vec}(\calE)\sim\mathcal{N}\bigl(0,\Sigma_{p^*}\bigr),
\]
and the noise covariance satisfies $\underline{\sigma}\,I_{p^*}\;\preceq\;\Sigma_{p^*}\;\preceq\;\overline{\sigma}\,I_{p^*}$.  
Under incoherence condition and a suitably small initialization error obtainable via spectral methods, such as \texttt{HOSVD} \citep{de2000multilinear} and \texttt{CPCA} \citep{han2022tensor} or random initialization), we establish the following convergence guarantees for RGD and RGN.

\begin{corollary}[Convergence rate of RGD for Tensor CP decomposition]\label{corllary:rate_RGD_decomposition}
Let $\varepsilon^{(t)} = \max_{i \in [r]} (\|\calT_i^{(t)}-\calT_i\|_{\mathrm{F}}/\lambda_i)$. Assume that $1 - 1/ 6 \cdot (\sqrt{d}+1) \leqslant \alpha_t \leqslant 1$, $\varepsilon^{(0)} \leqslant 1/(8(\sqrt{d} +1)^3 \cdot (1 + 3\kappa r))$ and $\alpha_t\eta^{d-1} \leqslant 1/(12 \kappa r \cdot (\sqrt{d}+1)^3)$ with $\eta$ in Assumption~\ref{assumption:incoherence}. Then, with probability at least $1-\exp(-c\bar{p})$, it follows that, for positive constants $c$ and $C$, 
\[
\begin{aligned}
\varepsilon^{(t)} \leqslant 2^{-t} \varepsilon^{(0)}  + C\overline{\sigma} (\sqrt{d} + 1 )\ \sqrt{\bar{p} r} / \lambda_r .
\end{aligned}
\]

\end{corollary}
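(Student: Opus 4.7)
My plan is to specialize the deterministic RGD bound of Theorem~\ref{thm:local_convergence_rgd} to the identity measurement operator, note that several cross- and curvature-terms collapse, verify that the resulting contraction factor is bounded by $1/2$ under the stated hypotheses, and then control the noise term via Gaussian concentration on the low-dimensional tangent subspaces. The final estimate then follows from iterating the geometric recursion.

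First, since $\calA=\mathrm{Id}$ we have $\calA^*\calA=\mathrm{Id}$, which makes two of the auxiliary quantities in Theorem~\ref{thm:local_convergence_rgd} vanish identically: $\calP_{\TT_i^{(t)}}\calA^*\calA\calP_{\TT_i^{(t)}}^{\perp}\calP_{\TT_j^{(t)}}=\calP_{\TT_i^{(t)}}\calP_{\TT_i^{(t)}}^{\perp}\calP_{\TT_j^{(t)}}=0$ and $(\calP_{\TT_i^{(t)}}\calA^*\calA\calP_{\TT_i^{(t)}})^{-1}\calA^*\calA\calP_{\TT_i^{(t)}}^{\perp}V=0$. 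The remaining operator norms reduce to $\|\calP_{\TT_i^{(t)}}(I-\alpha_t\calA^*\calA)\calP_{\TT_i^{(t)}}\|=|1-\alpha_t|$ and $\|\calP_{\TT_i^{(t)}}\calA^*\calA\calP_{\TT_i^{(t)}}\|=1$. Plugging into the theorem yields the simplified recursion
\[
\varepsilon^{(t+1)}\leq(\sqrt{d}+1)|1-\alpha_t|\,\varepsilon^{(t)}+(\sqrt{d}+1)^3(\varepsilon^{(t)})^2+2r\alpha_t(\sqrt{d}+1)^3\big[(\varepsilon^{(t)}+\eta)^{d-1}+\varepsilon^{(t)}\big]\varepsilon^{(t)}+(\sqrt{d}+1)\alpha_t\max_{i\in[r]}\frac{\|\calP_{\TT_i^{(t)}}\calE\|_{\mathrm{F}}}{\lambda_i}.
\]

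Next I would prove by induction on $t$ that the coefficient of $\varepsilon^{(t)}$ on the right-hand side is bounded by $1/2$, so $\varepsilon^{(t)}$ never escapes the initial basin. The step-size hypothesis gives $(\sqrt{d}+1)|1-\alpha_t|\leq 1/6$; the bound on $\varepsilon^{(0)}$ controls the $(\sqrt{d}+1)^3\varepsilon^{(t)}$ piece and the linear-in-$\varepsilon^{(t)}$ part of the second-order term by at most $1/12$ (using $\kappa\geq 1$); and after splitting $(\varepsilon^{(t)}+\eta)^{d-1}$ into the two cases $\varepsilon^{(t)}\leq\eta$ and $\varepsilon^{(t)}>\eta$, the assumption $\alpha_t\eta^{d-1}\leq 1/(12\kappa r(\sqrt{d}+1)^3)$ absorbs the remainder. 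For the noise term I would decompose $\calP_{\TT_i^{(t)}}\calE$ via \eqref{eq:tangent space projection}: each of its $d$ ``perpendicular'' pieces is a Gaussian vector projected onto a subspace of dimension at most $p_l-1$ with covariance dominated by $\overline{\sigma}I$, so its Frobenius norm is $O(\overline{\sigma}\sqrt{\bar p})$ with probability $1-\exp(-c\bar p)$, while the core piece is a single Gaussian scalar. Combining over the $d$ modes, multiplying by the $(\sqrt{d}+1)$ prefactor, dividing by $\lambda_r$, and taking a union bound over the $r$ components produces the stated noise floor $C\overline{\sigma}(\sqrt{d}+1)\sqrt{\bar p r}/\lambda_r$. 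Finally, iterating $\varepsilon^{(t+1)}\leq\tfrac12\varepsilon^{(t)}+b$ gives $\varepsilon^{(t)}\leq 2^{-t}\varepsilon^{(0)}+2b$, matching the claim.

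The main obstacle is the Gaussian concentration just described: the tangent subspace $\TT_i^{(t)}$ is determined by the iterates and therefore depends on $\calE$, so a fixed-subspace bound cannot be invoked directly. I would bridge this either by an $\epsilon$-net over rank-one tensors on $\operatorname{Seg}$---whose tangent-space parameterization has only $O(d\bar p)$ degrees of freedom, so that polynomial covering numbers preserve the exponent $\exp(-c\bar p)$---or, more delicately, by first establishing concentration at the (deterministic) tangent space at the ground truth $\calT_i$ and transporting it to $\TT_i^{(t)}$ via a perturbation bound of order $\varepsilon^{(t)}$, absorbing the slack into the quadratic terms already present in the recursion. Carefully interleaving this noise control with the induction, so that $\varepsilon^{(t)}$ remains small enough at every step to justify the tangent-space perturbation estimate, is the most delicate piece of the argument.
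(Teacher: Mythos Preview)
Your approach is essentially identical to the paper's: specialize Theorem~\ref{thm:local_convergence_rgd} to $\calA=\mathrm{Id}$ so that the cross-tangent terms vanish and $\|\calP_{\TT_i^{(t)}}(I-\alpha_t\calA^*\calA)\calP_{\TT_i^{(t)}}\|=|1-\alpha_t|$, then verify algebraically that the resulting contraction factor is at most $1/2$ under the stated hypotheses and iterate the geometric recursion. On the noise term you are actually more careful than the paper, which simply asserts $\|\calP_{\TT_i^{(t)}}\calE\|_{\mathrm{F}}\lesssim\overline\sigma\sqrt{\bar p}$ without discussing the dependence of $\TT_i^{(t)}$ on $\calE$; your proposed $\epsilon$-net argument over the $O(d\bar p)$-dimensional parameterization of rank-one tensors is exactly the device the paper invokes later (in the tensor-regression corollary) and would close the gap here as well.
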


\begin{corollary}[Convergence rate of RGN for Tensor CP decomposition] \label{cor:RGN1}
Let $\varepsilon^{(t)} = \max_{i \in [r]} (\|\calT_i^{(t)}-\calT_i\|_{\mathrm{F}}/\lambda_i)$. Assume that $\eta^{d-1} \leqslant \varepsilon^{(0)} \leqslant 1/ (12(\sqrt{d} +1)^3)$ with $\eta$ in Assumption~\ref{assumption:incoherence}.Then, with probability at least $1-\exp(-c\bar{p})$, it follows that, for positive constants $c$ and $C$, 
\[
\varepsilon^{(t)} \leqslant 
\begin{cases}
2^{-2^t} \varepsilon^{(0)} + C(\sqrt{d} + 1) \overline{\sigma}\sqrt{\bar{p}  r} /\lambda_r, & 0 \leqslant t \leqslant t^* =\lfloor -c(d - 1) \log\eta \rfloor, \\
2^{-(t-t^*)} \varepsilon^{(t^*)} + C (\sqrt{d} + 1) \overline{\sigma}\sqrt{\bar{p}  r}/ \lambda_r, & t \geqslant t^* .
\end{cases}
\]
\end{corollary}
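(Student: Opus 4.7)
The plan is to specialize Theorem~\ref{thm:local_convergence_rgn} to pure CP decomposition by setting $\calA = \mathrm{Id}$ and then iterating the resulting per-step bound in two regimes to obtain the quadratic-to-linear convergence stated in the corollary. First I would observe that with $\calA = \mathrm{Id}$ we have $\calA^*\calA = \mathrm{Id}$, so $\calP_{\TT_i^{(t)}}\calA^*\calA\calP_{\TT_i^{(t)}} = \calP_{\TT_i^{(t)}}$ and its pseudoinverse on the tangent space is $\calP_{\TT_i^{(t)}}$ itself. Consequently, for any $j \in [r]$ and any tensor $V$,
\[
\bigl(\calP_{\TT_i^{(t)}}\calA^*\calA\calP_{\TT_i^{(t)}}\bigr)^{-1}\calA^*\calA\calP_{\TT_i^{(t)}}^{\perp}\calP_{\TT_j^{(t)}} V \;=\; \calP_{\TT_i^{(t)}}\calP_{\TT_i^{(t)}}^{\perp}\calP_{\TT_j^{(t)}} V \;=\; 0,
\]
because $\calP_{\TT_i^{(t)}}$ and $\calP_{\TT_i^{(t)}}^{\perp}$ project onto orthogonal complementary subspaces. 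The same cancellation zeros out $\sup_{V\in\operatorname{Seg}}\|(\calP_{\TT_i^{(t)}}\calA^*\calA\calP_{\TT_i^{(t)}})^{-1}\calA^*\calA\calP_{\TT_i^{(t)}}^{\perp} V\|$, so the entire first-order contraction in Theorem~\ref{thm:local_convergence_rgn} vanishes and the multiplier $1+2(r-1)\sup\|\cdot\|$ inside the second-order term collapses to $1$. Combined with $\calA^*\calE = \calE$, the simplified per-iteration recursion becomes
\[
\varepsilon^{(t+1)} \;\leq\; 2(\sqrt{d}+1)^3 \bigl[(\varepsilon^{(t)}+\eta)^{d-1} + \varepsilon^{(t)}\bigr]\varepsilon^{(t)} \;+\; (\sqrt{d}+1)\,\max_{i \in [r]}\frac{\|\calP_{\TT_i^{(t)}}\calE\|_{\mathrm{F}}}{\lambda_i}.
\]

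The next step is to bound the stochastic noise uniformly over all feasible iterates, since each $\TT_i^{(t)}$ depends on $\calE$ through the algorithm and cannot be treated as fixed. I would construct an $\varepsilon$-net on the product of $r$ Segre manifolds of total intrinsic dimension $O(d\bar p\,r)$, combine Gaussian concentration of $\|\calP_{\TT}\calE\|_{\mathrm{F}}^2$ around its mean $O(\overline\sigma^2 d\bar p)$ with a Lipschitz estimate on $\calT \mapsto \calP_{\TT}$ derived from the explicit formula \eqref{eq:tangent space projection}, and union-bound over the net to obtain
\[
\sup_{\calT_1,\ldots,\calT_r\in\operatorname{Seg}}\ \max_{i\in[r]}\,\|\calP_{\TT_i}\calE\|_{\mathrm{F}} \;\leq\; C\,\overline{\sigma}\,\sqrt{\bar p\,r}
\]
with probability at least $1-\exp(-c\bar p)$, which absorbs the max over $i$ and all data-dependence of the iterates into a single high-probability event.

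On that event I would close the proof by a two-regime induction. While $\varepsilon^{(t)} \geq \eta$, the hypothesis $\varepsilon^{(0)} \leq 1/(12(\sqrt{d}+1)^3)$ keeps $\varepsilon^{(t)}$ small enough that the bracket is bounded by $C_1\varepsilon^{(t)}$ for an absolute constant $C_1$, so the recursion becomes purely quadratic: $\varepsilon^{(t+1)} \leq C_2(\sqrt{d}+1)^3(\varepsilon^{(t)})^2 + \text{noise}$. Rescaling $u_t := C_2(\sqrt{d}+1)^3\varepsilon^{(t)}$ gives $u_{t+1} \leq u_t^2$, and with $u_0 \leq 1/2$ this yields $\varepsilon^{(t)} \leq 2^{-2^t}\varepsilon^{(0)}$ above the noise floor, matching the first branch. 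Once $\varepsilon^{(t)}$ has dropped below $\eta$, the bracket is at most $(2\eta)^{d-1}$, and the assumption $\eta^{d-1} \leq \varepsilon^{(0)} \leq 1/(12(\sqrt{d}+1)^3)$ reduces the multiplier on $\varepsilon^{(t)}$ to at most $1/2$, producing linear decay $\varepsilon^{(t+1)} \leq \varepsilon^{(t)}/2 + \text{noise}$. The transition time $t^*$ is chosen conservatively so that by $t = t^*$ the quadratic bound has fallen below the $\eta^{d-1}$ threshold; inverting $2^{-2^{t^*}}\varepsilon^{(0)} \asymp \eta^{d-1}$ and absorbing constants into $c$ yields the stated $t^* = \lfloor -c(d-1)\log\eta\rfloor$, and iterating the linear bound from $t^*$ delivers the second branch.

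The main obstacle will be the uniform noise control: because $\TT_i^{(t)}$ is a data-dependent subspace, direct Gaussian concentration for a fixed subspace does not apply. Building a net of the right granularity on the Segre manifold, establishing a Frobenius-norm Lipschitz estimate for the tangent-space projection via \eqref{eq:tangent space projection}, and executing the union bound without losing more than a constant in the exponent is where the real technical effort lies. A secondary subtlety is verifying that the induction hypothesis $\varepsilon^{(t)} \leq 1/(12(\sqrt{d}+1)^3)$ is preserved by the recursion once the stochastic noise term is added in, so that the bracket estimate $(\varepsilon^{(t)}+\eta)^{d-1} \leq C_1\varepsilon^{(t)}$ remains legal throughout the quadratic phase and the rescaling $u_{t+1} \leq u_t^2$ keeps $u_t$ strictly below $1$.
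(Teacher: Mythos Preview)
Your approach is essentially the same as the paper's: specialize Theorem~\ref{thm:local_convergence_rgn} to $\calA=\mathrm{Id}$, observe that $\calP_{\TT_i^{(t)}}\calP_{\TT_i^{(t)}}^{\perp}=0$ kills both the first-order contraction and the cross-subspace supremum term, obtain the clean per-step recursion $\varepsilon^{(t+1)}\lesssim(\sqrt d+1)^3[(\varepsilon^{(t)}+\eta)^{d-1}+\varepsilon^{(t)}]\varepsilon^{(t)}+(\sqrt d+1)\overline\sigma\sqrt{\bar p r}/\lambda_r$, and then iterate in two regimes. The paper's own proof is in fact terser than yours: it simply asserts the noise bound $\|\calP_{\TT_i^{(t)}}\calE\|_{\mathrm F}\lesssim\overline\sigma\sqrt{\bar p}$ without the $\varepsilon$-net argument you sketch, and leaves the two-phase iteration to a remark rather than spelling it out, so your outline is more complete on those points. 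One small slip: the genuine crossover from quadratic to linear behavior occurs at $\varepsilon^{(t)}\asymp\eta^{d-1}$ (where the $\varepsilon^{(t)}$ term in the bracket yields to $(\varepsilon^{(t)}+\eta)^{d-1}\approx\eta^{d-1}$), not at $\varepsilon^{(t)}\asymp\eta$ as you first write---though you recover the right threshold when you fix $t^*$---and inverting $2^{-2^{t^*}}\varepsilon^{(0)}\asymp\eta^{d-1}$ actually gives $t^*\asymp\log[(d-1)\log(1/\eta)]$, so the stated $t^*=\lfloor-c(d-1)\log\eta\rfloor$ is a (harmlessly conservative) overestimate rather than the literal inversion.
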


Although our proofs of linear and quadratic convergence do not themselves invoke any signal‐to‐noise ratio (SNR) or sample‐size assumptions, such conditions are nonetheless required by the chosen initialization method. A typical spectral initialization, such as \texttt{T-HOSVD} \citep{de2000multilinear} and \texttt{CPCA} \citep{han2022tensor} requires SNR ratio $\lambda_r = \Omega(\bar{p}^{d/4})$ in tensor CP decomposition and sample size $n / \lambda_r = \Omega(\bar{p}^{d/2})$ in tensor regression.

\paragraph{Tensor Regression.} In the tensor-regression setting, we observe
\[
y_i = \langle \calX_i, \calT \rangle + \calE_i,
\]
for $i=1,2, \cdots, n$, where the design $\{\calX_i\}_{i=1}^n$ are i.i.d. Gaussian tensors and satisfy $\operatorname{Cov} (\vec (\calX_i ) ) = \sigma^2I_{p^*}$. However, our results extend to sub-Gaussian design tensors. In the sub-Gaussian case, one shows (via a tensor restricted isometry property, see Definition 1 and Proposition 1 of \cite{luo2024tensor}) that the design also approximately preserves the norm of any low-rank signal tensor, just as the Gaussian ensemble does. In the following corollaries, $\gamma$ can be viewed as a constant that quantifies the restricted isometry property. Furthermore, we assume that the additive noise $\calE_i$'s are independently Gaussian and $\sigma_\xi\,I_{n}\;\preceq\;\operatorname{Cov}\left(\calE\right) \;\preceq\;\overline{\sigma}_\xi\,I_{n}$

\begin{remark}
By assuming $\operatorname{Cov} (\vec (\calX_m ) ) = \sigma^2I_{p^*}$, we indeed assume that entries of each $\calX_m$ are i.i.d. More generally, let $\Sigma = \operatorname{Cov} (\vec (\calX_m ) )$. Then, the adjoint operator can be written as $\mathcal{A}^*(\mathcal{Y})=1 /n \sum_{m=1}^n y_m \vec^{-1}(\Sigma^{-1}\vec(\mathcal{X}_m))$. In practice, estimating $\operatorname{Cov}(\vec (\calX_m) )$ with a general structure typically requires additional structural assumptions, which are beyond the scope of this paper. 
\end{remark}


\begin{corollary}[Convergence rate of RGD for CP tensor regression]\label{corllary:rate_RGD_regression}
Let $\varepsilon^{(t)} = \max_{i \in [r]} (\|\calT_i^{(t)}-\calT_i\|_{\mathrm{F}}/\lambda_i)$ and $\gamma = \sqrt{\bar p /n }$ be sufficiently small. Assume that $1 - 1/6 \cdot (\sqrt{d}+1) \leqslant \alpha_t \leqslant 1 - \delta$, where $\delta$ is a constant depending on $\gamma$, $\varepsilon^{(0)} \leqslant 1/(8(\sqrt{d} +1)^3 \cdot (1 + 3\kappa r))$ and $\alpha_t\eta^{d-1} \leqslant 1/(12 \kappa r \cdot (\sqrt{d}+1)^3)$ with $\eta$ in Assumption~\ref{assumption:incoherence}. Then, with probability at least $1-\exp(-c\bar{p})$, it follows that, for positive constants $c$ and $C$, 
\[
\begin{aligned}
\varepsilon^{(t)} \leqslant 2^{-t} \varepsilon^{(0)} + C(\sqrt{d} + 1) \overline{\sigma}_\xi\sqrt{\bar{p} r }/(\sigma\lambda_r\sqrt{n}) . 
\end{aligned}
\]
\end{corollary}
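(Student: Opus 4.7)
The plan is to treat Corollary~\ref{corllary:rate_RGD_regression} as the specialization of Theorem~\ref{thm:local_convergence_rgd} to the Gaussian tensor-regression operator $\calA$. Accordingly, I would prove the result by (i) controlling each of the four building blocks appearing on the right-hand side of Theorem~\ref{thm:local_convergence_rgd} via a tensor restricted isometry property (TRIP) for the Gaussian ensemble on the sum space $\TT_i^{(t)}+\TT_j^{(t)}$, (ii) showing that under the stated initialization, step-size, and incoherence conditions the first- and second-order contraction factors combine to at most $1/2$, (iii) bounding the noise term by Gaussian concentration on a tangent-space of dimension $O(d\bar p)$, and (iv) iterating the one-step inequality into a geometric series.

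For step (i), the tensor RIP says that for the Gaussian design with $n\gtrsim \bar p$, with probability at least $1-\exp(-c\bar p)$, uniformly over all unit tensors $V$ in the sum $\TT_i^{(t)}+\TT_j^{(t)}$ (whose dimension is $O(d\bar p)$), one has $\bigl|\|\calA(V)\|_2^2 - \|V\|_{\mathrm{F}}^2\bigr|\le \gamma\|V\|_{\mathrm{F}}^2$ with $\gamma=O(\sqrt{\bar p/n})$. A standard polarization argument then yields
\[
\bigl\|\calP_{\TT_i^{(t)}}(I-\alpha_t\calA^*\calA)\calP_{\TT_i^{(t)}}\bigr\|\le (1-\alpha_t)+\alpha_t\gamma,
\qquad
\bigl\|\calP_{\TT_i^{(t)}}\calA^*\calA\calP_{\TT_i^{(t)}}^\perp\calP_{\TT_j^{(t)}}\bigr\|\le\gamma+\bigl\|\calP_{\TT_i^{(t)}}\calP_{\TT_j^{(t)}}\bigr\|,
\]
and the invertibility of $\calP_{\TT_i^{(t)}}\calA^*\calA\calP_{\TT_i^{(t)}}$ with operator-norm bound $(1-\gamma)^{-1}$, which in turn controls the second-order terms $\sup_{V\in\operatorname{Seg}}\|(\calP_{\TT_i^{(t)}}\calA^*\calA\calP_{\TT_i^{(t)}})^{-1}\calA^*\calA\calP_{\TT_i^{(t)}}^\perp V\|$ by $O(\gamma/(1-\gamma))$. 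The mixed cross-tangent projection $\|\calP_{\TT_i^{(t)}}\calP_{\TT_j^{(t)}}\|$ is controlled via Assumption~\ref{assumption:incoherence} together with the current error $\varepsilon^{(t)}$, in the standard way.

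For step (iii), the noise term is
\(\|\calP_{\TT_i^{(t)}}\calA^*(\calE)\|_{\mathrm{F}}/\lambda_i\),
where, conditional on the design, $\calA^*(\calE)=(n\sigma^2)^{-1}\sum_{m=1}^n\calE_m\calX_m$ projected onto the fixed $O(d\bar p)$-dimensional subspace $\TT_i^{(t)}$ is a sub-Gaussian vector with variance proxy on the order of $\overline\sigma_\xi^2/(\sigma^2 n)$. A chi-square tail bound together with a union bound over the $r$ components yields $\max_i\|\calP_{\TT_i^{(t)}}\calA^*(\calE)\|_{\mathrm{F}}\lesssim \overline\sigma_\xi\sqrt{\bar p r}/(\sigma\sqrt n)$ with probability at least $1-\exp(-c\bar p)$. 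Dividing by $\lambda_i\ge \lambda_r$ produces the statistical floor that appears in the corollary.

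Finally, for step (ii) and (iv), I would substitute the bounds above into Theorem~\ref{thm:local_convergence_rgd} and verify, using $\alpha_t\ge 1-1/(6(\sqrt d+1))$, $\varepsilon^{(0)}\le 1/(8(\sqrt d+1)^3(1+3\kappa r))$, and $\alpha_t\eta^{d-1}\le 1/(12\kappa r(\sqrt d+1)^3)$, that the combined coefficient multiplying $\varepsilon^{(t)}$ is at most $1/2$. A simple induction then gives
\(\varepsilon^{(t+1)}\le \tfrac12 \varepsilon^{(t)} + C(\sqrt d+1)\overline\sigma_\xi\sqrt{\bar p r}/(\sigma\lambda_r\sqrt n),\)
and unrolling together with the geometric sum $\sum_{k\ge 0}2^{-k}=2$ yields the claimed bound. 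The principal obstacle is ensuring that the RIP-type bounds of step (i) hold uniformly over all admissible iterates, since the tangent spaces $\TT_i^{(t)}$ themselves depend on the random data; I would handle this by an $\varepsilon$-net over the Segre manifold of dimension $1+\sum_l(p_l-1)$, whose metric-entropy cost is $O(d\bar p\log(1/\epsilon))$ and is absorbed into the same $\exp(-c\bar p)$ probability budget, with $\delta$ in the step-size condition chosen to accommodate the resulting $\gamma$-dependent slack.
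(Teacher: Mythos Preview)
Your proposal is correct and would prove the corollary. The paper follows a somewhat different route to control the operator-norm quantities appearing in Theorem~\ref{thm:local_convergence_rgd}. Rather than invoking a TRIP on $\TT_i^{(t)}+\TT_j^{(t)}$ uniformly over all admissible iterates via an $\varepsilon$-net on the Segre manifold, the paper first bounds the analogous operators at the \emph{true, fixed} tangent spaces $\TT_i,\TT_j$, exploiting the Gaussian-specific independence of $\calP_{\TT_i}\calX$ and $\calP_{\TT_i}^{\perp}\calX$ together with non-asymptotic singular-value bounds for $U_{\TT_i}^{\top}\calX^{\top}\calX U_{\TT_i}$ (Vershynin's Theorem~4.6.1) and an $\varepsilon$-net only over $a\in\TT_i$, $b\in\TT_j$; it then transfers to the data-dependent $\TT_i^{(t)},\TT_j^{(t)}$ by an explicit perturbation estimate $\|(\cdot)_{\TT_i^{(t)}}-(\cdot)_{\TT_i}\|\lesssim \varepsilon^{(t)}\bigl[1+(\sqrt n+\sqrt{\bar p})^2/(\sqrt n-\sqrt{\bar p})^2\bigr]$, which is absorbed into the second-order contraction. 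Your uniform-TRIP approach is arguably cleaner and extends directly to sub-Gaussian designs, while the paper's independence-plus-perturbation argument decouples the random-design concentration (done once, at deterministic tangent spaces) from the iteration and avoids netting over all of $\operatorname{Seg}$. One small slip: under TRIP on the sum space, $\|\calP_{\TT_i^{(t)}}\calA^*\calA\calP_{\TT_i^{(t)}}^{\perp}\calP_{\TT_j^{(t)}}\|\le\gamma$ directly (since $\calP_{\TT_i^{(t)}}\calP_{\TT_i^{(t)}}^{\perp}=0$), so the additive $\|\calP_{\TT_i^{(t)}}\calP_{\TT_j^{(t)}}\|$ you list is superfluous; this only strengthens your bound. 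After these estimates, both routes feed into the same algebraic verification that the combined contraction coefficient is below $1/2$ (matching the paper's Remark), and the induction and geometric-series step are identical.
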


\begin{corollary}[Convergence rate of RGN for CP tensor regression] \label{cor:RGN2} Let $\varepsilon^{(t)} = \max_{i \in [r]} (\|\calT_i^{(t)}-\calT_i\|_{\mathrm{F}}/\lambda_i)$ and $\gamma = \sqrt{\bar p /n }$ be sufficiently small. Assume that $\eta^{d-1} \leqslant \varepsilon^{(0)} \leqslant 1/(8(\sqrt{d} +1)^3)$ with $\eta$ in Assumption~\ref{assumption:incoherence}. Then, with probability at least $1-\exp(-c\bar{p})$, it follows that, for positive constants $c$ and $C$, 
\[
\varepsilon^{(t)} \leqslant 
\begin{cases}
2^{-2^t} \varepsilon^{(0)} + C (\sqrt{d} + 1) \overline{\sigma}_\xi\sqrt{\bar{p} r }/(\sigma\lambda_r\sqrt{n}), & 0 \leqslant t \leqslant t^* =\lfloor -c(d - 1) \log (\eta ) \rfloor, \\
 2^{-(t-t^*)} \varepsilon^{(t^*)} + C (\sqrt{d} + 1) \overline{\sigma}_\xi\sqrt{\bar{p} r }/(\sigma\lambda_r\sqrt{n}) , & t \geqslant t^*  .
\end{cases}
\]
\end{corollary}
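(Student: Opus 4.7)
The plan is to derive Corollary~\ref{cor:RGN2} by specializing the deterministic recurrence of Theorem~\ref{thm:local_convergence_rgn} to the Gaussian tensor-regression model, controlling each of its three ingredients via tensor RIP-type concentration for $\calA$, and then analyzing the resulting noisy recursion in a quadratic phase followed by a linear phase. The structure closely parallels the proof of Corollary~\ref{cor:RGN1}; the only new ingredient is replacing the identity $\calA=\operatorname{Id}$ by a high-probability approximate isometry.

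The first technical step is a tangent-space RIP. Each Segre tangent space $\TT_i^{(t)}$ has dimension $D_0=1+\sum_{l\in[d]}(p_l-1)=O(d\bar p)$, so the direct sum $\TT_i^{(t)}+\TT_j^{(t)}$ is still $O(d\bar p)$-dimensional. A standard $\epsilon$-net argument on the unit sphere of this sum, combined with sub-exponential concentration of Gaussian quadratic forms, yields, with probability at least $1-\exp(-c\bar p)$ and for $\gamma\asymp\sqrt{\bar p/n}$, the bound $\|\calP_{U}(\calA^*\calA-\operatorname{Id})\calP_{V}\|\le\gamma$ for any subspaces $U,V$ chosen among the relevant tangent spaces and their complements; a covering over a small Segre-product neighborhood of the truth makes this uniform in the iterates. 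Consequently $\|(\calP_{\TT_i^{(t)}}\calA^*\calA\calP_{\TT_i^{(t)}})^{-1}\|\le(1-\gamma)^{-1}$, and the identity $\calP_{\TT_i^{(t)}}\calP_{\TT_i^{(t)}}^\perp\equiv 0$ collapses the first-order cross operator to pure RIP error:
\[
\bigl\|(\calP_{\TT_i^{(t)}}\calA^*\calA\calP_{\TT_i^{(t)}})^{-1}\calA^*\calA\calP_{\TT_i^{(t)}}^\perp\calP_{\TT_j^{(t)}}\bigr\|\;\lesssim\;\gamma,
\]
and the Segre-supremum $\sup_{V\in\operatorname{Seg}}\|(\calP\calA^*\calA\calP)^{-1}\calA^*\calA\calP^\perp V\|$ in the second-order term is likewise $O(\gamma)$, so the curvature prefactor $1+2(r-1)\sup(\cdot)$ stays bounded by a constant whenever $r\gamma\ll 1$.

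Next, I would control the noise term. Each entry of $\calA^*(\calE)=(n\sigma^2)^{-1}\sum_m \xi_m\calX_m$ is a sum of independent products of Gaussians and therefore sub-exponential with variance proxy of order $\overline{\sigma}_\xi^2/(n\sigma^2)$. Projecting onto the $D_0$-dimensional subspace $\TT_i^{(t)}$ and applying the bounded inverse yields, via an $\epsilon$-net on the unit ball of $\TT_i^{(t)}$,
\[
\bigl\|(\calP_{\TT_i^{(t)}}\calA^*\calA\calP_{\TT_i^{(t)}})^{-1}\calA^*(\calE)\bigr\|_{\mathrm{F}}\;\lesssim\;\overline{\sigma}_\xi\sqrt{D_0}/(\sigma\sqrt n)
\]
with high probability. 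A union bound over $i\in[r]$ together with a covering over the random realizations of $\TT_i^{(t)}$ absorbs a further $\sqrt r$ factor, yielding, after division by $\lambda_r$, the claimed noise floor $\nu=C(\sqrt d+1)\overline{\sigma}_\xi\sqrt{\bar p r}/(\sigma\lambda_r\sqrt n)$.

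Substituting these bounds into Theorem~\ref{thm:local_convergence_rgn} produces a recursion of the form
\[
\varepsilon^{(t+1)}\;\leq\;C_1(\sqrt d+1)(r-1)\gamma\,\varepsilon^{(t)}\;+\;C_2(\sqrt d+1)^3\bigl[(\varepsilon^{(t)}+\eta)^{d-1}+\varepsilon^{(t)}\bigr]\varepsilon^{(t)}\;+\;\nu.
\]
The hypotheses $\gamma$ sufficiently small and $\varepsilon^{(0)}\le 1/(8(\sqrt d+1)^3)$ keep all coefficients under control. While $\varepsilon^{(t)}\ge \eta$, one has $(\varepsilon^{(t)}+\eta)^{d-1}\lesssim(\varepsilon^{(t)})^{d-1}$ and the bracket is dominated by its $\varepsilon^{(t)}$ summand, giving the quadratic contraction $\varepsilon^{(t+1)}\lesssim(\varepsilon^{(t)})^2+\nu$; iterating from $\varepsilon^{(0)}\le 1/2$ produces the bound $2^{-2^t}\varepsilon^{(0)}+\nu$ over the stated range $t\le t^*$. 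Once $\varepsilon^{(t)}$ falls below $\eta$, the term $(\varepsilon^{(t)}+\eta)^{d-1}\varepsilon^{(t)}\sim\eta^{d-1}\varepsilon^{(t)}$ becomes linear, and the combined linear rate stays $\le 1/2$ under $\eta^{d-1}\le\varepsilon^{(0)}$ and $\gamma$ small, giving $\varepsilon^{(t+1)}\le\tfrac12\varepsilon^{(t)}+\nu$ and hence the geometric bound $2^{-(t-t^*)}\varepsilon^{(t^*)}+\nu$ for $t\ge t^*$. I expect the main obstacle to be the \emph{uniformity} of the RIP and noise bounds over the entire trajectory, since each $\TT_i^{(t)}$ depends on $\calY$ through the iterates; the cleanest resolution is an $\epsilon$-net over a small Segre-product neighborhood of the truth, coupled with an induction showing that the contraction keeps every iterate inside that neighborhood.
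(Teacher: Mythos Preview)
Your plan is correct and follows the paper's overall strategy: specialize Theorem~\ref{thm:local_convergence_rgn} to the Gaussian regression model, bound the three operator norms and the noise term probabilistically, then analyze the resulting recursion in a quadratic phase followed by a linear phase. Your observation that $\calP_{\TT_i^{(t)}}\calP_{\TT_i^{(t)}}^{\perp}=0$ reduces the first-order cross operator to a pure RIP deviation is exactly the mechanism driving the $O(\gamma)$ bound, and your two-phase analysis of the recursion matches the remark following Corollary~\ref{cor:RGN2}.

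The one genuine methodological difference is how the \emph{uniformity over the data-dependent tangent spaces} $\TT_i^{(t)}$ is handled. You propose to cover a small Segre-product neighborhood of the truth and make the RIP and noise bounds uniform over that net, then use the contraction to keep every iterate inside the neighborhood. The paper instead establishes the concentration bounds only at the \emph{true} tangent spaces $\TT_i$ (exploiting that $\calP_{\TT_i}\calX$ and $\calP_{\TT_i}^{\perp}\calX$ are independent for Gaussian designs), and then controls the discrepancy
\[
\bigl\|(\calP_{\TT_i^{(t)}}\calA^*\calA\calP_{\TT_i^{(t)}})^{-1}\calA^*\calA\calP_{\TT_i^{(t)}}^{\perp}\calP_{\TT_j^{(t)}}
-(\calP_{\TT_i}\calA^*\calA\calP_{\TT_i})^{-1}\calA^*\calA\calP_{\TT_i}^{\perp}\calP_{\TT_j}\bigr\|
\;\lesssim\;\varepsilon^{(t)}
\]
deterministically via $\|\calP_{\TT_i^{(t)}}-\calP_{\TT_i}\|\lesssim\varepsilon^{(t)}$; this extra $\varepsilon^{(t)}$ is absorbed into the second-order contraction term. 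Your covering route is more in the spirit of standard tensor-RIP arguments and extends cleanly to sub-Gaussian designs, at the cost of a heavier union bound; the paper's perturbation route avoids covering the manifold altogether but leans on Gaussian orthogonal independence. Either approach closes the argument.
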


\begin{remark}
In Corollaries \ref{cor:RGN1} and \ref{cor:RGN2}, the RGN algorithm exhibits two-phase convergence driven by the recursion for the normalized error
\[
\varepsilon^{(t+1)} \leqslant \underbrace{C_1[(\varepsilon^{(t)}+\eta)^{d-1}+\varepsilon^{(t)}] \varepsilon^{(t)}}_{\text {linear + quadratic term }}+\underbrace{C_2 \calE(\bar p,\lambda_r)}_{\text {noise floor}}
\]
which combines a first-order term and a second-order term. While $\varepsilon^{(t)}$ remains above the threshold $O(\eta^{d-1})$, the quadratic term dominates and we have $\varepsilon^{(t+1)} \approx C_1 (\varepsilon^{(t)})^2+C_2 \calE(\bar p,\lambda_r)$, yielding local quadratic convergence. Once $\varepsilon^{(t)} \lesssim \eta^{d-1}$, we have $(\varepsilon^{(t)}+\eta)^{d-1}+\varepsilon^{(t)} \lesssim \eta^{d-1}$, so the update reduces to $\varepsilon^{(t+1)} \approx C_1' \eta^{d-1} (\varepsilon^{(t)})+C_2 \calE(\bar p,\lambda_r)$. From that point onward, the error contracts linearly at a rate $O(\eta^{d-1})$ until it settles at the noise floor.
\end{remark}

\subsection{Computational Complexity}

Denote $p^*=\prod_{l=1}^d p_l$, $\bar{p}=\max _{l \in[d]} p_l$, $r=\mathrm{CP} \text { rank, }$ and $n=\text { number of observations for regression}$.
We summarize the per-iteration computational complexities of the proposed methods and CP-ALS below.

\begin{table}[h!]
\centering
\setlength{\tabcolsep}{4pt}
\caption{Per-iteration computational complexities for CP decomposition and regression}
\begin{tabular}{lcc}
\toprule
\textbf{Method} & \textbf{Decomposition} & \textbf{Regression} \\
\midrule
CP‑ALS 
  & \(O(drp^*+dr^2\bar{p})\) 
  & \(O(drnp^*+dr^2 n\bar{p})\) \\

RGD    
  & \(O(drp^*)\)
  & \(O(rp^*(n+d))\) \\

RGN    
  & \(O(drp^*)\)
  & \(O(drn\bar pp^*+d^3 r \bar{p}^3)\) \\
\bottomrule
\end{tabular}

\end{table}

\paragraph{CP Decomposition.} Classical ALS updates each of the $d$ factor matrices in turn. For a fixed mode $m$, the Khatri-Rao product costs $O(rp^*)$ for a dense tensor. This is followed by forming and solving an $r\times r$  system of normal equations, which costs $O(d\bar{p}r^2+r^3)$. Summing over all $d$ modes, the total per-iteration complexity is $=O (drp^*+dr^2\bar{p} ).$

For RGD, each iteration begins by forming the residual tensor, which costs $O\left(r p^*\right)$. Then, for each of the $r$ components, the algorithm projects the Euclidean gradient onto the tangent space and performs a retraction. The projection of a $p^*$-sized tensor onto the tangent space of a rank-one tensor costs $O\left(d p^*\right)$, as does the rank-1 HOSVD retraction. The total cost is therefore dominated by these steps, yielding a complexity of $O\left(r p^*+r\left(d p^*+\right.\right. \left.\left.d p^*\right)\right)=O\left(r d p^*\right)$. 

RGN for CP decomposition, where the measurement operator $\mathcal{A}$ is the identity, becomes equivalent to an RGD step with a unit step size ($\alpha_t=1$), and thus has an identical per-iteration cost of $O\left(r d p^*\right)$.

\paragraph{CP Regression.} With $n$ observations, each iteration of CP-ALS requires solving $d$ normal equations. The dominant cost is forming the design matrix for each mode, leading to a total complexity of $O\left(d n r p^*+\right. d n r^2 \bar{p}$). 

RGD for regression first computes the gradient, which involves operations like $\mathcal{A}^*\left(\mathcal{A}\left(\sum_{i=1}^r \mathcal{T}_i\right)-\right. \mathcal{Y})$ and costs $O\left(n r p^*\right)$. It then performs $r$ tangent-space projections and retractions, costing $O\left(r d p^*\right)$. The total per-iteration complexity is therefore $O\left(n r p^*+r d p^*\right)=O\left(r p^*(n+d)\right)$.

RGN augments the RGD step with a second-order update. For each of the $r$ components, this involves: (i) constructing an orthonormal basis for the tangent space, which has dimension $\mathrm{df}= 1+\sum_{l=1}^d\left(p_l-1\right) \approx d \bar{p}$, via QR factorization of a $p^* \times \mathrm{df}$ matrix in $O\left(p^* \mathrm{df}^2\right)$; (ii) projecting the $n \times p^*$ design matrix into that basis in $O\left(n p^* \mathrm{df}\right)$; (iii) forming the
$\mathrm{df} \times \mathrm{df}$ Gauss-Newton system in $O\left(n \mathrm{df}^2\right)$; and (iv) solving the resulting system in $O\left(\mathrm{df}^3\right)$. The total cost for
$r$ components is $O\left(r\left(p^* \mathrm{df}^2+n p^* \mathrm{df}+n \mathrm{df}^2+\mathrm{df}^3\right)\right)$. In typical regression settings where $n \gg d \bar{p}$ and $p^* \geqslant \mathrm{df}$, the $O\left(n p^* \mathrm{df}\right)$ term dominates the other terms $p^* \mathrm{df}^2$ and $n \mathrm{df}^2$. Substituting
$\mathrm{df} \approx d \bar{p}$, the complexity simplifies to $O\left(r n p^* d \bar{p}+r d^3 \bar{p}^3\right)$.

\section{Experiments and Results}\label{sec:experiment} 

We evaluate the convergence behavior of the proposed RGD and RGN methods on two representative problems: (i) CP tensor decomposition, and (ii) scalar‐on‐tensor regression with a low CP rank signal tensor. In all experiments, we work with a third‐order tensor of dimension $(p_1,p_2,p_3)=(30,30,30)$ and a true CP rank $r=3$. The step-size $\alpha_t$ for RGD is fixed at 0.2. The factor vectors $\{u_{l, i}\}_{l \in [d], i \in [r]}$ are sampled independently from $\mathcal{N}(0,I_{p_l})$ and then normalized to unit $\ell_2$‐norm, i.e. uniformly sampled from the sphere $\SS^{p_l-1}$. Let $\bar p =\max\{p_1,p_2,p_3\}$.

In the CP decomposition setting, we generate the noise tensor $\calE$ with i.i.d $\mathcal{N}(0, 1)$ entries. We simulate the signal $\{\lambda_i\}_{i=1}^r$ from $(\sqrt{d} + 1) \cdot \operatorname{Unif}(\bar p^{3/4}, 2\cdot \bar p^{3/4})$. For regression, we draw the noise terms $\{\xi_m\}_{m=1}^n$ i.i.d from $\mathcal{N}(0, 1)$ and generate i.i.d. standard Gaussian design tensors $\{\calX_m\}_{m=1}^n$. We sample the signal weights $(\sqrt{d} + 1) \cdot \operatorname{Unif}(0.5, 1.5)$ and fix the sample size $n$ to be $2\bar{p}^{3/2}r$.

\paragraph{Convergence of Riemannian Optimization Methods.} 
We measure performance using the relative Frobenius error $\|\widehat{\calT} - \calT\|_{\mathrm{F}} / \|\calT\|_{\mathrm{F}}$. The error metric $\max_{i \in [r]} (\|\widehat{\calT}_i - \calT_i\|_{\mathrm{F}} / \lambda_i)$ used in the theoretical analysis is more sensitive to the identifiability issue across $r$ components while the relative Frobenius norm provides a stable summary. Our theoretical analysis results can be immediately extended to the error contraction of the relative Frobenius error. Figure~\ref{fig:rgd-vs-rgn} shows that, in both scenarios, without noise, RGD’s error decays linearly and RGN’s decays quadratically to zero; under noise, RGD contracts linearly to its noise floor, while RGN retains a quadratic rate until it reaches its noise‐dependent limit. We tried several other simulation settings in which we varied the standard deviation of noise and observed a similar phenomenon.

\begin{figure} 
\centering
\includegraphics[width=0.9\textwidth]{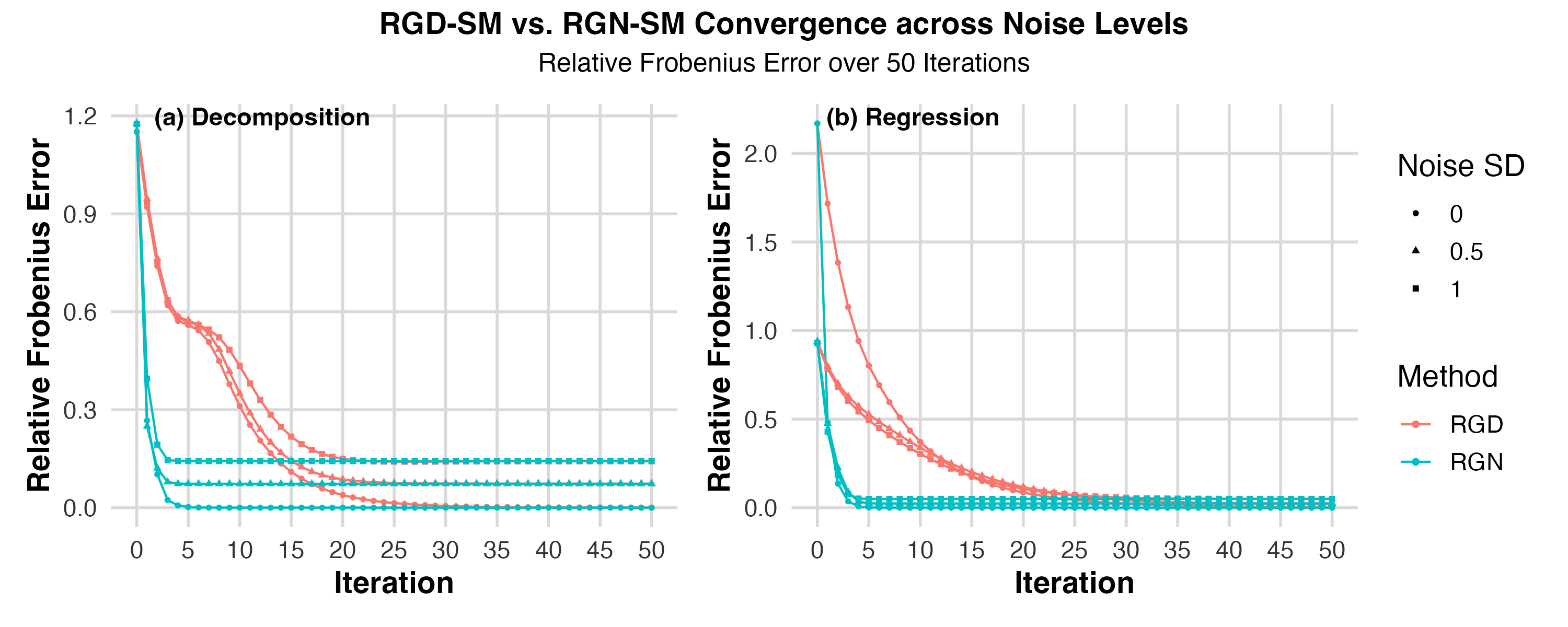}
\caption{Convergence of RGD and RGN for (a) CP decomposition and (b) tensor regression, plotted in terms of relative Frobenius error versus iteration.}
\label{fig:rgd-vs-rgn}
\end{figure}

\paragraph{Comparison of RGD and RGN with Existing Algorithms.} 

In this subsection, we compare RGD and RGN with other existing algorithms, including Alternating Least Squares (CP-ALS) \cite{kolda2009tensor}, Iterative Concurrent Orthogonalization (ICO) \cite{han2022tensor} for CP decomposition, and penalized reduced rank regression (RRR) for tensor regression \cite{lock2018tensor}. Since RRR is not an iterative algorithm, we plot only its final relative error. We replicate the simulation 20 times for stable results and present the square root of the mean of the relative Frobenius error. Here, we introduce coherence for factors by ensuring all columns have $\eta =0.75$ with a common reference. The implementation details are provided in the appendix.
In CP decomposition (Figure~\ref{fig:cp}), RGN matches the rapid 1-2 iteration convergence of CP-ALS and ICO. In regression (Figure~\ref{fig:regression}), RGN outperforms CP-ALS and demonstrates greater robustness, while RGD converges more slowly, and RRR converges to a solution with a significantly higher estimation error. Unlike CP-ALS, which lacks theoretical guarantees, RGN combines provable local quadratic convergence with strong empirical performance and broad applicability.

\begin{figure}
\centering
\includegraphics[width=0.9\textwidth]{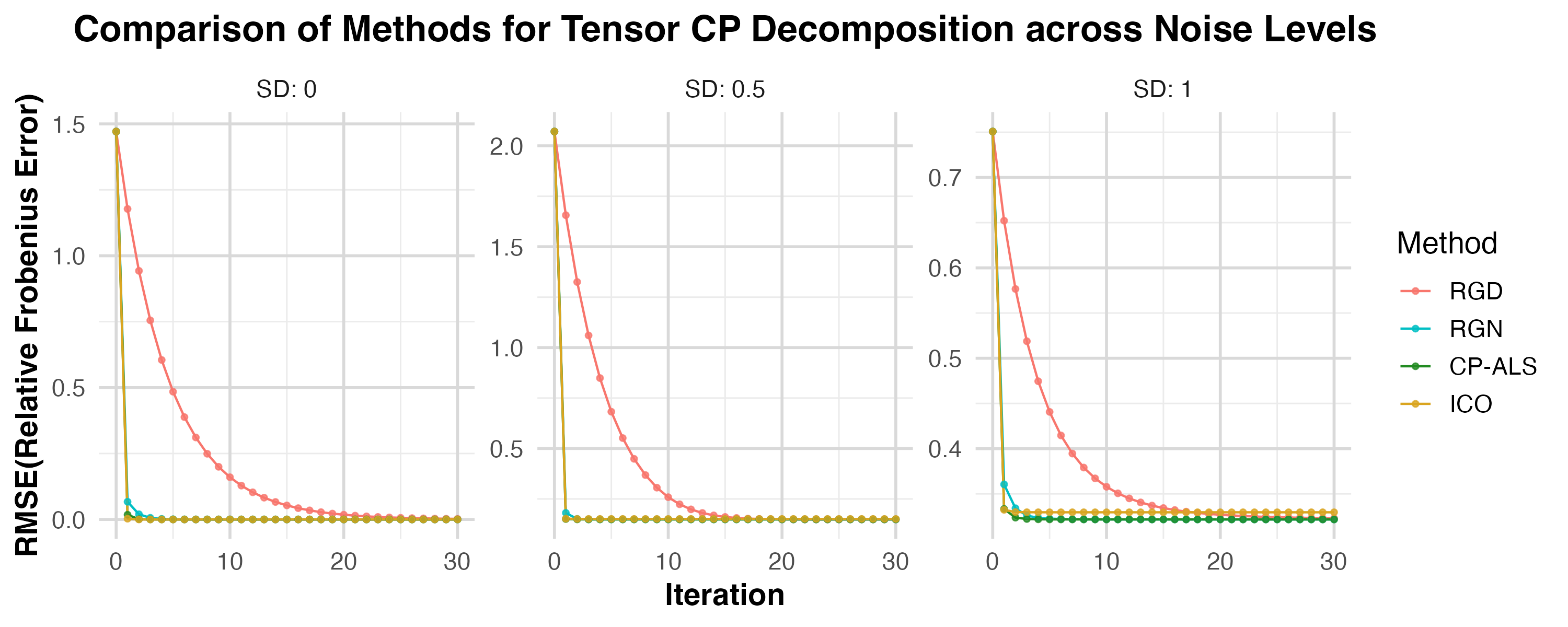}
\caption{Convergence of CP tensor decomposition algorithms in terms of relative Frobenius error versus iteration: RGD-SM and RGN-SM (proposed) compared with CP-ALS \cite{kolda2009tensor} and ICO \cite{han2022tensor}.}
\label{fig:cp}
\end{figure}

\begin{figure}
\centering
\includegraphics[width=0.9\textwidth]{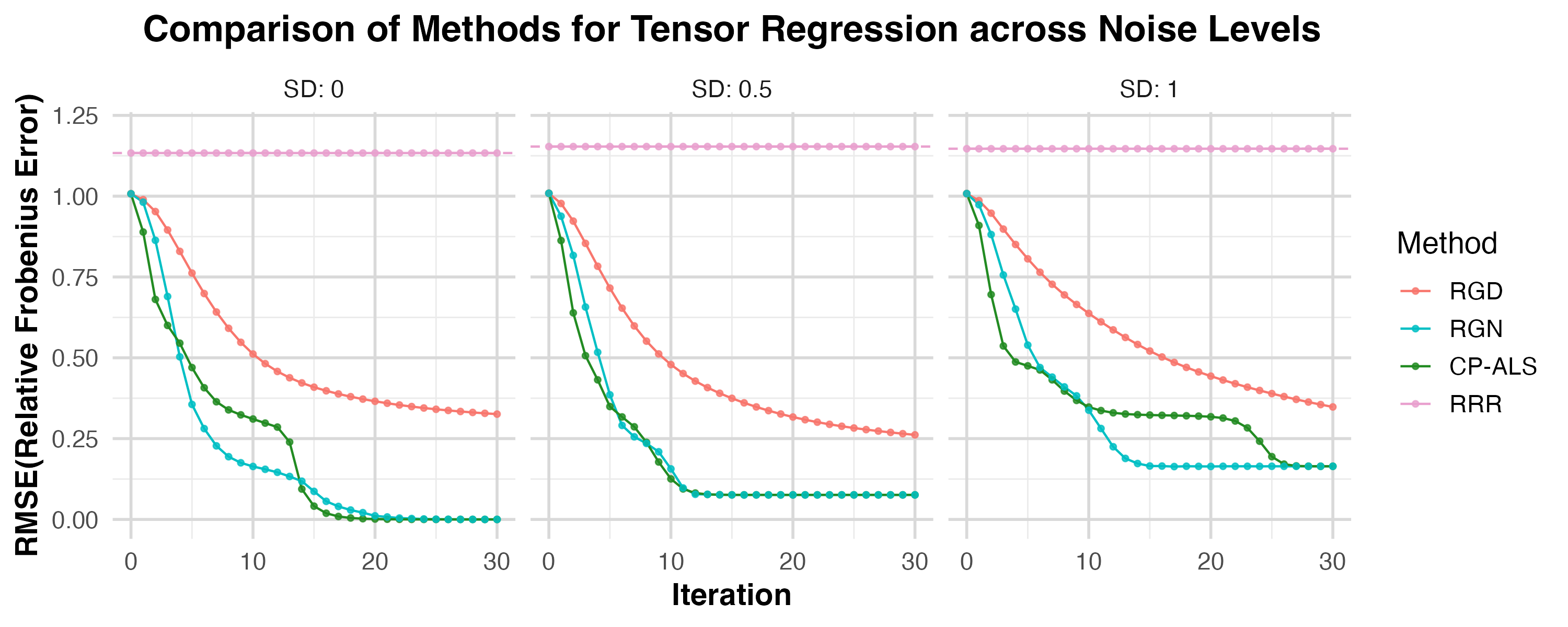}
\caption{Convergence of CP tensor regression algorithms in terms of relative Frobenius error versus iteration: RGD-SM and RGN-SM (proposed) compared with CP-ALS \cite{kolda2009tensor} and RRR \cite{lock2018tensor}.}
\label{fig:regression}
\end{figure}

\section{Discussion and Future Extensions}\label{sec:conclusion}

In this paper, we propose a unified and provably convergent framework for both CP tensor decomposition and scalar-on-tensor regression with a CP low-rank signal tensor under additive noise. Our approach reformulates each problem as a Riemannian optimization over the Segre manifold of rank-one tensors. Extensive simulations show that our method matches the convergence speed of CP-ALS in the CP decomposition setting and slightly outperforms it in terms of final estimation error in the regression setting.

Our framework offers several practical advantages. It seamlessly handles a broad class of linear measurement operators and can be extended to CP tensor completion in future work. Moreover, because each component is updated independently on the Segre manifold, our methods allow for streaming implementations, which are ideal for large-scale or time-evolving tensor data. While we use fixed step sizes here, adaptive schemes or Riemannian momentum could further speed up convergence.

Despite these advantages, our framework has some limitations that remain open. First, our analysis assumes exact knowledge of the CP rank and does not address rank selection or mis-specification. Second, each iteration requires Riemannian retractions and tangent-space projections, which can become computationally costly in high dimensions or at large ranks. Addressing these issues is an important direction for future work.

Several challenges remain. First, our analysis presumes the CP rank is known exactly; extending the theory to handle rank selection or rank mis-specification is important. Second, each iteration involves retractions and tangent-space projections, which may become computationally intensive in ultra-high dimensions. Developing more efficient approximations or randomized updates would be a valuable direction for future research.

\newpage
\bibliographystyle{apalike}


\newpage
\appendix

\section*{\Large Appendices}

\section{Notation}

Throughout this paper, we use the following notation and conventions.

We use boldface uppercase calligraphic letters (e.g. $\calT,\calX$) for tensors, uppercase letters (e.g. $A, U$) for matrices, and lowercase letters (e.g. $u,v$) for vectors or scalars. For any positive integer $m$, let $[m] = {1, 2, \dots, m}$. We consider order-$d$ tensors with mode dimensions $p_1, p_2, \dots, p_d$, so that $\calT \in \RR^{p_1 \times \cdots \times p_d}$ contains $p^* = \prod_{l=1}^d p_l$ total entries. The CP rank is denoted by $r$, and the sample size in regression contexts is denoted by $n$.

The vectorization of a tensor $\calT$ is denoted by $\operatorname{vec}(\calT)$. The mode-$k$ unfolding (matricization) is $\mat_k(\calT)\in\RR^{p_k\times(p /p_k)}$. The outer (tensor) product is written $\otimes$. The multilinear (Tucker) product of $\calT$ with matrices $U_k\in\RR^{q_k\times p_k}$ is $\calT\times_1 U_1\times_2\cdots\times_d U_d$. The $k$-mode product with $U$ alone is $\calT\times_k U$. The tensor inner product is $\langle\calA,\mathcal{B}\rangle=\sum_{i_1,\ldots,i_d} A_{i_1\dots i_d}B_{i_1\dots i_d}$. The induced Frobenius norm is $\|\calA\|_{\mathrm{F}}=\sqrt{\langle\calA,\calA\rangle}$. For matrices and vectors, $\|\cdot\|_{\mathrm{F}}$ and $\|\cdot\|$ denote the Frobenius and spectral (or Euclidean) norms, respectively.

Let $\SS^{p-1}$ denote the unit sphere in $\RR^p$. Define the Stiefel manifold $\mathbb{O}^{p, r} = {U \in \RR^{p \times r}: U^\top U = I_r}$ as the set of $p \times r$ orthonormal matrices. For any $U \in \mathbb{O}^{p, r}$, the orthogonal projection onto its column space is $\calP_U = UU^\top$.

In particular, for a unit vector $u \in \RR^p$, define the projector onto its span as $P_u = uu^\top$ and its orthogonal complement as $P_u^\perp = I_p - uu^\top$. We use $\RR^+$ to denote the set of positive real numbers. For rank-one tensors, these projections are applied in a mode-wise manner. The set of nonzero rank-one tensors of the form $u_1 \otimes \cdots \otimes u_d$, where each $u_k \in \RR^{p_k} \setminus \{0\}$, forms the Segre manifold. Its geometric structure, including tangent spaces, Riemannian gradients, and retraction maps, is further discussed in Section~\ref{subsec:RGD}.

\section{Additional Simulation Results}

In well-conditioned regimes, characterized by low tensor condition numbers, high signal-to-noise ratios (SNR), and moderate incoherence, Alternating Least Squares (ALS) remains the de facto gold standard for CP decomposition. In such favorable settings, our proposed Riemannian Gradient Descent (RGD) and Riemannian Gauss–Newton (RGN) algorithms offer theoretically grounded alternatives to ALS. However, when these conditions are violated, ALS often struggles to converge reliably \citep{sharan2017orthogonalized}. To evaluate algorithmic stability and accuracy in such challenging settings, we extend the numerical experiments presented in the main text and empirically demonstrate the advantages of the proposed Riemannian optimization methods in the ill-posed regime.

\paragraph{Results for Tensor Regression.} For tensor regression, we use the same tensor dimensions and rank: $(p_1, p_2, p_3) = (20, 20, 20)$ and $r = 3$. The noise variance of the design tensor is fixed at $\sigma^2 = 1$, and the sample size is set to $n = 2p^ {3/2} r$. The factor weights are defined as $\lambda_i = 2\kappa^{(i-2)/2}$ for $i = 1, 2, 3$, with the condition number $\kappa = 10$. We vary the standard deviation of the additive noise over $\{0, 0.5, 1\}$ and the coherence parameter over $\{0, 0.5, 0.75\}$. Figure~\ref{fig:kappa_regression_convergence} illustrates the iteration‐wise convergence of the relative Frobenius reconstruction error over 30 iterations, while Figure~\ref{fig:kappa_regression_boxplot} summarizes the error distributions after 30 iterations.

\begin{figure}[t]
  \centering
\includegraphics[width=\columnwidth]{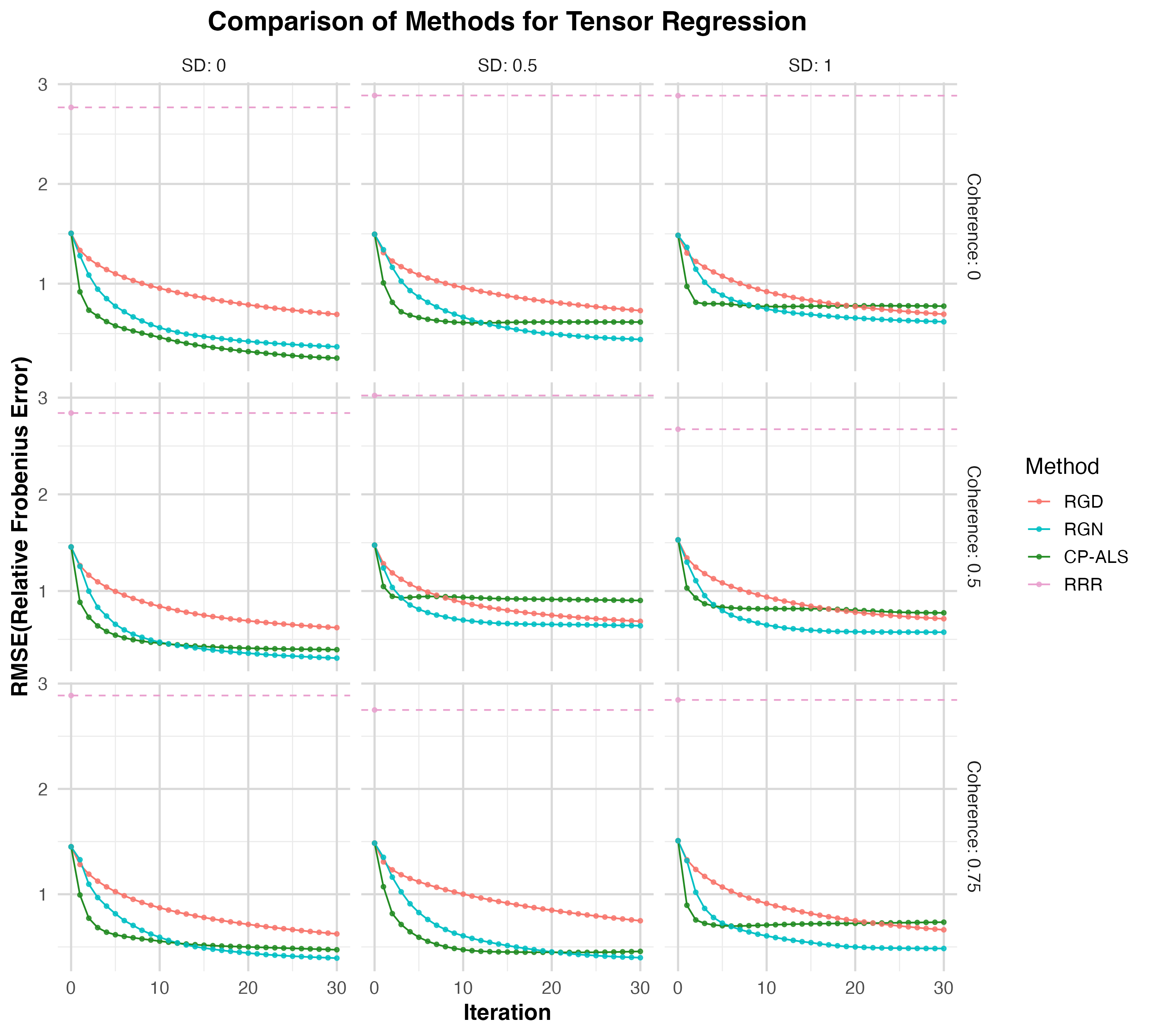}
  \caption{Convergence of the relative Frobenius reconstruction error over 30 iterations for various noise scales and coherence numbers. Curves are averaged over all 20 independent replicates.}  \label{fig:kappa_regression_convergence}
\end{figure}

\begin{figure}[t]
  \centering
\includegraphics[width=\columnwidth]{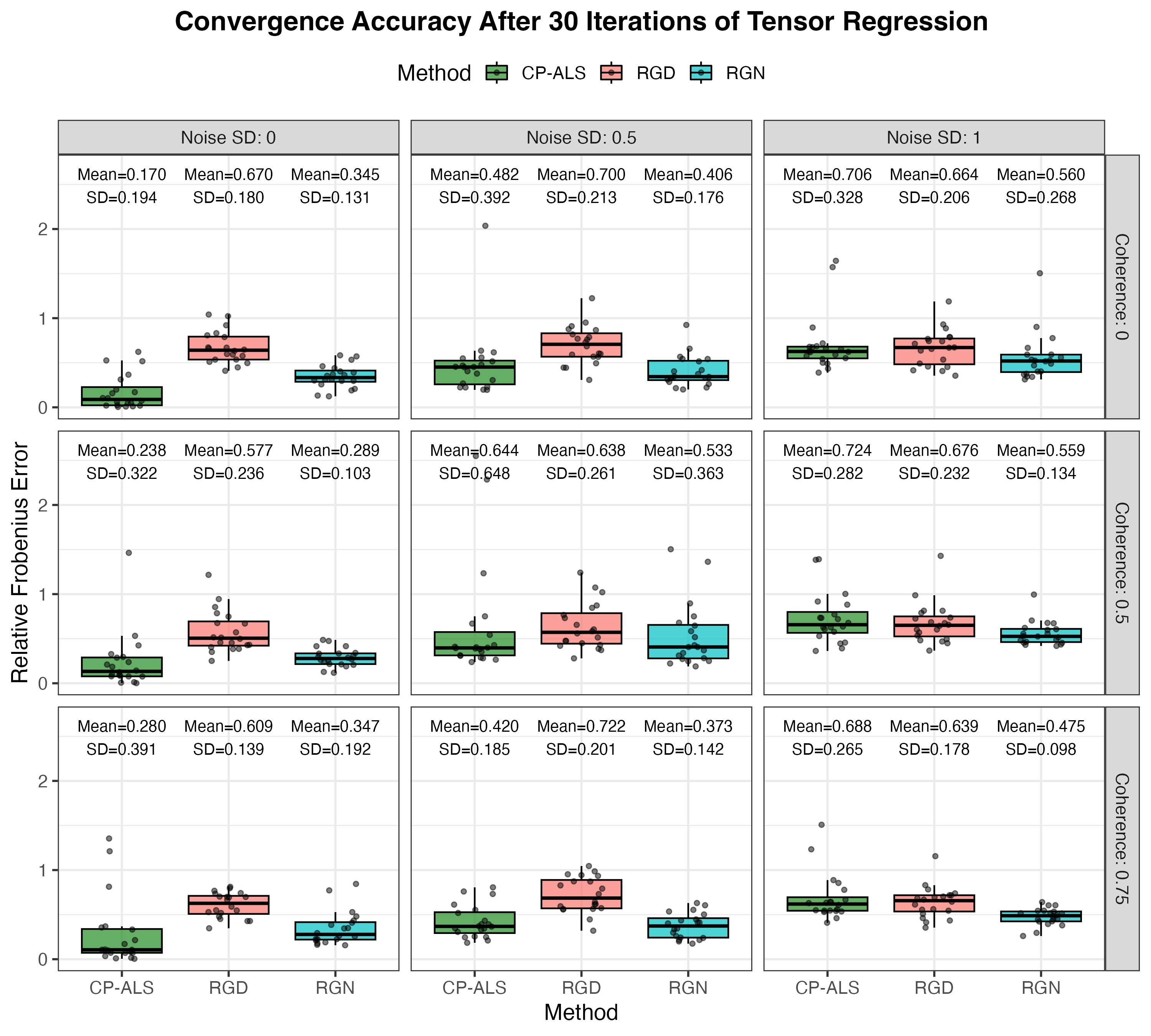}
  \caption{Error distributions after 30 iterations for various noise scales and coherence numbers. Boxes summarize the spread over 20 replicates.}
\label{fig:kappa_regression_boxplot}
\end{figure}

\paragraph{Results for CP Decomposition.} We fix the tensor dimensions to $(p_1, p_2, p_3) = (20, 20, 20)$ and set the CP rank to $r = 3$. The factor weights are defined as $\lambda_i = 2 \kappa^{(i-1)/2} p^{3/4}r^{1/2}$ for $i = 1, 2, 3$, with the condition number $\kappa = 10$. We vary the noise standard deviation and coherence as before. Figure~\ref{fig:kappa_decomposition_convergence} shows the convergence trajectory of the relative Frobenius reconstruction error over 30 iterations. Figure~\ref{fig:kappa_decomposition_boxplot} presents the distribution of reconstruction errors after 30 iterations.

\begin{figure}[t]
  \centering
\includegraphics[width=\columnwidth]{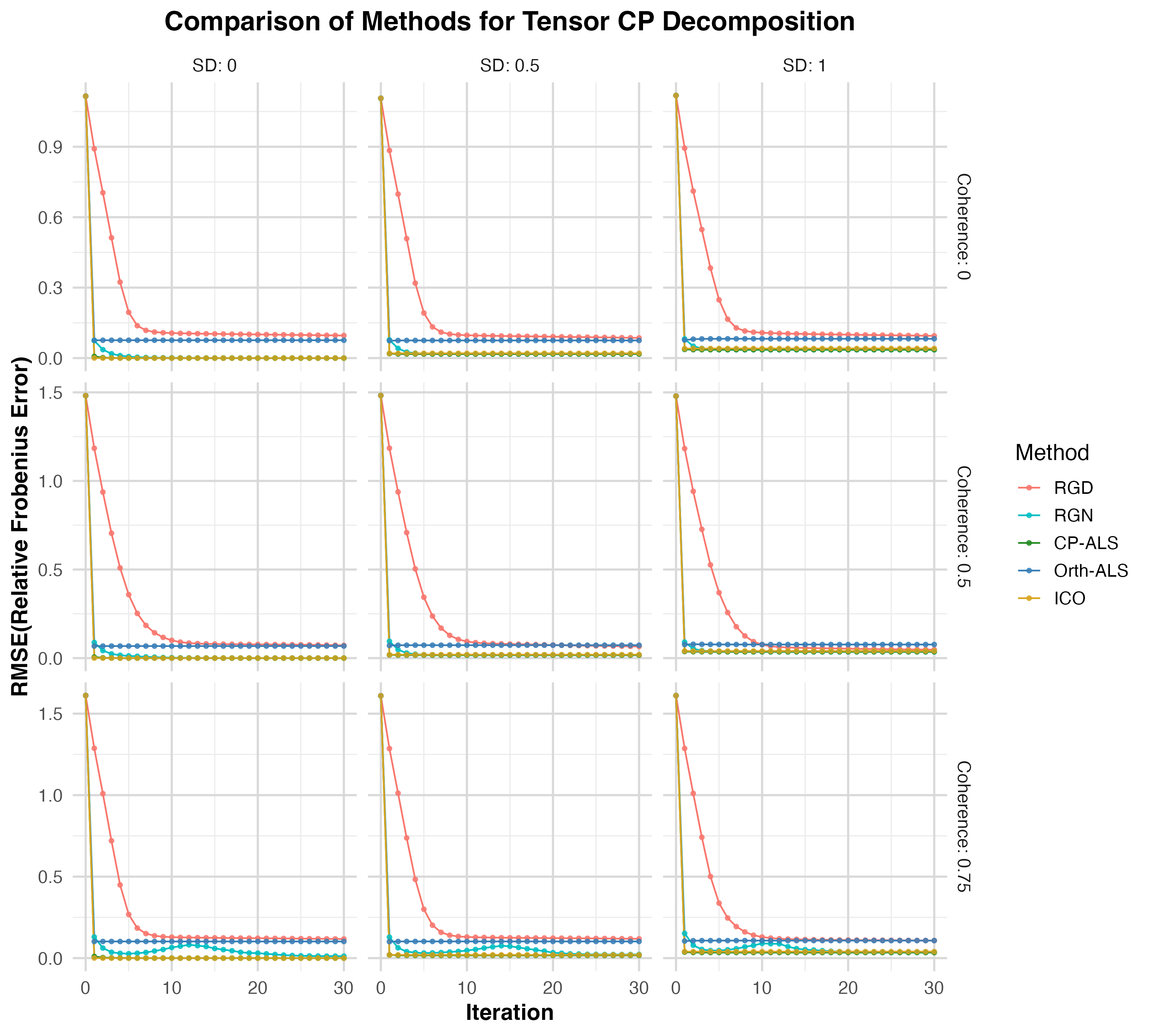}
  \caption{Convergence of the relative Frobenius reconstruction error over 30 iterations for various noise scales and coherence numbers. Curves are averaged over all 20 independent replicates.}
\label{fig:kappa_decomposition_convergence}
\end{figure}

\begin{figure}[t]
  \centering
\includegraphics[width=\columnwidth]{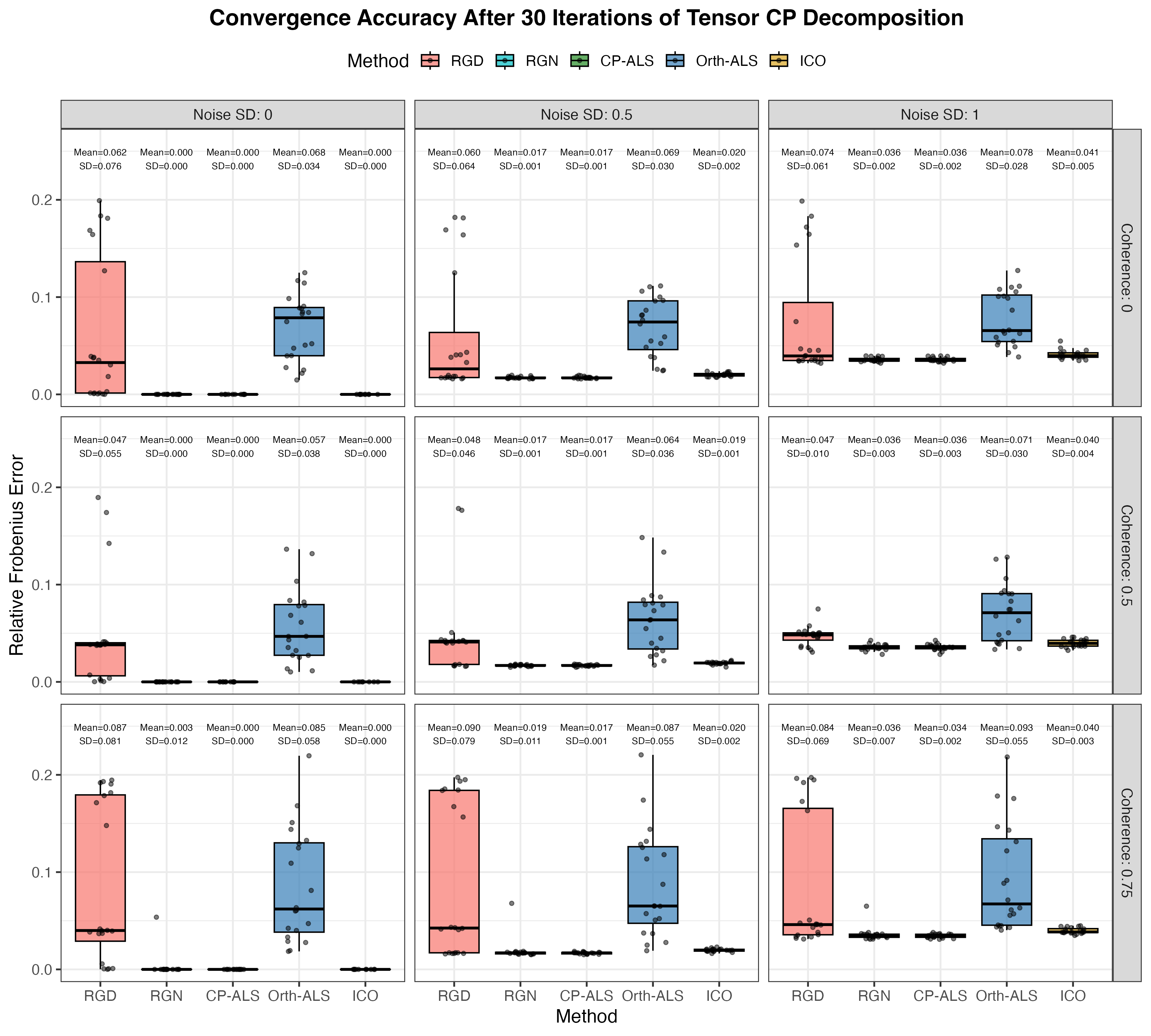}
  \caption{Error distributions after 30 iterations for various noise scales and coherence numbers. Boxes summarize the spread over 20 replicates.}
\label{fig:kappa_decomposition_boxplot}
\end{figure}

Overall, under the setting of tensor regression, the results show that the proposed RGN algorithm consistently outperforms CP-ALS in terms of reconstruction accuracy, particularly at increased coherence and noise levels. Under the setting of tensor CP decomposition, our RGN method outperforms Orthogonalized-ALS \citep{sharan2017orthogonalized} and ICO \citep{han2022tensor}, and attains quite similar performance compared with ALS.

\section{Additional Details on Algorithms}

In this section, we provide additional details on the algorithmic implementation and data generation for simulation in the main text.

\paragraph{Incoherence condition} To explore scenarios with non-orthogonal factors, we generate factor matrices whose columns achieve a prescribed level of pairwise coherence. Specifically, for a given coherence parameter $\rho \in [0,1)$ and target rank $R$, we first construct the $R \times R$ Gram matrix
\[
G_{ij} = \rho^{|i-j|}, \quad i,j \in [R],
\]
which corresponds to an autoregressive correlation structure of order one (AR(1)). We then compute the Cholesky factor $C$ of $G$ and embed it into $\mathbb{R}^p$ by stacking $R$ identity rows on top of $(p-R)$ zero rows, forming an initial matrix $Q_0 \in \mathbb{R}^{p \times R}$. Multiplying $Q_0$ with $C$ yields vectors with the desired correlation pattern, and each column is normalized to unit length.  

Finally, to avoid artificial alignment with the coordinate axes, we apply a random orthogonal rotation by multiplying with a Haar-distributed orthogonal matrix. The resulting factor matrix thus has columns with controlled coherence while preserving rotational invariance in $\mathbb{R}^p$. By varying $\rho$, we tune the similarity (``coherence'') between the factors: $\rho=0$ corresponds to orthogonal columns, whereas $\rho$ close to $1$ yields highly coherent columns.

\paragraph{Initialization}

For tensor regression, we employ the Composite Principal Component Analysis (\texttt{CPCA}) method proposed by \citet{han2022tensor} as a warm-start initialization. CPCA generates reliable initial estimates of the CP basis vectors by performing a specialized unfolding-refolding procedure followed by spectral decomposition. It has been shown that CPCA consistently outperforms the classical higher-order singular value decomposition (\texttt{T-HOSVD}) initialization \citep{de2000multilinear} in terms of the quality of the final solution.

In contrast, for tensor CP decomposition, we adopt random initialization. This choice aligns with the current theoretical framework, which establishes convergence guarantees under random initialization settings \citep{sharan2017orthogonalized}.

\subsection{Implementation details}

We provide implementation details for the algorithms evaluated in our experiments.

The Orthogonalized ALS (\texttt{Orth-ALS}) algorithm for tensor CP decomposition is adapted from the publicly available MATLAB implementation provided by \citet{sharan2017orthogonalized}. We adopt the version of Orth-ALS that performs orthogonalization before every ALS step. The CP-ALS algorithm for tensor CP decomposition is modified from the \texttt{CP} function in the \texttt{rTensor} R package. We extend the original implementation by incorporating custom initialization routines and error tracking at each iteration.

For tensor regression, the Reduced-Rank Regression (RRR) method is directly accessed via the \texttt{rrr()} function in the R package \texttt{MultiwayRegression}. The CP-ALS regression method is implemented by adapting the CP-ALS algorithm to the tensor regression setting. In each iteration, the algorithm solves a least squares problem to update each mode factor matrix while keeping the others fixed, similar in spirit to alternating least squares for CP decomposition, but applied to the regression loss.

All simulations and benchmarking experiments are performed in R (version 4.4.3) on a MacBook Air (2022) equipped with an Apple M2 chip and 8GB of RAM.

\subsection{Tensor CP decomposition}

\paragraph{Initialization for the CP decomposition} Here, we use a composite PCA (\texttt{CPCA}, Algorithm 4 in \cite{han2022tensor}) as a warm-start initialization for tensor CP decomposition. Let $p^* = \prod_{l \in [d]}p_l$. 

\begin{algorithm}[ht]
\caption{Composite PCA (CPCA) for general $N$-th order tensors \cite{han2022tensor}}
\label{alg:cpca}
\textbf{Input:} Noisy tensor $\calY$, CP rank $r$, subset $S\subset [d]$
\begin{algorithmic}[1]
\IF{$S = \varnothing$}
  \STATE Pick $S$ to maximize $\min(p_S, p^*/p_S)$ where 
  $p_S = \prod_{l\in S} p_k$ and $p=\prod_{l \in [d]} p_l$
\ENDIF
\STATE Unfold $T$ into a $p_S\times (p/p_S)$ matrix $\mat_S(\calT)$
\STATE Compute top‑$r$ SVD:
\[
\mat_S(\calT)
=\sum_{j=1}^r \hat\lambda_j^{\mathrm{cpca}}
  \,\hat u_j\,\hat v_j^\top
\]
\FOR{$i = 1$ \TO $r$}
  \FOR{$k \in S$}
    \STATE $\widehat{u}_{l, i}^{\mathrm{cpca}}
           \gets$ leading left singular vector of $\mat_l(\widehat{u}_i)$
  \ENDFOR
\ENDFOR
\STATE \textbf{return} 
  $\{\widehat{u}_{l, i}^{\mathrm{cpca}},\,\hat{\lambda}_i^{\mathrm{cpca}}\}_{l \in [d], i \in [r]}$
\end{algorithmic}
\end{algorithm}

\paragraph{Riemann Gradient Descent for Tensor Decomposition} 

\begin{algorithm}[h!]
\caption{Riemannian Gradient Descent for CP Tensor Decomposition}
\label{alg:RGD_CP_decomposition}
\textbf{Input:} Noisy tensor $\calY$, input CP rank $r$, step size $\alpha_t$, and $r$ rank-one tensor initialization $\left\{\calT_i^{(0)}\right\}_{i=1}^r$. \\
\begin{algorithmic}[1]

\FOR{$t = 0, 1, \dots, t_{\max}-1$}
\FOR{$i = 1, \dots, r$}
  \STATE \textbf{(RGD Update)} Compute
  \[
    \calT_i^{(t + 1)} = \mathcal{R}_{\calT_i^{(t)}}\left(\calT_i^{(t)} - \alpha_t  \calP_{\TT_i^{(t)}}\left(\sum_{i=1}^{r} \left(\calT_i^{(t)}\right) - \calY\right)\right),
  \]
  where $\alpha_t$ is the step size, $\calP_{\TT_i^{(t)}} \left(\cdot \right)$ denotes the projection onto the tangent space $\TT_i^{(t)}$ of Segre manifold at $\calT_i^{(t)}$, which is given by \eqref{eq:tangent space projection}, and $\mathcal{R}_{\calT_i^{(t)}}$ is a retraction given by \texttt{T-HOSVD}.
  \ENDFOR
\ENDFOR

\end{algorithmic}
\textbf{Output:} $\left\{\calT_i^{(t_{\max})}\right\}_{i=1}^r$.
\end{algorithm}

Let $U_l = \left[u_{l, 1}, u_{l, 2}, \cdots, u_{l, r}\right] \in \RR^{p_l \times r}$ for $l \in [d]$. Then we use $\max_{l \in [d]}\max_{i \in [r]} \left\|\widehat{u}_{l, i}\widehat{u}_{l, i}^{\top} - u_{l, i}u_{l, i}^{\top}\right\|$ as the error metric to check the convergence of the error contraction with respect to the number of iterations. Throughout the numerical experiments for RGD in this paper, we set a constant step size $\alpha_t\equiv 0.2$.

\paragraph{Riemann Gauss-Newton for Tensor Decomposition}

For tensor decomposition, Riemann-Gauss-Newton is equivalent to the case where the step size $\alpha_t\equiv 1$.

\begin{algorithm}[h!]
\caption{Riemannian Gauss-Newton for CP Tensor Decomposition}
\label{alg:RGN_CP_decomposition}
\textbf{Input:} Noisy tensor $\calY$, input CP rank $r$, and $r$ rank-one tensor initialization $\left\{\calT_i^{(0)}\right\}_{i=1}^r$. \\
\begin{algorithmic}[1]

\FOR{$t = 0, 1, \dots, t_{\max}-1$}
\FOR{$i = 1, \dots, r$}
  \STATE \textbf{(RGN Update)} 
  \[
    \calT_i^{(t + 1)} = \mathcal{R}_{\calT_i^{(t)}}\left(\calT_i^{(t)} -  \calP_{\TT_i^{(t)}}\left(\sum_{i=1}^{r} \calT_i^{(t)} - \calY\right)\right),
  \]
  where $\calP_{\TT_i^{(t)}} \left(\cdot \right)$ denotes the projection onto the tangent space $\TT_i^{(t)}$ of Segre manifold at $\calT_i^{(t)}$, which is given by \eqref{eq:tangent space projection}, and $\mathcal{R}_{\calT_i^{(t)}}$ is a retraction given by \texttt{T-HOSVD}.
  \ENDFOR
\ENDFOR

\end{algorithmic}
\textbf{Output:} $\left\{\calT_i^{(t_{\max})}\right\}_{i=1}^r$.
\end{algorithm}

\subsection{Tensor Regression}

\paragraph{Initialization for tensor regression}

To estimate the low-rank tensor coefficient in a regression setting, we adopt an initialization strategy based on the adjoint operator of the linear map $\calA$ induced by the covariates $\left\{\calX_m\right\}_{m=1}^n$. Specifically, the adjoint estimator is given by:
$$
\calA^*\left(\calY\right) = \frac{1}{n\sigma^2}\sum_{m=1}^n y_m \calX_m
$$
which provides a consistent but potentially noisy estimate of the true coefficient tensor under suitable conditions on the design tensors $\calX_m$ \cite{han2022optimal, zhang2020islet}. Following this, we compute a rank-$r$ approximation of $\calA^*(\calY)$ using CPCA as proposed by \citet{han2022tensor}. The result yields both singular values and orthonormal mode matrices $\left\{U_l\right\}_{l \in [d]}$ for initialization. In the implementation of our algorithm, we first rescale the observed data $\frac{1}{\sqrt{n\sigma}}\{\calX_m, y_m\}_{m=1}^n$.

\begin{algorithm}[ht]
\caption{Initialization of Low‐rank Tensor Regression}
\label{alg:init-low-rank-tensor}
\textbf{Input:}  (Rescaled) Observation $\{\calX_m, y_m\}_{m=1}^n$, input CP rank $r$
\begin{algorithmic}[1]
\STATE Compute 
$$
\widetilde{\calX} = \sum_{m=1}^n y_m\,\calX_m = \calA^*\left(\calY\right)
$$
\STATE Compute \texttt{CPCA} of $\widetilde{\calX}$:
$
\left(\Lambda,\,U_1, U_2, \cdots, U_d\right)
\gets\texttt{CPCA}(\tilde{\calX})
$
where \texttt{CPCA} is defined in \cite{han2022tensor}. Here, $\Lambda= \left(\lambda_1, \lambda_2, \cdots, \lambda_r\right) \in \RR^{r}$ and $U_l = \left(u_{l, 1}, u_{l, 2}, \cdots, u_{l, r}\right) \in \RR^{p_l \times r}$ for any $l \in [d]$.
\RETURN $\left(\Lambda,\,U_1, U_2, \cdots, U_d\right)$
\end{algorithmic}
\end{algorithm}

\paragraph{Riemann Gradient Descent for Tensor regression}

The RGD procedure updates each rank-one tensor component $\calT_i^{(t)} = \lambda_i u_{1, i}^{(t)} \otimes \cdots \otimes u_{d,i}^{(t)}$ iteratively via tangent space projections and retractions.

\begin{algorithm}[h!]
\caption{Riemannian Gradient Descent for CP Tensor Regression}
\label{alg:RGD_CP_regression}
\textbf{Input:} (Rescaled) Observation $\{\calX_m, y_m\}_{m=1}^n$, input CP rank $r$, step size $\alpha_t$, and $r$ rank-one tensor initialization $\left\{\calT_i^{(0)}\right\}_{i=1}^r$. \\
\begin{algorithmic}[1]

\FOR{$t = 0, 1, \dots, t_{\max}-1$}
\FOR{$i = 1, \dots, r$}
  \STATE \textbf{(RGD Update)} Compute
  \[
    \calT_i^{(t + 1)} = \mathcal{R}_{\calT_i^{(t)}}\left(\calT_i^{(t)} - \alpha_t  \calP_{\TT_i^{(t)}}\left(\sum_{i=1}^{r} \sum_{m=1}^n\left\langle \calX_m,  \calT_i^{(t)}\right\rangle \calX_m- \sum_{m=1}^ny_m \calX_m \right)\right),
  \]
  where $\alpha_t$ is the step size, $\calP_{\TT_i^{(t)}} \left(\cdot \right)$ denotes the projection onto the tangent space $\TT_i^{(t)}$ of Segre manifold at $\calT_i^{(t)}$, and $\mathcal{R}_{\calT_i^{(t)}}$ is a retraction given by \texttt{T-HOSVD}.
  \ENDFOR
\ENDFOR

\end{algorithmic}
\textbf{Output:} $\left\{\calT_i^{(t_{\max})}\right\}_{i=1}^r$.
\end{algorithm}

\paragraph{Riemann Gauss-Newton Update for Tensor regression}

\begin{algorithm}[h!]
\caption{Riemannian Gauss-Newton for CP Tensor Regression}
\label{alg:RGN_CP_regression}
\textbf{Input:} (Rescaled) Observation $\{\calX_m, y_m\}_{m=1}^n$, input CP rank $r$, and $r$ rank-one tensor initialization $\left\{\calT_i^{(0)}\right\}_{i=1}^r$. \\
\begin{algorithmic}[1]

\FOR{$t = 0, 1, \dots, t_{\max}-1$}
\FOR{$i = 1, \dots, r$}
\STATE 
  \STATE \textbf{(RGN Update)} 
  $$
  \begin{aligned}
    \calT_i^{(t + 1)} 
    = & \mathcal{R}_{\calT_i^{(t)}}\left(  \left(\calP_{\TT_i^{(t)}}\calA^*\calA\calP_{\TT_i^{(t)}}\right)^{+}\calP_{\TT_i^{(t)}}\calA^*\left(\calY - \calA\left(\sum_{j=1, j \neq i}^r \calT_j^{(t)}\right)\right) \right), \\
    = & \mathcal{R}_{\calT_i^{(t)}}\left(\left(\widetilde{A}^{(t),*}\widetilde{A}^{(t)}\right)^{+}\widetilde{A}^{(t), *}\left(\calY - \calA\left(\sum_{j=1, j \neq i}^r \calT_j^{(t)}\right)\right)\right)
  \end{aligned}
  $$
  where $+$ denotes the Moore-Penrose pseudo inverse, $\left[\calA\left(\calT\right)\right]_m = \left\langle \calX_m, \calT\right\rangle$, $\calA^*\left(\calY\right) = \sum_{m=1}^n \calY_m \calX_m$,  $\left[\widetilde{\calA}\left(\calT\right)\right]_m = \left\langle \calP_{\TT_i^{(t)}}\left(\calX_m\right), \calT\right\rangle$ for any $m=1,2 \cdots, n$ while $\widetilde{\calA}^{(t), *}\left(\calY\right) = \sum_{m=1}^n \calY_m \calP_{\TT_i^{(t)}}\left(\calX_m\right)$, $\calP_{\TT_i^{(t)}} \left(\cdot \right): \RR^{p_1\times p_2 \times \cdots \times p_d} \rightarrow \RR^{p_1\times p_2 \times \cdots \times p_d}$ denotes the projection onto the tangent space $\TT_i^{(t)}$ of Segre manifold at $\calT_i^{(t)}$ is given by \eqref{eq:tangent space projection}, and $\mathcal{R}_{\calT_i^{(t)}}$ is a retraction given by \texttt{T-HOSVD}.
\ENDFOR
\ENDFOR

\end{algorithmic}
\textbf{Output:} $\left\{\calT_i^{(t_{\max})}\right\}_{i=1}^r$.
\end{algorithm}

At each iteration, the $i$-th component is updated by solving a least-squares problem restricted to the tangent space $\TT_i^{(t)}$ and subsequently retracting back onto the Segre manifold. Concretely, we first compute
  \[
  \calT_i^{(t + 0.5)} = \arg\min_{\calT_i \in \TT_i^{(t)}} \frac{1}{2}\left\|\mathbf{\calY} - \calA\left(\calT_i + \sum_{j \neq i}^{r}\calT_j^{(t)}\right)\right\|_{\mathrm{F}}^2
  \]
and then retract:
  \[
    \calT_i^{(t + 1)} = \mathcal{R}_{\calT_i^{(t)}}\left(\calT_i^{(t + 0.5)}\right).
  \]

This update can be interpreted as solving a linear regression problem using a design matrix composed of the projected tensors $\calP_{\TT_i^{(t)}}(\calX_m)$. The associated normal equation takes the form:
$$
\left(\vec\left(\calP_{\TT_i^{(t)}} \calX\right)^{\top} \vec\left(\calP_{\TT_i^{(t)}} \calX\right)\right)^{+} \sum_{m=1}^n \calY_m \vec\left(\calP_{\TT_i^{(t)}} \calX_m\right) .
$$

Using the factorization structure of the projection, this expression can be expanded as:
$$
\sum_{k=1}^d\left[\widehat{u}_{1, i} \widehat{u}_{1, i}^{\top} \otimes \cdots \otimes\left(I_{p_k}-\widehat{u}_{k, i} \widehat{u}_{k, i}^{\top}\right) \otimes \cdots \otimes \widehat{u}_{d, i} \widehat{u}_{d, i}^{\top}\right]\left(\sum_{m=1}^n \vec\left(\calX_m\right) \vec\left(\calX_m\right)^{\top}\right)^{-1} \sum_{m=1}^n \calY_m \vec\left(\calP_{\TT_i^{(t)}} \calX_m\right) .
$$

We note that the operator $\calP_{\TT_i^{(t)}}$ acts as an orthogonal projection in either tensor space $\RR^{p_1 \times \cdots \times p_d}$ or its vectorized counterpart $\RR^{p_1p_2\cdots p_d}$. Without loss of generality, we use the same notation in both contexts.

To reduce computational cost, we exploit an orthonormal basis representation:
$$
\begin{aligned}
& \left(U_{i}^{(t)}\vec\left(\calX\right)^{\top}\vec\left(\calX\right)U_i^{(t)}\right)^{-1}
\end{aligned}
$$
where $Ui^{(t)} \in \mathbb{O}^{p_1p_2\cdots p_d \times (1 + \sum{l=1}^d(p_l))}$ spans the tangent space $\TT_i^{(t)}$. This reparameterization transforms the Gram matrix computation into:
$$
\left(U_i^{(t), \top}\left[\sum_{m=1}^n \vec\left(\calX_m\right) \vec\left(\calX_m\right)^{\top}\right] U_i^{(t)}\right)^{-1},
$$
which lies in a much lower-dimensional space of size $\left(1 + \sum_{l=1}^d (p_l - 1)\right) \times \left(1 + \sum_{l=1}^d (p_l - 1)\right)$, thus significantly improving numerical efficiency.

\section{Proof of Main Theorems}
\label{sec:proof_main_theorems}

In this section, we provide the proofs of error bounds incurred by Riemannian updates.

\begin{proof}[Proof of Theorem~\ref{thm:local_convergence_rgd}]

We prove the noisy‐case bound; the noise‐free result follows at once by setting $\calE=0$. Throughout, for any $j=1,2,\cdots, d$, we assume each estimate stays sign‐aligned with its true tensor:
\[
\operatorname{sgn}\left\langle\calT_j^{(t)}, \calT_j\right\rangle = \prod_{l \in [d]} \operatorname{sgn}\left(u_{l,j}^{(t)}u_{l,j}\right) >0 .
\]

Then, consider
\begin{align*}
& \left\|\calT_i^{(t+1)}- \calT_i\right\|_{\mathrm{F}} = \left\|\mathcal{R}_{\calT_i^{(t)}}\left(-\alpha_t \calA^*\left(\sum_{i=1}^r\calA\left(\calT_i\right) - \calY\right)\right)-\calT_i\right\|_{\mathrm{F}} \\
= & \left\|\mathcal{R}_{\calT_i^{(t)}}\left(-\alpha_t \calA^*\left(\sum_{i=1}^r\calA\left(\calT_i\right) - \calY\right)\right) - \left(\calT_i^{(t)} -  \alpha_t \calA^*\left(\sum_{i=1}^r\calA\left(\calT_i\right) - \calY\right)\right) \right\|_{\mathrm{F}}\\
+ & \left\|\left(\calT_i^{(t)} -  \alpha_t \calA^*\left(\sum_{i=1}^r\calA\left(\calT_i\right) - \calY\right)\right) - \calT_i\right\| \\
\leqslant & \left(\sqrt{d}+1\right)\left\|\calT_i^{(t)} - \alpha_t \sum_{i=1}^r \calA^*\left(\sum_{i=1}^r\calA\left(\calT_i\right) - \calY\right) - \calT_i\right\|_{\mathrm{F}} \\
= & \left(\sqrt{d}+1\right)\left\|\left(\calT_i^{(t)} -  \calT_i\right) - \alpha_{t} \calP_{\TT_i^{(t)}}\calA^*\calA\left(\calT^{(t)} - \calT\right)\right\|_{\mathrm{F}} + \left(\sqrt{d}+1\right)\cdot \alpha_{t} \left\|\calP_{\TT_i^{(t)}}\left(\calA^*\calE\right)\right\|_{\mathrm{F}} 
\end{align*}
where the first inequality follows from
\begin{align*}
& \left\|\mathcal{R}_{\calT_i^{(t)}}\left(-\alpha_t \calA^*\left(\sum_{i=1}^r \calA\left(\calT_i\right)-\calY\right)\right)-\left(\calT_i^{(t)}-\alpha_t \calA^*\left(\sum_{i=1}^r \calA\left(\calT_i\right)-\calY\right)\right)\right\|_{\mathrm{F}} \\
\leqslant & \sqrt{d}\left\|\calP_{\mathcal{M}_1}\left(\calT_i^{(t)}-\alpha_t \calA^*\left(\sum_{i=1}^r \calA\left(\calT_i\right)-\calY\right)\right)-\left(\calT_i^{(t)}-\alpha_t \calA^*\left(\sum_{i=1}^r \calA\left(\calT_i\right)-\calY\right)\right)\right\|_{\mathrm{F}} \\
= & \sqrt{d}\left\|\calP_{\mathcal{M}_1}\left(\calT_i^{(t)}-\alpha_t \calA^*\left(\sum_{i=1}^r \calA\left(\calT_i\right)-\calY\right) - \calT_i\right)-\left(\calT_i^{(t)}-\alpha_t \calA^*\left(\sum_{i=1}^r \calA\left(\calT_i\right)-\calY\right) - \calT_i\right)\right\|_{\mathrm{F}} \\
= & \sqrt{d} \left\|\calP_{\mathcal{M}_1}^{\perp}\left(\calT_i^{(t)}-\alpha_t \calA^*\left(\sum_{i=1}^r \calA\left(\calT_i\right)-\calY\right) - \calT_i\right)\right\|_{\mathrm{F}} \\
\leqslant & \sqrt{d} \left\|\calT_i^{(t)}-\alpha_t \calA^*\left(\sum_{i=1}^r \calA\left(\calT_i\right)-\calY\right) - \calT_i\right\|_{\mathrm{F}}.
\end{align*}
where $\calP_{\mathcal{M}_1}$ is the projection operator onto the rank-one tensor manifold by Proposition 3 in \cite{luo2024tensor} (see also Chapter 10 in \cite{hackbusch2012tensor}).

Here, we have the following further decomposition:
\begin{align*}
& \left\|\left(\calT_i^{(t)}-\calT_i\right)-\alpha_t \calP_{\TT_i^{(t)}} \calA^* \calA\left(\calT^{(t)}-\calT\right)\right\|_{\mathrm{F}} \\
= & \underbrace{\left\|\calP_{\TT_i^{(t)}}\left(\calT_i^{(t)}-\calT_i\right)-\alpha_t \calP_{\TT_i^{(t)}} \calA^* \calA \calP_{\TT_i^{(t)}}\left(\calT_i^{(t)}-\calT_i\right)\right\|_{\mathrm{F}}}_{\text{I}} + \underbrace{\left\|\calP_{\TT_i^{(t)}}^{\perp}\left(\calT_i^{(t)}-\calT_i\right)\right\|_{\mathrm{F}}}_{\text{II}}\\
+ & \alpha_t \underbrace{\left\|\calP_{\TT_i^{(t)}} \calA^* \calA \calP_{\TT_i^{(t)}}^{\perp}\left(\calT_i^{(t)}-\calT_i\right)\right\|_{\mathrm{F}}}_{\text{III}} + \alpha_t \underbrace{\left\|\calP_{\TT_i^{(t)}} \calA^* \calA \calP_{\TT_i^{(t)}}\sum_{j \neq i}^r\left(\calT_j^{(t)}-\calT_j\right)\right\|_{\mathrm{F}}}_{\text{IV}} \\
+ & \alpha_t\underbrace{\left\|\calP_{\TT_i^{(t)}} \calA^* \calA \calP_{\TT_i^{(t)}}^{\perp}\sum_{j \neq i}^r\calP_{\TT_{\calT_j^{(t)}}}\left(\calT_j^{(t)}-\calT_j\right)\right\|_{\mathrm{F}}}_{\text{V}} + \alpha_t\underbrace{\left\|\calP_{\TT_i^{(t)}} \calA^* \calA \calP_{\TT_i^{(t)}}^{\perp}\sum_{j \neq i}^r\calP_{\TT_{\calT_j^{(t)}}}^{\perp}\left(\calT_j^{(t)}-\calT_j\right)\right\|_{\mathrm{F}}}_{\text{VI}}. 
\end{align*}

First, by \eqref{eq:PTip(Tihat - Ti)} of Lemma~\ref{lemma: perturbation bound}, we have
\begin{align*}
\text{II}= \left\|\calP_{\TT_i^{(t)}}^{\perp}\left(\calT_i^{(t)}-\calT_i\right)\right\| 
\leqslant & 3d \cdot  \frac{\left\|\calT_i^{(t)}-\calT_i\right\|_{\mathrm{F}}^2}{\lambda_i}.
\end{align*}

Then, by the same argument, we have
\begin{align*}
\text{III}
= \left\|\calP_{\TT_i^{(t)}} \calA^* \calA \calP_{\TT_i^{(t)}}^{\perp}\left(\calT_i^{(t)}-\calT_i\right)\right\|_{\mathrm{F}} 
\leqslant & 2 \sup_{V \in \operatorname{Seg}}\left\|\calP_{\TT_i^{(t)}} \calA^* \calA \calP_{\TT_i^{(t)}}^{\perp}V\right\| \cdot \left\|\calP_{\TT_i^{(t)}}^{\perp}\left(\calT_i^{(t)}-\calT_i\right)\right\|_{\mathrm{F}} \\
\leqslant &  2\sup_{V \in \operatorname{Seg}}\left\|\calP_{\TT_i^{(t)}} \calA^* \calA \calP_{\TT_i^{(t)}}^{\perp}V\right\| \cdot \frac{\left\|\calT_i^{(t)}-\calT_i\right\|_{\mathrm{F}}^2}{\lambda_i},
\end{align*}
and
\begin{align*}
\text{VI}
= & \left\|\calP_{\TT_i^{(t)}} \calA^* \calA \calP_{\TT_i^{(t)}}^{\perp} \sum_{j \neq i}^r\calP_{\TT_{\calT_j^{(t)}}}^{\perp}\left(\calT_j^{(t)}-\calT_j\right)\right\| \\
\leqslant & 2\sup_{V \in \operatorname{Seg}} \left\|\calP_{\TT_i^{(t)}} \calA^* \calA \calP_{\TT_i^{(t)}}^{\perp}\calP_{\TT_{\calT_j^{(t)}}}^{\perp}V\right\| \cdot \sum_{j \neq i}^r\left\|\calP_{\TT_{\calT_j^{(t)}}}^{\perp}\left(\calT_j^{(t)}-\calT_j\right)\right\| \\
\leqslant & 2\sup_{V \in \operatorname{Seg}} \left\|\calP_{\TT_i^{(t)}} \calA^* \calA \calP_{\TT_i^{(t)}}^{\perp} \calP_{\TT_{\calT_j^{(t)}}}^{\perp}V\right\| \cdot \sqrt{\frac{d(d-1)}{2}} \cdot \sum_{j \neq i}^r\frac{\left\|\calT_j^{(t)}-\calT_j\right\|_{\mathrm{F}}^2}{\lambda_j}.
\end{align*}

Here, by Lemma~\ref{lemma: perturbation bound}, it follows that
\begin{align*}
\text{IV}
= & \left\|\calP_{\TT_i^{(t)}} \calA^* \calA \calP_{\TT_i^{(t)}}\sum_{j \neq i}^r\left(\calT_j^{(t)}-\calT_j\right)\right\| \\
\leqslant & \sum_{j \neq i}^r\left\|\calP_{\TT_i^{(t)}} \calA^* \mathcal{A P}_{\TT_i^{(t)}}\right\| \cdot \left\|\calP_{\TT_i^{(t)}}\left(\calT_j^{(t)}-\calT_j\right)\right\| \\
\leqslant & \left\|\calP_{\TT_i^{(t)}} \calA^* \mathcal{A P}_{\TT_i^{(t)}}\right\| \cdot \sqrt{2}\left(d+1\right) \sum_{j \neq i}^r\left\|\calT_j^{(t)}-\calT_j\right\|_{\mathrm{F}} \left[\left(\frac{\left\|\calT_j^{(t)}-\calT_j\right\|_{\mathrm{F}}}{\lambda_j} + \eta\right)^{d-1} + \frac{\left\|\calT_i^{(t)}-\calT_i\right\|}{\lambda_i}\right].
\end{align*}
where the second inequality follows from \eqref{eq:PTip(Tjhat - Tj)}.

Furthermore, we have
\begin{align*}
\text{I}
= & \left\|\calP_{\TT_i^{(t)}}\left(\calT_i^{(t)}-\calT_i\right)-\alpha_t \calP_{\TT_i^{(t)}} \calA^* \calA \calP_{\TT_i^{(t)}}\left(\calT_i^{(t)}-\calT_i\right)\right\|_{\mathrm{F}} \\
\leqslant & \left\|\calP_{\TT_i^{(t)}}\left(I - \alpha_t \calA^*\calA\calP_{\TT_i^{(t)}}\right)\calP_{\TT_i^{(t)}}\right\|\cdot \left\|\calT_i^{(t)}-\calT_i\right\|_{\mathrm{F}},
\end{align*}
and
\begin{align*}
\text{V}
= & \left\|\calP_{\TT_i^{(t)}} \calA^* \calA \calP_{\TT_i^{(t)}}^{\perp}\sum_{j \neq i}^r\calP_{\TT_{\calT_j^{(t)}}}\left(\calT_j^{(t)}-\calT_j\right)\right\|_{\mathrm{F}} \\
\leqslant & \sum_{j \neq i}^r \left\|\calP_{\TT_i^{(t)}} \calA^* \calA \calP_{\TT_i^{(t)}}^{\perp}\calP_{\TT_{\calT_j^{(t)}}}\right\| \cdot \left\|\calT_j^{(t)}-\calT_j\right\|_{\mathrm{F}}.
\end{align*}

Therefore, combining the results above, we have
\begin{align*}
& \left\|\calT_i^{(t+1)}- \calT_i\right\|_{\mathrm{F}} \\
\leqslant &  \left(\sqrt{d}+1\right)\left\|\left(\calT_i^{(t)} -  \calT_i\right) - \alpha_{t} \calP_{\TT_i^{(t)}}\calA^*\calA\left(\calT^{(t)} - \calT\right)\right\|_{\mathrm{F}} + \left(\sqrt{d}+1\right)\cdot \alpha_{t} \left\|\calP_{\TT_i^{(t)}}\left(\calA^*\calE\right)\right\|_{\mathrm{F}} \\
\leqslant & \left(\sqrt{d}+1\right)\left[\underbrace{\left\|\calP_{\TT_i^{(t)}}\left(I - \alpha_t \calA^*\calA\calP_{\TT_i^{(t)}}\right)\calP_{\TT_i^{(t)}}\right\|\cdot \left\|\calT_i^{(t)}-\calT_i\right\|_{\mathrm{F}}}_{\text{upper bound of I}} + \alpha_t\underbrace{\sum_{j \neq i}^r \left\|\calP_{\TT_i^{(t)}} \calA^* \calA \calP_{\TT_i^{(t)}}^{\perp}\calP_{\TT_{\calT_j^{(t)}}}\right\| \cdot \left\|\calT_j^{(t)}-\calT_j\right\|_{\mathrm{F}}}_{\text{upper bound of V}}\right] \\
+ & \left(\sqrt{d}+1\right) \cdot \underbrace{\sqrt{\frac{d(d-1)}{2}} \cdot  \frac{\left\|\calT_i^{(t)}-\calT_i\right\|_{\mathrm{F}}^2}{\lambda_i}}_{\text{upper bound of II}} +  \left(\sqrt{d}+1\right)\alpha_t \cdot \underbrace{2\sup_{V \in \operatorname{Seg}} \left\|\calP_{\TT_i^{(t)}} \calA^* \calA \calP_{\TT_i^{(t)}}^{\perp}V\right\| \cdot \frac{\left\|\calT_i^{(t)}-\calT_i\right\|_{\mathrm{F}}^2}{\lambda_i}}_{\text{upper bound of III}} \\
+ & \left(\sqrt{d}+1\right) \alpha_t \underbrace{\left\|\calP_{\TT_i^{(t)}} \calA^* \mathcal{A P}_{\TT_i^{(t)}}\right\| \cdot \sqrt{2}\left(d+1\right) \sum_{j \neq i}^r\left\|\calT_j^{(t)}-\calT_j\right\|_{\mathrm{F}} \cdot \left\{\left[\frac{\left\|\calT_j^{(t)}-\calT_j\right\|_{\mathrm{F}}}{\lambda_j} + \eta\right]^{d-1} + \frac{\left\|\calT_i^{(t)}-\calT_i\right\|}{\lambda_i}\right\}}_{\text{upper bound of IV}} \\
+ & \left(\sqrt{d}+1\right)\alpha_t \underbrace{\sqrt{2d(d-1)} \cdot \sum_{j \neq i}^r \sup_{V \in \operatorname{Seg}}\left\|\calP_{\TT_i^{(t)}} \calA^* \calA \calP_{\TT_i^{(t)}}^{\perp}\calP_{\TT_j^{(t)}}^{\perp}V\right\| \cdot \frac{\left\|\calT_j^{(t)}-\calT_j\right\|_{\mathrm{F}}^2}{\lambda_j}}_{\text{upper bound of VI}} + \left(\sqrt{d}+1\right)\cdot \alpha_{t} \left\|\calP_{\TT_i^{(t)}}\left(\calA^*\calE\right)\right\|_{\mathrm{F}}.
\end{align*}

It further implies that
\begin{align*}
& \max_{i \in [r]} \frac{\left\|\calT_i^{(t+1)}- \calT_i\right\|_{\mathrm{F}}}{\lambda_i} \\
\leqslant & \left(\sqrt{d}+1\right)\cdot \left(\left\|\calP_{\TT_i^{(t)}}\left(I - \alpha_t \calA^*\calA\calP_{\TT_i^{(t)}}\right)\calP_{\TT_i^{(t)}}\right\|  + 2\left(r-1\right)\alpha_t\kappa \max_{\substack{i,j \in [r],\\ i \neq j}}\left\|\calP_{\TT_i^{(t)}} \calA^* \calA \calP_{\TT_i^{(t)}}^{\perp}\calP_{\TT_j^{(t)}}\right\|\right) \cdot \max_{i \in [r]} \frac{\left\|\calT_i^{(t+1)}- \calT_i\right\|_{\mathrm{F}}}{\lambda_i} \\
+ & \left(\sqrt{d}+1\right)^3 \cdot \left[1 + 2r\alpha_t \cdot \max_{i,j \in [r], i \neq j} \sup_{V \in \operatorname{Seg}}\left\|\calP_{\TT_i^{(t)}} \calA^* \calA \calP_{\TT_i^{(t)}}^{\perp}\calP_{\TT_j^{(t)}}^{\perp}V\right\|\right] \cdot \max_{i \in [r]} \frac{\left\|\calT_i^{(t)}-\calT_i\right\|_{\mathrm{F}}^2}{\lambda_i^2} \\
+ & 2r\alpha_t\kappa\left(\sqrt{d}+1\right)^3 \max_{i \in [r]}\left\|\calP_{\TT_i^{(t)}} \calA^* \calA \calP_{\TT_i^{(t)}}\right\| \cdot \max_{i \in [r]}\frac{\left\|\calT_i^{(t)}-\calT_i\right\|_{\mathrm{F}}}{\lambda_i} \cdot \left[\left(\max_{i \in [r]}\frac{\left\|\calT_i^{(t)}-\calT_i\right\|_{\mathrm{F}}}{\lambda_i} + \eta\right)^{d-1} + \max_{i \in [r]}\frac{\left\|\calT_i^{(t)}-\calT_i\right\|}{\lambda_i}\right] \\
+ & \left(\sqrt{d}+1\right)\cdot \alpha_{t} \max_{i \in [r]}\frac{\left\|\calP_{\TT_i^{(t)}}\left(\calA^*\calE\right)\right\|_{\mathrm{F}}}{\lambda_i},
\end{align*}
i.e.,
\begin{align*}
\varepsilon^{(t+1)} 
\leqslant & \left(\sqrt{d}+1\right)\cdot \left(\left\|\calP_{\TT_i^{(t)}}\left(I - \alpha_t \calA^*\calA\calP_{\TT_i^{(t)}}\right)\calP_{\TT_i^{(t)}}\right\|  + 2\left(r-1\right)\alpha_t\kappa \max_{\substack{i,j \in [r],\\ i \neq j}}\left\|\calP_{\TT_i^{(t)}} \calA^* \calA \calP_{\TT_i^{(t)}}^{\perp}\calP_{\TT_j^{(t)}}\right\|\right) \cdot \varepsilon^{(t)}  \\
+ & \left(\sqrt{d}+1\right)^3 \cdot \left[1 + 2r\alpha_t \cdot \max_{i,j \in [r], i \neq j} \sup_{V \in \operatorname{Seg}}\left\|\calP_{\TT_i^{(t)}} \calA^* \calA \calP_{\TT_i^{(t)}}^{\perp}\calP_{\TT_j^{(t)}}^{\perp}V\right\|\right] \cdot \left(\varepsilon^{(t)} \right)^2 \\
+ & 2r\alpha_t\kappa\left(\sqrt{d}+1\right)^3 \max_{i \in [r]}\left\|\calP_{\TT_i^{(t)}} \calA^* \mathcal{A P}_{\TT_i^{(t)}}\right\| \cdot \varepsilon^{(t)} \cdot \left[\left(\varepsilon^{(t)} + \eta\right)^{d-1} + \varepsilon^{(t)}\right] \\
+ & \left(\sqrt{d}+1\right)\cdot \alpha_{t} \max_{i \in [r]}\frac{\left\|\calP_{\TT_i^{(t)}}\left(\calA^*\calE\right)\right\|_{\mathrm{F}}}{\lambda_i}.
\end{align*}

\end{proof}

\begin{proof}[Proof of Theorem~\ref{thm:local_convergence_rgn}]

First, notice that the convergence result in the noiseless setting follows easily from the noisy setting $\calE=0$. We prove the convergence result in the noisy case. In the sequel, we will also assume without loss of generality that at iteration $t$ each estimated component remains sign-aligned with its ground truth.
$$
\begin{aligned}
\operatorname{sgn}\left\langle\calT_j^{(t)}, \calT_j\right\rangle = \prod_{l \in [d]} \operatorname{sgn}\left(u_{l,j}^{(t)}u_{l,j}\right) >0 
\end{aligned}
$$
for any $j=1,2,\cdots, d$.

Then, consider
$$
\begin{aligned}
\left\|\calT_i^{(t+1)}-\calT_i\right\|_{\mathrm{F}} 
= & \left(\sqrt{d}+1\right)\left\|\left(\calP_{\TT_i^{(t)}} \calA^* \calA \calP_{\TT_i^{(t)}}\right)^{-1} \calA^*\calA\calP_{\TT_i^{(t)}}\left(\calT_i + \sum_{j\neq i}^r\left(\calT_j - \calT_j^{(t)}\right)\right)-\calP_{\TT_i^{(t)}}\calT_i\right\|_{\mathrm{F}} \\
+ & \left(\sqrt{d}+1\right)\left\|\left(\calP_{\TT_i^{(t)}} \calA^* \calA \calP_{\TT_i^{(t)}}\right)^{-1} \calA^*\calA\calP_{\TT_i^{(t)}}^{\perp}\left(\calT_i + \sum_{j\neq i}^r\left(\calT_j - \calT_j^{(t)}\right)\right)-\calP_{\TT_i^{(t)}}^{\perp}\calT_i\right\|_{\mathrm{F}} \\
+ & \left(\sqrt{d}+1\right)\left\|\left(\calP_{\TT_i^{(t)}} \calA^* \calA \calP_{\TT_i^{(t)}}\right)^{-1} \calA^*\left(\calE\right)\right\|_{\mathrm{F}} \\
= & \left(\sqrt{d}+1\right)\underbrace{\left\|\left(\calP_{\TT_i^{(t)}} \calA^* \calA \calP_{\TT_i^{(t)}}\right)^{-1} \calA^*\calA\calP_{\TT_i^{(t)}}\sum_{j\neq i}^r\left(\calT_j - \calT_j^{(t)}\right)\right\|_{\mathrm{F}}}_{\text{I}} \\
+ & \left(\sqrt{d}+1\right)\underbrace{\left\|\left(\calP_{\TT_i^{(t)}} \calA^* \calA \calP_{\TT_i^{(t)}}\right)^{-1} \calA^*\calA\calP_{\TT_i^{(t)}}^{\perp}\calT_i\right\|_{\mathrm{F}}}_{\text{II}} \\
+ & \left(\sqrt{d}+1\right) \underbrace{\left\|\left(\calP_{\TT_i^{(t)}} \calA^* \calA \calP_{\TT_i^{(t)}}\right)^{-1} \calA^*\calA\calP_{\TT_i^{(t)}}^{\perp}\calP_{\TT_{\calT_j^{(t)}}}\left(\sum_{j\neq i}^r\left(\calT_j - \calT_j^{(t)}\right)\right)\right\|_{\mathrm{F}}}_{\text{III}} \\
+ & \left(\sqrt{d}+1\right) \underbrace{\left\|\left(\calP_{\TT_i^{(t)}} \calA^* \calA \calP_{\TT_i^{(t)}}\right)^{-1} \calA^*\calA\calP_{\TT_i^{(t)}}^{\perp}\calP_{\TT_{\calT_j^{(t)}}}^{\perp}\left(\sum_{j\neq i}^r\left(\calT_j - \calT_j^{(t)}\right)\right)\right\|_{\mathrm{F}}}_{\text{IV}} \\
+ & \left(\sqrt{d}+1\right) \underbrace{\left\|\calP_{\TT_i^{(t)}}^{\perp}\calT_i\right\|_{\mathrm{F}}}_{\text{V}} + \left(\sqrt{d}+1\right)\left\|\left(\calP_{\TT_i^{(t)}} \calA^* \calA \calP_{\TT_i^{(t)}}\right)^{-1} \calA^*\left(\calE\right)\right\|_{\mathrm{F}}.
\end{aligned}
$$

Here,
$$
\begin{aligned}
\text{I} = & \left\|\left(\calP_{\TT_i^{(t)}} \calA^* \mathcal{A P}_{\TT_{\TT_i^{(t)}}}\right)^{-1} \calA^* \calA \calP_{\TT_i^{(t)}} \sum_{j \neq i}^r\left(\calT_j-\calT_j^{(t)}\right)\right\|_{\mathrm{F}}
= \left\|\calP_{\TT_i^{(t)}} \sum_{j \neq i}^r\left(\calT_j-\calT_j^{(t)}\right)\right\|_{\mathrm{F}} \\
\leqslant & \sum_{j \neq i}^r \left\|\calP_{\TT_i^{(t)}}\left(\calT_j-\calT_j^{(t)}\right)\right\|_{\mathrm{F}} \leqslant \sqrt{2}\left(d+1\right) \cdot \left\|\calT_i^{(t)}-\calT_i\right\|_{\mathrm{F}} \cdot \left[\left(\frac{\left\|\calT_i^{(t)}-\calT_i\right\|_{\mathrm{F}}}{\lambda_i} + \eta\right)^{d-1} + \frac{\left\|\calT_j^{(t)}-\calT_j\right\|_{\mathrm{F}}}{\lambda_i}\right],
\end{aligned}
$$
where the second inequality follows from \eqref{eq:PTip(Tjhat - Tj)} in Lemma~\ref{lemma: perturbation bound}.

Then, consider
$$
\begin{aligned}
\text{II} = & \left\|\left(\calP_{\TT_i^{(t)}} \calA^* \calA \calP_{\TT_i^{(t)}}\right)^{-1} \calA^* \calA \calP_{\TT_i^{(t)}}^{\perp} \calT_i\right\|_{\mathrm{F}} \\
\leqslant & \sup_{V \in \operatorname{Seg}}\left\|\left(\calP_{\TT_i^{(t)}} \calA^* \calA \calP_{\TT_i^{(t)}}\right)^{-1} \calA^* \calA \calP_{\TT_i^{(t)}}^{\perp}V\right\| \cdot \left\|\calP_{\TT_i^{(t)}}^{\perp} \calT_i\right\|_{\mathrm{F}} \\
\leqslant & \sup_{V \in \operatorname{Seg}}\left\|\left(\calP_{\TT_i^{(t)}} \calA^* \calA \calP_{\TT_i^{(t)}}\right)^{-1} \calA^* \calA \calP_{\TT_i^{(t)}}^{\perp}V\right\| \cdot \sqrt{\frac{d(d-1)}{2}} \cdot  \frac{\left\|\calT_i^{(t)}-\calT_i\right\|_{\mathrm{F}}^2}{\lambda_i}
\end{aligned}
$$
where the second inequality follows from \eqref{eq:PTip(Tihat - Ti)} in Lemma~\ref{lemma: perturbation bound}.

By the same arguments, we have
$$
\begin{aligned}
\text{V} = & \left\|\calP_{\TT_i^{(t)}}^{\perp} \calT_i\right\|_{\mathrm{F}} 
\leqslant 3d \cdot  \frac{\left\|\calT_i^{(t)}-\calT_i\right\|_{\mathrm{F}}^2}{\lambda_i}
\end{aligned}
$$
and
$$
\begin{aligned}
\text{IV} = & \left\|\left(\calP_{\TT_i^{(t)}} \calA^* \calA \calP_{\TT_i^{(t)}}\right)^{-1} \calA^* \calA \calP_{\TT_i^{(t)}}^{\perp} \calP_{\TT_{\calT_j^{(t)}}}^{\perp}\left(\sum_{j \neq i}^r\left(\calT_j -\calT_j^{(t)}\right)\right)\right\|_{\mathrm{F}} \\
\leqslant & \sum_{j \neq i}^r 2\sup_{V \in \operatorname{Seg}}\left\|\left(\calP_{\TT_i^{(t)}} \calA^* \calA \calP_{\TT_i^{(t)}}\right)^{-1} \calA^* \calA \calP_{\TT_i^{(t)}}^{\perp} \calP_{\TT_{\calT_j
^{(t)}}}^{\perp} V\right\| \cdot 3d \cdot \frac{\left\|\calT_j^{(t)}-\calT_j\right\|_{\mathrm{F}}^2}{\lambda_j} .
\end{aligned}
$$

Furthermore, we have
$$
\begin{aligned}
\text{III} 
= & 
\left\|\left(\calP_{\TT_i^{(t)}} \calA^* \calA \calP_{\TT_i^{(t)}}\right)^{-1} \calA^*\calA\calP_{\TT_i^{(t)}}^{\perp}\left(\sum_{j\neq i}^r \calP_{\TT_{\calT_j^{(t)}}}\left(\calT_j - \calT_j^{(t)}\right)\right)\right\|_{\mathrm{F}} \\
\leqslant & \sum_{j\neq i}^r \left\|\left(\calP_{\TT_i^{(t)}} \calA^* \calA \calP_{\TT_i^{(t)}}\right)^{-1} \calA^*\calA\calP_{\TT_i^{(t)}}^{\perp}\calP_{\TT_{\calT_j^{(t)}}}\right\| \cdot \left\|\calT_j - \calT_j^{(t)}\right\|_{\mathrm{F}}.
\end{aligned}
$$

Combining all the results above, we have
$$
\begin{aligned}
& \left\|\calT_i^{(t+1)}-\calT_i\right\|_{\mathrm{F}} \\
\leqslant & \left(\sqrt{d}+1\right) \sum_{j\neq i}^r \left\|\left(\calP_{\TT_i^{(t)}} \calA^* \calA \calP_{\TT_i^{(t)}}\right)^{-1} \calA^*\calA\calP_{\TT_i^{(t)}}^{\perp}\calP_{\TT_{\calT_j^{(t)}}}\right\| \cdot \left\|\calT_j - \calT_j^{(t)}\right\|_{\mathrm{F}} \\
+ & 2\left(\sqrt{d}+1\right)^3 \cdot \left\|\calT_i^{(t)}-\calT_i\right\|_{\mathrm{F}} \cdot \left\{\left[\frac{\left\|\calT_i^{(t)}-\calT_i\right\|_{\mathrm{F}}}{\lambda_i} + \eta\right]^{d-1} + \frac{\left\|\calT_j^{(t)}-\calT_j\right\|}{\lambda_i}\right\} \\
+ & \left(\sqrt{d}+1\right)^3 \cdot \left(\frac{\left\|\calT_i^{(t)}-\calT_i\right\|_{\mathrm{F}}^2}{\lambda_i} + 2 \sup_{V \in \operatorname{Seg}} \left\|\left(\calP_{\TT_i^{(t)}} \calA^* \calA \calP_{\TT_i^{(t)}}\right)^{-1} \calA^* \calA \calP_{\TT_i^{(t)}}^{\perp} V\right\| \cdot \sum_{j=1}^r \frac{\left\|\calT_j^{(t)}-\calT_j\right\|_{\mathrm{F}}^2}{\lambda_j} \right) \\
+ & \left(\sqrt{d}+1\right)\left\|\left(\calP_{\TT_i^{(t)}} \calA^* \calA \calP_{\TT_i^{(t)}}\right)^{-1} \calA^*\left(\calE\right)\right\|_{\mathrm{F}}.
\end{aligned}
$$

It further implies that
\begin{align*}
& \max_{i \in [r]} \frac{\left\|\calT_i^{(t+1)}-\calT_i\right\|_{\mathrm{F}}}{\lambda_i} \\
\leqslant & \left(\sqrt{d}+1\right) \cdot \left(r-1\right)\kappa\cdot \max_{i, j \in [r], i\neq j} \left\|\left(\calP_{\TT_i^{(t)}} \calA^* \calA \calP_{\TT_i^{(t)}}\right)^{-1} \calA^*\calA\calP_{\TT_i^{(t)}}^{\perp}\calP_{\TT_{\calT_j^{(t)}}}\right\| \cdot \max_{i \in [r]} \frac{\left\|\calT_i^{(t+1)}-\calT_i\right\|_{\mathrm{F}}}{\lambda_i} \\
+ & 2\left(\sqrt{d}+1\right)^3 \cdot \max_{i \in [r]}\frac{\left\|\calT_i^{(t)}-\calT_i\right\|_{\mathrm{F}}}{\lambda_i} \cdot \left[\left(\max_{i \in [r]}\frac{\left\|\calT_i^{(t)}-\calT_i\right\|_{\mathrm{F}}}{\lambda_i} + \eta\right)^{d-1} + \max_{i \in [r]}\frac{\left\|\calT_i^{(t)}-\calT_i\right\|_{\mathrm{F}}}{\lambda_i}\right] \\
+ & \left(\sqrt{d}+1\right)^3 \cdot \left(1 + 2\kappa r \max_{i,j \in [r], i \neq j} \sup_{V \in }\left\|\left(\calP_{\TT_i^{(t)}} \calA^* \calA \calP_{\TT_i^{(t)}}\right)^{-1} \calA^* \calA \calP_{\TT_i^{(t)}}^{\perp}\calP_{\TT_{\calT_j^{(t)}}}^{\perp}V\right\|\right) \cdot \max_{i \in [r]} \frac{\left\|\calT_i^{(t)}-\calT_i\right\|_{\mathrm{F}}^2}{\lambda_i^2} \\
+ & \left(\sqrt{d}+1\right)\cdot \max_{i \in [r]}\frac{\left\|\left(\calP_{\TT_i^{(t)}} \calA^* \calA \calP_{\TT_i^{(t)}}\right)^{-1} \calA^*\left(\calE\right)\right\|_{\mathrm{F}}}{\lambda_i},
\end{align*}
i.e.,
\begin{align*}
& \varepsilon^{(t+1)} \\
\leqslant & \left(\sqrt{d}+1\right) \cdot \left(r-1\right)\kappa\cdot \max_{i, j \in [r], i\neq j} \left\|\left(\calP_{\TT_i^{(t)}} \calA^* \calA \calP_{\TT_i^{(t)}}\right)^{-1} \calA^*\calA\calP_{\TT_i^{(t)}}^{\perp}\calP_{\TT_{\calT_j^{(t)}}}\right\| \cdot \varepsilon^{(t)} \\
+ & 2\left(\sqrt{d}+1\right)^3 \cdot \varepsilon^{(t)} \cdot \left[\left(\varepsilon^{(t)} + \eta\right)^{d-1} + \varepsilon^{(t)}\right] \\
+ & \left(\sqrt{d}+1\right)^3 \cdot \left(1 + 2\kappa r \max_{i,j \in [r], i\neq j} \sup_{V \in \operatorname{Seg}}\left\|\left(\calP_{\TT_i^{(t)}} \calA^* \calA \calP_{\TT_i^{(t)}}\right)^{-1} \calA^* \calA \calP_{\TT_i^{(t)}}^{\perp}\calP_{\TT_{\calT_j^{(t)}}}^{\perp}V\right\|\right) \cdot \left(\varepsilon^{(t)}\right)^2 \\
+ & \left(\sqrt{d}+1\right)\cdot \max_{i \in [r]}\frac{\left\|\left(\calP_{\TT_i^{(t)}} \calA^* \calA \calP_{\TT_i^{(t)}}\right)^{-1} \calA^*\left(\calE\right)\right\|_{\mathrm{F}}}{\lambda_i}.
\end{align*}
\end{proof}

\section{Proof of Corollaries}

Here, we provide the proof of more general versions of corollaries in Section~\ref{subsec:applications}. 

\begin{corollary}
Assuming that the estimated singular vectors are sign–aligned, i.e. $\operatorname{sgn}\left\langle\calT_i^{(t)}, \calT_i\right\rangle$ for any $i\in [r]$. Let $\varepsilon^{(t)} = \max_{i \in [r]} \frac{\left\|\calT_i^{(t+1)}-\calT_i\right\|_{\mathrm{F}}}{\lambda_i}$. Then for all $t \geqslant 0$, the RGD update leads to:
\begin{align*}
\varepsilon^{(t+1)} 
\leqslant & \left(\sqrt{d}+1\right)\cdot \left(1- \alpha_t\right) \cdot \varepsilon^{(t)} + \left(\sqrt{d}+1\right)\cdot \alpha_{t} \sqrt{\bar{p}  r} / \lambda_r \\
+ & \left(\sqrt{d}+1\right)^3 \cdot \left[\left(\varepsilon^{(t)}\right)^2 + 2 r\alpha_t \kappa \cdot  \varepsilon^{(t)} \cdot \left(\varepsilon^{(t)} + \eta\right)^{d-1}\right].
\end{align*}

Similarly, for all $t \geqslant 0$, the RGN update leads to:
\begin{align*}
\varepsilon^{(t+1)} 
\leqslant & 3\left(\sqrt{d}+1\right)^3 \cdot \varepsilon^{(t)} \cdot \left[\left(\varepsilon^{(t)} + \eta\right)^{d-1} + \varepsilon^{(t)}\right] + \left(\sqrt{d}+1\right)\cdot \sigma\sqrt{\bar{p}  r} / \lambda_r.
\end{align*}

\end{corollary}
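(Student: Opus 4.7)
The plan is to obtain both bounds by specializing Theorems~\ref{thm:local_convergence_rgd} and \ref{thm:local_convergence_rgn} to the tensor decomposition setting, where $\calA = \mathrm{Id}$ and hence $\calA^*\calA = \mathrm{Id}$. Under this substitution, most of the operator norms appearing in the general theorems collapse dramatically, so the proof is essentially an algebraic specialization plus a probabilistic bound on the noise term.

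More concretely, the first step is to plug $\calA^*\calA = \mathrm{Id}$ into every structural coefficient of Theorem~\ref{thm:local_convergence_rgd}. This yields: $\|\calP_{\TT_i^{(t)}}(I - \alpha_t \calA^*\calA)\calP_{\TT_i^{(t)}}\| = 1-\alpha_t$ (using $\alpha_t \in (0,1]$ and that a projector has unit spectral norm); the cross-component operator $\calP_{\TT_i^{(t)}}\calA^*\calA\calP_{\TT_i^{(t)}}^{\perp}\calP_{\TT_j^{(t)}} = \calP_{\TT_i^{(t)}}\calP_{\TT_i^{(t)}}^{\perp}\calP_{\TT_j^{(t)}} = 0$; the supremum $\sup_{V}\|(\calP_{\TT_i^{(t)}})^{-1}\calP_{\TT_i^{(t)}}^{\perp}V\| = 0$ for the same reason, since the restricted inverse annihilates the complement; and $\|\calP_{\TT_i^{(t)}}\calA^*\calA\calP_{\TT_i^{(t)}}\| = 1$. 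After substitution, the first-order contraction reduces to $(\sqrt{d}+1)(1-\alpha_t)\varepsilon^{(t)}$, one of the two second-order contributions vanishes entirely, and the surviving second-order term becomes $2r\alpha_t\kappa(\sqrt{d}+1)^3 \varepsilon^{(t)}[(\varepsilon^{(t)}+\eta)^{d-1} + \varepsilon^{(t)}]$, of which the extra $(\varepsilon^{(t)})^2$ can be absorbed into the ambient $(\sqrt{d}+1)^3(\varepsilon^{(t)})^2$ term, producing exactly the form claimed. For RGN, the same simplifications render $(\calP_{\TT_i^{(t)}}\calA^*\calA\calP_{\TT_i^{(t)}})^{-1}\calA^*\calA\calP_{\TT_i^{(t)}}^{\perp} = 0$, killing the entire first-order contraction as well as all but the $(1\cdot)$ factor of the second-order term; adding the residual $(\sqrt{d}+1)^3(\varepsilon^{(t)})^2$ to $2(\sqrt{d}+1)^3\varepsilon^{(t)}[(\varepsilon^{(t)}+\eta)^{d-1}+\varepsilon^{(t)}]$ via the trivial bound $(\varepsilon^{(t)})^2 \leq \varepsilon^{(t)}[(\varepsilon^{(t)}+\eta)^{d-1}+\varepsilon^{(t)}]$ yields the stated coefficient~$3$.

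The remaining ingredient is the noise term, which becomes $(\sqrt{d}+1)\alpha_t \max_{i \in [r]}\|\calP_{\TT_i^{(t)}}\calE\|_{\mathrm{F}}/\lambda_i$ in both algorithms (since the Gauss--Newton preconditioner reduces to the identity on $\TT_i^{(t)}$). I would use $\lambda_i \geq \lambda_r$ to pull out the condition number, and then bound $\max_{i \in [r]}\|\calP_{\TT_i^{(t)}}\calE\|_{\mathrm{F}}$ via Gaussian concentration: for each fixed rank-one tensor the tangent space has dimension $1 + \sum_{l \in [d]}(p_l-1) \lesssim d\bar p$, so $\|\calP_{\TT_V}\calE\|_{\mathrm{F}} \lesssim \bar\sigma\sqrt{d\bar p}$ with probability at least $1-\exp(-c\bar p)$ by standard sub-Gaussian chaining (cf.~\cite{vershynin2018high}). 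Sliding the $\sqrt{d}$ into the external $(\sqrt{d}+1)$ prefactor and absorbing the $r$-dependence (e.g.~a union bound over the indices, together with $\log r \leq r$) gives the stated $\sqrt{\bar p r}$ envelope.

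The principal obstacle is not the algebra but the fact that $\TT_i^{(t)}$ is itself a random subspace depending on $\calE$ through the trajectory of the algorithm; consequently one cannot apply Gaussian concentration on a fixed subspace and must instead control $\sup_{V \in \operatorname{Seg}}\|\calP_{\TT_V}\calE\|_{\mathrm{F}}$. I would address this via an $\epsilon$-net on the Segre manifold (dimension $\lesssim d\bar p$), bound the projector on each net element by Gaussian concentration, and use Lipschitz continuity of $V \mapsto \calP_{\TT_V}$ on a bounded region (guaranteed by the sign-alignment hypothesis and the uniform separation from the singular locus) to pass to the supremum. The resulting bound holds on a single high-probability event shared across all iterations $t$, yielding the deterministic recursion as stated.
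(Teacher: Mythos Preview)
Your proposal is correct and follows essentially the same route as the paper: specialize Theorems~\ref{thm:local_convergence_rgd} and~\ref{thm:local_convergence_rgn} to $\calA=\mathrm{Id}$, observe that every operator of the form $\calP_{\TT_i^{(t)}}\calA^*\calA\calP_{\TT_i^{(t)}}^{\perp}$ collapses to $\calP_{\TT_i^{(t)}}\calP_{\TT_i^{(t)}}^{\perp}=0$, and then control the noise term probabilistically. Your treatment is in fact more careful than the paper's own proof, which simply asserts $\max_i\|\calP_{\TT_i^{(t)}}\calE\|_{\mathrm{F}}/\lambda_i\lesssim\sqrt{\bar p}$ without addressing the dependence of $\TT_i^{(t)}$ on $\calE$; your $\epsilon$-net over the Segre manifold is the right way to make this uniform.
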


\begin{remark}

By setting $\frac{3}{4} \leqslant  1 - \frac{1}{4(\sqrt{d}+1)} \leqslant \alpha_ t \leqslant 1$, $\left(\sqrt{d}+1\right)^3 \cdot 2\kappa r \alpha_t \eta^{d-1} \leqslant \frac{1}{2}$ and $\varepsilon^{(t)} \leqslant \frac{1}{8\left(1+2\kappa r \alpha_t\right) \cdot \left(\sqrt{d} +1\right)^3}$, it follows that 
\begin{align*}
& \left(\sqrt{d}+1\right)\cdot \left(1- \alpha_t\right) \cdot \varepsilon^{(t)} + \left(\sqrt{d}+1\right)^3 \cdot \left[ \left(\varepsilon^{(t)}\right)^2 + 2 r\alpha_t \kappa \cdot  \varepsilon^{(t)} \cdot \left(\varepsilon^{(t)} + \eta\right)^{d-1}\right] \\
\leqslant & \left(\frac{1}{6} + \frac{1}{8} \cdot \frac{1}{1+3r\alpha_t\kappa} \cdot \varepsilon^{(t)} + \frac{r\alpha_t\kappa}{4\left(1+3r\alpha_t\kappa\right)} \varepsilon^{(t)} \cdot \frac{1}{1 - \frac{1}{4}}  + \frac{1}{6} \right) \cdot \varepsilon^{(t)}\\
\leqslant & \left(\frac{1}{6} + \frac{1}{8} \cdot \frac{1}{1 + 3 \cdot (1-1/6))} + \frac{1}{12} \cdot \frac{4}{3} + \frac{1}{6}\right) \cdot \varepsilon^{(t)}  < \frac{1}{2} \varepsilon^{(t)}.
\end{align*} 
Note that Algorithm~\ref{alg:RGN_CP_decomposition}, corresponding to the convergence rate of the Riemann Gauss-Newton method for tensor CP decomposition, is the special case of Riemann Gradient Descent when the step size $\alpha_t \equiv 1$. Then Corollary~\ref{corllary:rate_RGD_decomposition} follows. Furthermore, Corollary~\ref{corllary:rate_RGD_regression} follows from similar arguments with an extra assumption that $\gamma = \max_{l \in [d]} \sqrt{\frac{\bar p}{n}}$ is sufficiently small.

\end{remark}

\begin{proof}

For the CP tensor decomposition, we have $\calA = \operatorname{Id}: \RR^{p_1 \times p_2 \times \cdots \times p_d} \rightarrow \RR^{p_1 \times p_2 \times \cdots \times p_d}$. Then we know that
$$
\max _{i \in[r]} \frac{\left\|\left(\calP_{\TT_i^{(t)}} \calA^* \calA \calP_{\TT_i^{(t)}}\right)^{-1} \calA^*(\calE)\right\|_{\mathrm{F}}}{\lambda_i} = \frac{\left\|\left(\calP_{\TT_i^{(t)}} \calA^* \calA \calP_{\TT_i^{(t)}}\right)^{-1} \calA^*(\calE)\right\|_{\mathrm{F}}}{\lambda_i}\leqslant C\sqrt{p}
$$

Therefore, the RGD update is equivalent to:
\begin{align*}
\varepsilon^{(t+1)} 
\leqslant & \left(\sqrt{d}+1\right)\cdot \left(1- \alpha_t\right) \cdot \varepsilon^{(t)} + \left(\sqrt{d}+1\right)\cdot \alpha_{t} \frac{\sqrt{\bar{p}  r}}{\lambda_r} \\
+ & \left(\sqrt{d}+1\right)^3 \cdot \left[ \left(\varepsilon^{(t)}\right)^2 + 2 r\alpha_t\kappa \cdot  \varepsilon^{(t)} \cdot \left(\varepsilon^{(t)} + \eta\right)^{d-1}\right].
\end{align*}

Furthermore, the RGN update leads to:
\begin{align*}
\varepsilon^{(t+1)} 
\leqslant & 3\left(\sqrt{d}+1\right)^3 \cdot \varepsilon^{(t)} \cdot \left[\left(\varepsilon^{(t)} + \eta\right)^{d-1} + \varepsilon^{(t)}\right] + \left(\sqrt{d}+1\right)\cdot \frac{\sqrt{\bar{p} r}}{\lambda_r}.
\end{align*}

\end{proof}

\begin{corollary}
Assuming that the estimated singular vectors are sign–aligned, i.e. $\operatorname{sgn}\left\langle\calT_i^{(t)}, \calT_i\right\rangle$ for any $i\in [r]$. Let $\varepsilon^{(t)} = \max_{i \in [r]} \frac{\left\|\calT_i^{(t+1)}-\calT_i\right\|_{\mathrm{F}}}{\lambda_i}$ and let $\gamma = \max_{l \in [d]}\sqrt{\overline{p} /n}$ be sufficiently small. Then for all $t \geqslant 0$, the RGD update leads to:
\begin{align*}  
\varepsilon^{(t+1)} 
\leqslant & \left(\sqrt{d} + 1\right) \cdot \left(1-\alpha_t\right) \varepsilon^{(t)} + \left(\sqrt{d} + 1\right)^3 \cdot \left(\varepsilon^{(t)}\right)^2 \\
+ & 2r\alpha_t\left(\sqrt{d}+1\right)^3 \cdot \varepsilon^{(t)} \cdot \left[\left(\varepsilon^{(t)} + \eta\right)^{d-1} + \varepsilon^{(t)}\right].
\end{align*}
Similarly, for all $t \geqslant 0$, the RGN update leads to:
$$
\begin{aligned}
\varepsilon^{(t+1)} 
\leqslant & 2\left(\sqrt{d}+1\right)^3 \cdot \varepsilon^{(t)} \cdot \left(\varepsilon^{(t)} + \eta\right)^{d-1} + 3\left(\sqrt{d}+1\right)^3 \cdot \left(1 + \kappa r \right) \cdot \left(\varepsilon^{(t)}\right)^2 \\
+ & 3\left(\sqrt{d}+1\right)\cdot \frac{1}{\lambda_r}\sqrt{\frac{\bar{p}}{n}} .
\end{aligned}
$$

\end{corollary}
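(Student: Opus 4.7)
The plan is to specialize Theorems~\ref{thm:local_convergence_rgd} and~\ref{thm:local_convergence_rgn} to the tensor-regression operator $\calA$ by controlling each data-dependent operator norm that appears in the three-term bounds. The main tool is a restricted isometry property (RIP) for Gaussian $\calA$ on the low-dimensional subspaces that arise as tangent spaces of $\operatorname{Seg}$. Since $\dim(\TT_i^{(t)}) = 1+\sum_{l}(p_l-1) \asymp d\bar p$, a standard $\varepsilon$-net plus sub-exponential concentration argument (cf.\ Proposition 1 of \cite{luo2024tensor}) shows that, on an event of probability at least $1-\exp(-c\bar p)$, every operator $\calP_{\TT_i^{(t)}}\calA^*\calA\calP_{\TT_i^{(t)}}$ is $(1\pm\gamma)$-isometric, uniformly over the iterates under consideration; here $\gamma = \sqrt{\bar p/n}$ is the RIP constant.

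First I would handle the RGD bound. Theorem~\ref{thm:local_convergence_rgd} requires estimates of (i) $\|\calP_{\TT_i^{(t)}}(I-\alpha_t\calA^*\calA)\calP_{\TT_i^{(t)}}\|$, (ii) the cross term $\|\calP_{\TT_i^{(t)}}\calA^*\calA\calP_{\TT_i^{(t)}}^\perp\calP_{\TT_j^{(t)}}\|$ for $i\neq j$, (iii) the curvature term $\sup_{V\in\operatorname{Seg}}\|\calP_{\TT_i^{(t)}}\calA^*\calA\calP_{\TT_i^{(t)}}^\perp V\|$, (iv) $\|\calP_{\TT_i^{(t)}}\calA^*\calA\calP_{\TT_i^{(t)}}\|$, and (v) the noise contribution $\|\calP_{\TT_i^{(t)}}\calA^*\calE\|_{\mathrm F}/\lambda_i$. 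Item (i) is bounded by $(1-\alpha_t)+\alpha_t\gamma$ by RIP; items (ii)--(iii) are bounded by $\gamma$ via the same $\varepsilon$-net argument applied to the direct sum of the two tangent spaces, exploiting that the rank-one Segre manifold itself has small covering numbers; item (iv) is bounded by $1+\gamma$; and (v) is bounded by $C\overline{\sigma}_\xi\sqrt{d\bar p}/(\sigma\lambda_r\sqrt n)$ by standard Gaussian tail bounds on a $d\bar p$-dimensional projection of $\calA^*\calE$. Substituting into Theorem~\ref{thm:local_convergence_rgd} and absorbing the $\gamma$-terms into constants (since $\gamma$ is sufficiently small) yields the stated RGD recursion.

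For the RGN bound I would follow the same template on Theorem~\ref{thm:local_convergence_rgn}. The two new ingredients are the inverse Gauss--Newton terms $\|(\calP_{\TT_i^{(t)}}\calA^*\calA\calP_{\TT_i^{(t)}})^{-1}\calA^*\calA\calP_{\TT_i^{(t)}}^\perp\calP_{\TT_j^{(t)}}\|$ and $\sup_V\|(\calP_{\TT_i^{(t)}}\calA^*\calA\calP_{\TT_i^{(t)}})^{-1}\calA^*\calA\calP_{\TT_i^{(t)}}^\perp V\|$. Using the RIP lower bound $\calP_{\TT_i^{(t)}}\calA^*\calA\calP_{\TT_i^{(t)}} \succeq (1-\gamma)\,\calP_{\TT_i^{(t)}}$ together with the numerator bounds from the RGD step, both are $O(\gamma/(1-\gamma))$. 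The inverse-preconditioned noise term $\|(\calP_{\TT_i^{(t)}}\calA^*\calA\calP_{\TT_i^{(t)}})^{-1}\calA^*\calE\|_{\mathrm F}/\lambda_i$ is likewise controlled by $C\overline{\sigma}_\xi\sqrt{d\bar p}/(\sigma\lambda_r\sqrt n)$ up to a $(1-\gamma)^{-1}$ factor. Plugging these into Theorem~\ref{thm:local_convergence_rgn}, collecting the first-order piece into the $\varepsilon^{(t)}(\varepsilon^{(t)}+\eta)^{d-1}$ quadratic-phase term, and bundling constants of the form $(1\pm\gamma)^{\pm 1}$ into the numerical prefactors $2$ and $3$ gives the RGN recursion.

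The main obstacle is the uniformity of the RIP-type bounds in items (ii)--(iii): the tangent spaces $\TT_i^{(t)}$ and $\TT_j^{(t)}$ depend on the iterates and hence on $\calA$ itself, so one cannot fix them a priori. The standard remedy is to take a uniform bound over all rank-one tangent spaces by an $\varepsilon$-net on $\SS^{p_1-1}\times\cdots\times\SS^{p_d-1}$, whose metric entropy is $O(d\bar p\log(1/\varepsilon))$; this is exactly the reason the probability takes the stated form $1-\exp(-c\bar p)$ and the reason $\gamma = \sqrt{\bar p/n}$ is the right scale. Once this uniform RIP is in hand, the rest of the argument is bookkeeping through the three-term bounds of Theorems~\ref{thm:local_convergence_rgd} and~\ref{thm:local_convergence_rgn}.
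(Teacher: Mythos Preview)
Your proposal is correct and follows the same high-level strategy as the paper: specialize Theorems~\ref{thm:local_convergence_rgd} and~\ref{thm:local_convergence_rgn} by bounding each data-dependent operator norm via Gaussian concentration on the low-dimensional tangent subspaces, then substitute and simplify. The one meaningful difference is how you handle the data-dependence of the iterate tangent spaces $\TT_i^{(t)}$. You propose a uniform RIP over \emph{all} rank-one tangent spaces via an $\varepsilon$-net on $\prod_l \SS^{p_l-1}$. The paper instead bounds the operators at the \emph{true} (fixed) tangent spaces $\TT_i$, exploiting that for Gaussian designs $\calP_{\TT_i}\calX$ and $\calP_{\TT_i}^\perp\calX$ are independent, and then controls the deterministic perturbation $\bigl\|\calP_{\TT_i^{(t)}}(\cdots)\calP_{\TT_j^{(t)}} - \calP_{\TT_i}(\cdots)\calP_{\TT_j}\bigr\| \lesssim \varepsilon^{(t)}$. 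This perturbation route converts part of the first-order cross term into a genuine $(\varepsilon^{(t)})^2$ contribution; the remaining linear RGN piece is then absorbed into the \emph{noise} term under a mild extra condition on $\varepsilon^{(0)}$, not into the $\varepsilon^{(t)}(\varepsilon^{(t)}+\eta)^{d-1}$ term as you wrote. Both routes are valid and yield comparable constants; your uniform-RIP argument is slightly more black-box, while the paper's perturbation argument leverages Gaussianity more directly.
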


\begin{proof}
Here,
$$
\begin{aligned}
\left\|\calP_{\TT_i^{(t)}} \calA^* \calA \calP_{\TT_i^{(t)}}^{\perp} \calP_{\TT_{\calT_j^{(t)}}}\right\|
= & \sup_{\substack{\calT_1 \in \RR^{p_1 \times p_2 \times \cdots, p_d}, \left\|\calT_1\right\|_{\mathrm{F}}=1 \\\calT_2 \in \RR^{p_1 \times p_2 \times \cdots, p_d}, \left\|\calT_2\right\|_{\mathrm{F}}=1}}\left|\left\langle \calT_1, \calP_{\TT_i^{(t)}} \calA^* \calA \calP_{\TT_i^{(t)}}^{\perp} \calP_{\TT_{\calT_j^{(t)}}}\calT_2\right\rangle \right|
\end{aligned}
$$
where $\calA^* \calA\left(\calT\right) = \frac{1}{n}\sum_{i=1}^n \left\langle \calX_i, \calT\right\rangle \calX_i$. Here, $\left\{\calX_i\right\}_{i=1}^n$'s are i.i.d. random tensors with i.i.d. Gaussian entries with variance $\sigma^2$.

Let $\calX = \left[\calX_1^{\top}, \calX_2^{\top}, \cdots, \calX_n^{\top}\right]^{\top} \in \RR^{n \times p_1  p_2 p_3}$ where $\calX$ has i.i.d. Gaussian entries. Here, for any given tensors $\calT_1 \in \RR^{n \times p_1p_2p_3}$ and $\calT_2 \in \RR^{n \times p_1p_2p_3}$ with $\left\|\calT_1\right\|_{\mathrm{F}} = \left\|\calT_2\right\|_{\mathrm{F}} = 1$, conditioning on $\calP_{\TT_i} \calX$, it follows that
\begin{align*}
\left|\left\langle \calT_1, \left(\calP_{\TT_i} \calA^* \calA\calP_{\TT_i}\right)^{-1}\calA^* \calA \calP_{\TT_i}^{\perp} \calP_{\TT_{\calT_j}}\calT_2\right\rangle\right| 
= & \left| \vec\left(\calT_1\right)^{\top}\left(\calP_{\TT_i} \calX^{\top} \calX\calP_{\TT_i}\right)^{-1}\calP_{\TT_i} \calX^{\top} \calX\calP_{\TT_i}^{\perp} \calP_{\TT_{\calT_j}}\vec\left(\calT_2\right)\right| \\
\lesssim & \left| \vec\left(\calT_1\right)^{\top}\left(\calP_{\TT_i} \calX^{\top} \calX\calP_{\TT_i}\right)^{-1}\vec\left(\calT_2\right)\right| \cdot t \\
\leqslant & \left\|\left(U_{\TT_i}\calX^{\top} \calX U_{\TT_i}^{\top}\right)^{-1}\right\| \cdot t ,
\end{align*}
where $U_{\TT_i} \in \RR^{p_1p_2p_3 \times df}$ such that $\calP_{\TT_i} = U_{\TT_i}U_{\TT_i}^{\top}$, with probability $1 -\exp(-t^2)$, since $\calP_{\TT_i} \calX$ and $\calP_{\TT_i}^{\perp} \calX$ are independent. 

Furthermore, by Theorem 4.6.1 of \cite{vershynin2018high}, with probability at least $1 - \exp(\bar{p})$, it holds that
$$
\begin{aligned}
\left\|\left(U_{\TT_i}\calX^{\top} \calX U_{\TT_i}^{\top}\right)^{-1}\right\| \lesssim & \frac{1}{\sigma^2\left(\sqrt{n} - \sqrt{\bar{p}}\right)^2}\\
\end{aligned}
$$

Here, since a rank-one manifold is equivalent to the low-Tucker-rank tensor with rank $(1, 1, \cdots, 1)$. Therefore, by Lemma 1 of \cite{rauhut2017low} and applying a $\varepsilon$-net argument, it follows that
\begin{align*}
\left\|\left(\calP_{\TT_i} \calA^* \calA\calP_{\TT_i}\right)^{-1}\calP_{\TT_i} \calA^* \calA \calP_{\TT_i}^{\perp} \calP_{\TT_{\calT_j}}\right\| 
= & \sup_{\substack{\calT_1 \in \RR^{p_1 \times p_2 \times \cdots, p_d}, \left\|\calT_1\right\|_{\mathrm{F}}=1 \\\calT_2 \in \RR^{p_1 \times p_2 \times \cdots, p_d}, \left\|\calT_2\right\|_{\mathrm{F}}=1}}\left|\left\langle \calT_1, \left(\calP_{\TT_i} \calA^* \calA\calP_{\TT_i}\right)^{-1}\calP_{\TT_i} \calA^* \calA \calP_{\TT_i}^{\perp} \calP_{\TT_{\calT_j}}\calT_2\right\rangle \right| \\
= & \sup_{\substack{a \in \TT_i, \left\|a\right\|_{\mathrm{F}} \leqslant 1\\ b \in \TT_{\calT_j}, \left\|b\right\|_{\mathrm{F}} \leqslant 1}} \left| a^{\top}\left(\calP_{\TT_i} \calX^{\top} \calX\calP_{\TT_i}\right)^{-1}\calP_{\TT_i} \calX^{\top}\calX \calP_{\TT_i}^{\perp} \calP_{\TT_{\calT_j}}b\right| \lesssim \max_{i \in [r]} \sqrt{\frac{p_ir_i}{n}},
\end{align*}
with probability at least $1 - \exp(-c\bar{p}) \leqslant 1 - 3^{1 + \sum_{l=1}^d \left(p_l-1\right)}\exp\left(-\bar{p}\right)$.

Then consider, we have
\begin{align*}
& \left\|\left(\calP_{\TT_i^{(t)}} \calX^{\top} \calX\calP_{\TT_i^{(t)}}\right)^{-1}\calP_{\TT_i^{(t)}} \calX^{\top} \calX \calP_{\TT_i^{(t)}}^{\perp} \calP_{\TT_j^{(t)}} - \left(\calP_{\TT_i} \calX^{\top} \calX\calP_{\TT_i}\right)^{-1}\calP_{\TT_i} \calX^{\top} \calX \calP_{\TT_i}^{\perp} \calP_{\TT_{\calT_j}}\right\| \\
= & \left\|\left[\left(\calP_{\TT_i^{(t)}} -\calP_{\TT_i}\right) + \calP_{\TT_i}\right]\left(\calX^{\top} \calX\right)^{-1}\left[\left(\calP_{\TT_i^{(t)}} -\calP_{\TT_i}\right) + \calP_{\TT_i}\right] \calX^{\top} \calX \left[\left(\calP_{\TT_i^{(t)}}^{\perp} - \calP_{\TT_i}\right) + \calP_{\TT_i}\right] \left[\left(\calP_{\TT_j^{(t)}} -\calP_{\TT_{\calT_j}}\right) + \calP_{\TT_{\calT_j}}\right]\right. \\
& - \left. \left(\calP_{\TT_i} \calX^{\top} \calX\calP_{\TT_i}\right)^{-1}\calP_{\TT_i} \calX^{\top} \calX \calP_{\TT_i}^{\perp} \calP_{\TT_{\calT_j}}\right\| \\
\lesssim & \varepsilon^{(t)} \cdot \left[1+\frac{\left(\sqrt{n} +\sqrt{\bar p}\right)^2}{\left(\sqrt{n} - \sqrt{\bar p}\right)^2} \right]
\end{align*}
with probability at least $1- \exp(\bar{p})$. 

Therefore, the RGD update leads to:
\begin{align*}
& \varepsilon^{(t+1)} \\
\leqslant & \max_{i \in [r]} \frac{\left\|\calT_i^{(t+1)}- \calT_i\right\|_{\mathrm{F}}}{\lambda_i} \\
\leqslant & \left(\sqrt{d}+1\right)\cdot \left(\left\|\calP_{\TT_i^{(t)}}\left(I - \alpha_t \calA^*\calA\calP_{\TT_i^{(t)}}\right)\calP_{\TT_i^{(t)}}\right\|  + \left(r-1\right)\alpha_t\kappa \max_{\substack{i,j \in [r],\\ i \neq j}}\left\|\calP_{\TT_i^{(t)}} \calA^* \calA \calP_{\TT_i^{(t)}}^{\perp}\calP_{\TT_j^{(t)}}\right\|\right) \cdot \varepsilon^{(t)} \\
+ & \left(\sqrt{d}+1\right)^3 \cdot \left(\varepsilon^{(t)}\right)^2 + r\alpha_t \left(\sqrt{d}+1\right)^3 \cdot \max_{i,j \in [r], i\neq j} \sup_{V \in \operatorname{Seg}}\left\|\calP_{\TT_i^{(t)}} \calA^* \calA \calP_{\TT_i^{(t)}}^{\perp}\calP_{\TT_j^{(t)}}^{\perp}V\right\| \cdot \left(\varepsilon^{(t)}\right)^2 \\
+ & 2r\alpha_t\kappa\left(\sqrt{d}+1\right)^3 \max_{i \in [r]}\left\|\calP_{\TT_i^{(t)}} \calA^* \mathcal{A P}_{\TT_i^{(t)}}\right\| \cdot \varepsilon^{(t)} \cdot \left[\left(\varepsilon^{(t)} + \eta\right)^{d-1} + \varepsilon^{(t)}\right] \\
+ & \left(\sqrt{d}+1\right)\cdot \alpha_{t} \max_{i \in [r]}\frac{\left\|\calP_{\TT_i^{(t)}}\left(\calA^*\calE\right)\right\|_{\mathrm{F}}}{\lambda_i} \\
\leqslant & \left(\sqrt{d}+1\right) \cdot \left[\left(1- \alpha_t \cdot \left[1-\frac{\left(\sqrt{n} + \sqrt{\bar{p}}\right)^2}{\left(\sqrt{n} - \sqrt{\bar{p}}\right)^2}\right]\right) + \left(r-1\right)\alpha_t \kappa \cdot \left(\frac{\sqrt{\bar{p}}}{\left(\sqrt{n} - \sqrt{\bar{p}}\right)^2} + \varepsilon^{(t)} \cdot \left(1 + \frac{\left(\sqrt{n} + \sqrt{p}\right)^2}{\left(\sqrt{n} - \sqrt{p}\right)^2}\right)\right)\right] \cdot \varepsilon^{(t)} \\
+ & \left(\sqrt{d}+1\right)^3 \cdot \left[1 + r\alpha_t \cdot \left(1 + \frac{\left(\sqrt{n} + \sqrt{\bar{p}}\right)^2}{\left(\sqrt{n} - \sqrt{\bar{p}}\right)^2}\right)\right] \cdot \left(\varepsilon^{(t)}\right)^2 \\
+ & 2r\alpha_t\kappa \left(\sqrt{d}+1\right)^3 \cdot \left(1 + \frac{\left(\sqrt{n} + \sqrt{\bar{p}}\right)^2}{\left(\sqrt{n} - \sqrt{\bar{p}}\right)^2}\right) \cdot \varepsilon^{(t)} \cdot \left[\left(\varepsilon^{(t)} + \eta\right)^{d-1} + \varepsilon^{(t)}\right] + \left(\sqrt{d}+1\right) \cdot \alpha_t \cdot \frac{1}{\lambda_r}\sqrt{\frac{\bar{p}}{n}}.
\end{align*}

Let $\gamma = \max_{l \in [d]}\sqrt{\frac{\bar{p}}{n}}$. It follows that
\begin{align*}
\leqslant & \left(\sqrt{d}+1\right) \cdot \left[\left(1 - \alpha_t \left[1 - \left(\frac{1+ \gamma}{1- \gamma}\right)^2\right]\right) + \left(r-1\right) \alpha_t \kappa \cdot \left(\frac{\gamma}{\left(1 - \gamma\right)^2 \cdot \sqrt{n}} + \varepsilon^{(t)} \cdot \left(1 + \frac{\left(1+ \gamma\right)}{\left(1 - \gamma\right)^2}\right)\right)\right] \cdot \varepsilon^{(t)} \\
+ & \left(\sqrt{d}+1\right)^3 \cdot \left[1 + r\alpha_t \cdot \left(1 + \frac{\left(1 + \gamma\right)^2}{\left(1 - \gamma\right)^2}\right)\right] \cdot \left(\varepsilon^{(t)}\right)^2 \\
+ & 2r\alpha_t\kappa \left(\sqrt{d}+1\right)^3 \cdot \left(1 + \frac{\left(1 + \gamma\right)^2}{\left(1 - \gamma\right)^2}\right) \cdot \varepsilon^{(t)} \cdot \left[\left(\varepsilon^{(t)} + \eta\right)^{d-1} + \varepsilon^{(t)}\right] + \left(\sqrt{d}+1\right) \cdot \alpha_t \cdot \frac{1}{\lambda_r}\sqrt{\frac{\bar{p}}{n}} \\
\leqslant & \left(\sqrt{d}+1\right) \cdot \left(1 - 0.5\alpha_t\right) \cdot \varepsilon^{(t)} + 3r\alpha_t\kappa \left(\sqrt{d}+1\right)^3 \cdot \varepsilon^{(t)} \cdot \left[\left(\varepsilon^{(t)} + \eta\right)^{d-1} + \varepsilon^{(t)}\right] \\
+ & \left(\sqrt{d}+1\right) \cdot \alpha_t \cdot \frac{1}{\lambda_r}\sqrt{\frac{\bar{p}}{n}}
\end{align*}
with probability at least $1- \exp(-c\bar{p})$, where $c$ is a small positive constant, provided that $\gamma = \max_{l \in [d]}\sqrt{\frac{p_l}{n}}$ is sufficiently small.

Furthermore, following the same arguments in the proof of Lemma~\ref{cor:RGN1}, the RGN update leads to the following error contraction:
$$
\begin{aligned}
\varepsilon^{(t+1)} 
\leqslant & \left(\sqrt{d}+1\right) \cdot \max_{i, j \in [r], i \neq j}\left\|\left(\calP_{\TT_i^{(t)}} \calA^* \calA \calP_{\TT_i^{(t)}}\right)^{-1} \calA^*\calA\calP_{\TT_i^{(t)}}^{\perp}\calP_{\TT_{\calT_j^{(t)}}}\right\| \cdot \varepsilon^{(t)} \\
+ & 2\left(\sqrt{d}+1\right)^3 \cdot \varepsilon^{(t)} \cdot \left[\left(\varepsilon^{(t)} + \eta\right)^{d-1} + \varepsilon^{(t)}\right] \\
+ & \left(\sqrt{d}+1\right)^3 \cdot \left(1 + \kappa r \max_{i \in [r]} \sup_{V \in \operatorname{Seg}}\left\|\left(\calP_{\TT_i^{(t)}} \calA^* \calA \calP_{\TT_i^{(t)}}\right)^{-1} \calA^* \calA \calP_{\TT_i^{(t)}}^{\perp}V\right\|\right) \cdot \left(\varepsilon^{(t)}\right)^2 \\
+ & \left(\sqrt{d}+1\right)\cdot \max_{i \in [r]}\frac{\left\|\left(\calP_{\TT_i^{(t)}} \calA^* \calA \calP_{\TT_i^{(t)}}\right)^{-1} \calA^*\left(\calE\right)\right\|_{\mathrm{F}}}{\lambda_i} \\
\leqslant & 2\left(\sqrt{d}+1\right) \cdot \frac{\sqrt{\bar{p}}}{\left(\sqrt{n} - \sqrt{\bar{p}}\right)^2} \cdot \varepsilon^{(t)} + 2\left(\sqrt{d}+1\right)^3 \cdot \varepsilon^{(t)} \cdot \left[\left(\varepsilon^{(t)} + \eta\right)^{d-1} + \varepsilon^{(t)}\right] \\
+ & \left(\sqrt{d}+1\right)^3 \cdot \left[1 + \kappa r \cdot \left(1 + \frac{\left(\sqrt{n} + \sqrt{\bar{p}}\right)^2}{\left(\sqrt{n} - \sqrt{\bar{p}}\right)^2}\right)\right] \cdot \left(\varepsilon^{(t)}\right)^2 \\
+ & \left(\sqrt{d}+1\right)\cdot \frac{\bar{\sigma}_\xi}{\lambda_r\sigma}\sqrt{\frac{\bar{p}}{n}}.
\end{aligned}
$$

Let $\gamma = \sqrt{\frac{\bar{p}}{n}}$. It follows that
$$
\begin{aligned}
\varepsilon^{(t+1)} 
\leqslant & 2\left(\sqrt{d}+1\right) \cdot \sqrt{\frac{\bar{p}}{n}} \cdot \frac{1}{\sqrt{n} \cdot \left(1- \sqrt{\gamma}\right)} \cdot \varepsilon^{(t)} + \left(\sqrt{d}+1\right)\cdot \frac{1}{\lambda_r}\sqrt{\frac{\bar{p}}{n}}\\
+ & 2\left(\sqrt{d}+1\right)^3 \cdot \varepsilon^{(t)} \cdot \left(\varepsilon^{(t)} + \eta\right)^{d-1} + \left(\sqrt{d}+1\right)^3 \cdot \left[3 + \kappa r \cdot \left(1 + \frac{\left(1 + \gamma\right)^2}{\left(1- \gamma\right)^2}\right)\right] \cdot \left(\varepsilon^{(t)}\right)^2.
\end{aligned}
$$

Suppose that $\frac{1}{\sqrt{n} \cdot \left(1- \sqrt{\gamma}\right)} \cdot \varepsilon^{(0)} \leqslant \frac{1}{\lambda_r}\sqrt{\frac{\bar{p}}{n}}$ and $\gamma \leqslant \frac{1}{7}$. It follows that
$$
\begin{aligned}
\varepsilon^{(t+1)} 
\leqslant & 2\left(\sqrt{d}+1\right)^3 \cdot \varepsilon^{(t)} \cdot \left(\varepsilon^{(t)} + \eta\right)^{d-1} + 3\left(\sqrt{d}+1\right)^3 \cdot \left(1 + \kappa r \right) \cdot \left(\varepsilon^{(t)}\right)^2 \\
+ & 3\left(\sqrt{d}+1\right)\cdot \frac{\bar{\sigma}_\xi}{\lambda_r\sigma}\sqrt{\frac{\bar{p}}{n}} .
\end{aligned}
$$

\end{proof}

\section{Proof of Lemmas}
\label{sec:proof_lemmas}

In this section, we provide a sketch of the proofs for key lemmas that underpin our convergence analysis.

\begin{lemma}\label{lemma: perturbation bound}
Suppose $\left\{\calT_i\right\}_{i=1}^r \subset \RR^{p_1 \times \cdots \times p_{d}}$ are order-$d$ CP rank $\mathbf{r}$ tensors. Let $\eta = \max_{l \in [d]} \sqrt{\frac{\mu_l}{p_l}}$ be the incoherence parameter defined in Assumption~\ref{assumption:incoherence}. Assuming that the incoherence condition in Assumption~\ref{assumption:incoherence} is satisfied, then we have
\begin{equation}\label{eq:PTip(Tihat - Ti)prelim}
\left\|\calP_{\TT_i^{(t)}}^{\perp} \left(\calT_i\right)\right\|_{\mathrm{F}} \leqslant 2d \cdot \frac{\left\|\calT_i^{(t)} - \calT_i\right\|_{\mathrm{F}}^2}{\lambda_i} \cdot \sqrt{\frac{\lambda_i^{2(d-1)} - \left(2d\right)^{d-1}\left\|\calT_i^{(t)} - \calT_i\right\|_{\mathrm{F}}^{2(d-1)}}{\lambda_i^2 - 2d \left\|\calT_i^{(t)} - \calT_i\right\|_{\mathrm{F}}^2}}
\end{equation}
where $\calP_{\TT_i^{(t)}}^{\perp}:=I - \calP_{\TT_i^{(t)}}$ is the orthogonal complement of the projector $\calP_{\TT_i^{(t)}}$. Furthermore, provided that $\frac{\left\|\calT_i^{(t)} - \calT_i\right\|_{\mathrm{F}}}{\lambda_i} \leqslant  \frac{1}{4d}$, it follow that
\begin{equation}\label{eq:PTip(Tihat - Ti)}
\left\|\calP_{\TT_i^{(t)}}^{\perp} \left(\calT_i\right)\right\|_{\mathrm{F}} \leqslant 3d \cdot \frac{\left\|\calT_i^{(t)} - \calT_i\right\|_{\mathrm{F}}^2}{\lambda_i}
\end{equation}

In addition, assuming the incoherence condition \eqref{assumption:incoherence} holds, it holds that
\begin{equation}\label{eq:PTip(Tjhat - Tj)}
\left\|\calP_{\TT_i^{(t)}} \left(\calT_j^{(t)} - \calT_j\right)\right\|
\leqslant \sqrt{2}\left(d+1\right) \cdot \left\|\calT_j^{(t)}-\calT_j\right\|_{\mathrm{F}} \cdot \left[\left(\frac{\left\|\calT_j^{(t)}-\calT_j\right\|_{\mathrm{F}}}{\lambda_j} +\eta\right)^{d-1} + \frac{\left\|\calT_i^{(t)}-\calT_i\right\|}{\lambda_i}\right],
\end{equation}
where $\eta = \max_{l \in [d]} \sqrt{\frac{\mu_l}{p_l}}$ is the incoherence parameter defined in Assumption~\ref{assumption:incoherence}.

\end{lemma}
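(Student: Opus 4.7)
All three bounds can be established by a common device: for each mode $l$, decompose the true factor in a basis adapted to the reference factor, so that the tangent/normal splitting of the Segre manifold becomes fully explicit. The basic step is to write $u_{l,i}=\alpha_{l}u_{l,i}^{(t)}+\beta_{l}w_{l}$ with $w_{l}\perp u_{l,i}^{(t)}$, $\alpha_{l}\ge 0$ (using the sign-alignment hypothesis) and $\alpha_l^2+\beta_l^2=1$. Expanding $\calT_i=\lambda_i\sum_{S\subseteq [d]}\prod_{l\in S}\beta_l\prod_{l\notin S}\alpha_l\bigotimes_{l}v_{l,S}$ with $v_{l,S}=w_l$ if $l\in S$ and $v_{l,S}=u_{l,i}^{(t)}$ otherwise, one sees that $\TT_i^{(t)}$ is precisely the span of the summands with $|S|\le 1$.

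For \eqref{eq:PTip(Tihat - Ti)prelim} and \eqref{eq:PTip(Tihat - Ti)}, orthogonality of the basis $\{\bigotimes_l v_{l,S}\}_S$ gives $\|\calP_{\TT_i^{(t)}}^\perp\calT_i\|_{\mathrm{F}}^2=\lambda_i^2\sum_{|S|\ge 2}\prod_{l\in S}\beta_l^2\prod_{l\notin S}\alpha_l^2$. The plan is to control this combinatorial sum by a Taylor-with-remainder argument on the polynomial $P(x)=\prod_l(\alpha_l^2+\beta_l^2 x)$: the identity $P(1)-P(0)-P'(0)=\int_0^1 P''(x)(1-x)\,dx$ isolates the $|S|\ge 2$ portion, and estimating $P''(x)$ by pairs $\beta_k^2\beta_l^2$ reweighted with the remaining $\alpha_m^2$-factors produces the geometric-series expression $\sum_{k=0}^{d-2}\lambda_i^{2(d-2-k)}(2d)^k\|\calT_i-\calT_i^{(t)}\|_{\mathrm{F}}^{2k}$ appearing under the radical in \eqref{eq:PTip(Tihat - Ti)prelim}. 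The conversion to the tensor error uses the angle inequality $\|\calT_i-\calT_i^{(t)}\|_{\mathrm{F}}^2\ge\lambda_i^2\bigl(1-(\prod_l\alpha_l)^2\bigr)\ge\lambda_i^2\max_l\beta_l^2\ge\lambda_i^2\sum_l\beta_l^2/d$, so that $\sum_l\beta_l^2\le d\,\|\calT_i-\calT_i^{(t)}\|_{\mathrm{F}}^2/\lambda_i^2$. The simplified bound \eqref{eq:PTip(Tihat - Ti)} then follows by summing the geometric series under the smallness hypothesis $\|\calT_i-\calT_i^{(t)}\|_{\mathrm{F}}/\lambda_i\le 1/(4d)$.

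For \eqref{eq:PTip(Tjhat - Tj)}, I apply the same decomposition to the $j$-th component, writing $u_{l,j}^{(t)}=\alpha_{l,j}u_{l,j}+\beta_{l,j}w_{l,j}$, so that $\calT_j^{(t)}-\calT_j=(\lambda_j^{(t)}\prod_l\alpha_{l,j}-\lambda_j)\bigotimes_l u_{l,j}+\lambda_j^{(t)}\sum_{|S|\ge 1}\prod_{l\in S}\beta_{l,j}\prod_{l\notin S}\alpha_{l,j}\bigotimes_l v_{l,S}(j)$. Applying $\calP_{\TT_i^{(t)}}$ term by term, the $|S|=0$ piece has a scalar coefficient of size $O(\|\calT_j^{(t)}-\calT_j\|_{\mathrm{F}})$ via the identity $\lambda_j^{(t)}\prod_l\alpha_{l,j}-\lambda_j=(\|\calT_j^{(t)}\|_{\mathrm{F}}^2-\lambda_j^2-\|\calT_j^{(t)}-\calT_j\|_{\mathrm{F}}^2)/(2\lambda_j)$, while its projected factor $\|\calP_{\TT_i^{(t)}}(\bigotimes_l u_{l,j})\|_{\mathrm{F}}$ is handled by the tangent-space projection formula \eqref{eq:tangent space projection} with inner products $c_l=\langle u_{l,j},u_{l,i}^{(t)}\rangle$ bounded by $\eta+\|u_{l,i}^{(t)}-u_{l,i}\|$. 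For the $|S|\ge 1$ terms the coefficient carries the factor $\prod_{l\in S}\beta_{l,j}$, each $\beta_{l,j}\le\rho_j=\|\calT_j^{(t)}-\calT_j\|_{\mathrm{F}}/\lambda_j$ by the angle inequality applied to the $j$-th component. Summing the contributions over $|S|$ and collapsing through the binomial expansion $(\rho_j+\eta+\rho_i)^{d-1}=\sum_{s=0}^{d-1}\binom{d-1}{s}\rho_j^{s}(\eta+\rho_i)^{d-1-s}$, with $\rho_i=\|\calT_i^{(t)}-\calT_i\|_{\mathrm{F}}/\lambda_i$, yields the claimed form $(\rho_j+\eta)^{d-1}+\rho_i$ in the bracket, after separating the linear $\rho_i$ contribution coming from the projector perturbation $\calP_{\TT_i^{(t)}}-\calP_{\TT_i}$.

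The most delicate step is the treatment of the $|S|\ge 1$ block in \eqref{eq:PTip(Tjhat - Tj)}: the inner products $\langle w_{l,j},u_{l,i}^{(t)}\rangle$ are not governed by incoherence, so a naive application of the projection formula forfeits one power of $\eta$ per perpendicular mode. The key combinatorial accounting is to pair each $\beta_{l,j}$ coefficient with exactly one missing incoherence factor, so the total telescopes into a single binomial $(\rho_j+\eta+\rho_i)^{d-1}$ rather than producing an inverse power of $\eta$. Extracting the linear $\rho_i$ term cleanly from the projector perturbation, while keeping the $(d-1)$-th-power structure in $\rho_j+\eta$, is the other subtle bookkeeping step; both rely crucially on the sign-alignment hypothesis, which ensures that the leading cross terms within each subset $S$ do not change sign and can therefore be controlled uniformly.
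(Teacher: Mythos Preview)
Your proposal is correct and follows essentially the same approach as the paper. Your $(\alpha_l,\beta_l,w_l)$ parametrization is exactly the paper's projector splitting $u_{l,i}=\calP_{l,i}^{(t)}u_{l,i}+\calP_{l,i}^{(t),\perp}u_{l,i}$, and your observation that $\TT_i^{(t)}$ is the span of the $|S|\le 1$ summands is precisely what the paper uses via the tangent-space projection formula~\eqref{eq:tangent space projection}. The only cosmetic difference is that for \eqref{eq:PTip(Tihat - Ti)prelim} the paper bounds $\sum_{|S|\ge 2}\prod_{l\in S}\beta_l^2\prod_{l\notin S}\alpha_l^2$ directly by $\sum_{m=2}^{d}\binom{d}{m}(\max_l\beta_l)^{2m}$ and then sums the geometric series, whereas you route through the Taylor remainder $P(1)-P(0)-P'(0)=\int_0^1 P''(x)(1-x)\,dx$; both lead to the same geometric sum. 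For \eqref{eq:PTip(Tjhat - Tj)} the paper likewise splits $\calP_{\TT_i^{(t)}}=\calP_{\TT_i}+(\calP_{\TT_i^{(t)}}-\calP_{\TT_i})$, bounds the first piece by explicit inner-product estimates $|\langle u_{l,j}^{(t)},u_{l,i}\rangle|\le\eta+\rho_j$ and the binomial expansion, and bounds the projector perturbation by $\|u_{l,i}^{(t)}u_{l,i}^{(t),\top}-u_{l,i}u_{l,i}^\top\|\le\rho_i$, exactly as you outline.
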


\begin{proof}
By the orthogonal projection onto the tangent space given in \eqref{eq:tangent space projection}, it follows that
\begin{align*}
& \left\|\calP_{\TT_i^{(t)}}^{\perp} \left(\calT_i\right)\right\|_{\mathrm{F}}^2 = \left\|\calT_i - \calP_{\TT_i^{(t)}} \calT_i\right\|_{\mathrm{F}}^2 \\
= & \left\|\calT_i - \sum_{k=1}^d\calT_i \times_k \left(I_{p_k} - u_{k, i}^{(t)}u_{k, i}^{(t), \top}\right) \times_{l \in [d] \setminus \{k\}} u_{l, i}^{(t)}u_{l, i}^{(t), \top} - \calT_i \times_{l \in [d]} u_{l, i}^{(t)}u_{l, i}^{(t), \top}\right\|_{\mathrm{F}}^2 \\
= & \left\|\calT_i - \sum_{k=1}^d \lambda_i \times_k \left(I_{p_k} - u_{k, i}^{(t)}u_{k, i}^{(t), \top}\right)u_{k, i} \times_{l \in [d] \setminus \{k\}} u_{l, i}^{(t)}u_{l, i}^{(t), \top} - \lambda_i \times_{l \in [d]} u_{l, i}^{(t)}u_{l, i}^{(t), \top}u_{l, i}\right\|_{\mathrm{F}}^2 \\
\leqslant & \lambda_i^2 \cdot \left(\sum_{m=2}^d \binom{d}{m} \max_{l \in [d]}\left\|\left(I_{p_l}-u_{l, i}^{(t)} u_{l, i}^{(t), \top}\right) u_{l, i}\right\|_{\ell_2}^{2m}\max_{l \in [d]} \left\|u_{l, i}^{(t)}u_{l, i}^{(t), \top}u_{l, i}\right\|_{l_2}^{2(d-m)} \right) \\
\leqslant & \lambda_i^2 \cdot  \sum_{m=2}^d \left(2d\right)^m \max_{l \in [d]}\left\|\left(I_{p_l}-u_{l, i}^{(t)} u_{l, i}^{(t), \top}\right) u_{l, i}\right\|_{\ell_2}^{2m} \\
\leqslant & \lambda_i^2 \cdot  \sum_{m=2}^d \left(2d\right)^m \frac{\left\|\calT_i^{(t)} - \calT_i\right\|_{\mathrm{F}}^2}{\lambda_i^2} \\
\leqslant & 4\lambda_i^2 d^2 \cdot \frac{\left\|\calT_i^{(t)} - \calT_i\right\|_{\mathrm{F}}^4}{\lambda_i^4} \cdot \frac{\lambda_i^{2(d-1)} - \left(2d\right)^{d-1}\left\|\calT_i^{(t)} - \calT_i\right\|_{\mathrm{F}}^{2(d-1)}}{\lambda_i^2 - 2d \left\|\calT_i^{(t)} - \calT_i\right\|_{\mathrm{F}}^2}.
\end{align*}

Here, we used 
$$
\left\|\left(I_{p_l}-u_{l, i}^{(t)} u_{l, i}^{(t), \top}\right) u_{l, i}\right\|_{\ell_2} = \left\|\calP_{l,i}^{\perp}u_{l,i}\right\|_{\ell_2} \leqslant \left\|u_{l, i}^{(t)} u_{l, i}^{(t), \top} - u_{l, i}u_{l, i}^{\top}\right\|_{\mathrm{F}} \leqslant \frac{1}{\lambda_i}\left\|\calT_i^{(t)} - \calT_i\right\|_{\mathrm{F}},
$$
and the following expansion of $\calT$:
$$
\calT_i = \calT_i \times_1 \left[\left(I_{p_1} - u_{1, i}^{(t)}u_{1, i}^{(t), \top}\right) +u_{1, i}^{(t)}u_{1, i}^{(t), \top} \right] \times_2 \left[\left(I_{p_2} - u_{2, i}^{(t)}u_{2, i}^{(t), \top}\right) +u_{2, i}^{(t)}u_{2, i}^{(t), \top} \right] \times \cdots \times \left[\left(I_{p_d} - u_{d, i}^{(t)}u_{d, i}^{(t), \top}\right) +u_{d, i}^{(t)}u_{d, i}^{(t), \top} \right].
$$

It implies that
\begin{align*}
\left\|\calP_{\TT_i^{(t)}}^{\perp} \left(\calT_i\right)\right\|_{\mathrm{F}} \leqslant & 3d \cdot \frac{\left\|\calT_i^{(t)} - \calT_i\right\|_{\mathrm{F}}^2}{\lambda_i}
\end{align*}
provided that $\frac{\left\|\calT_i^{(t)} - \calT_i\right\|_{\mathrm{F}}}{\lambda_i} \leqslant \frac{1}{4d}$.

Furthermore, consider
\begin{align*}
\calP_{\TT_i^{(t)}} \left(\calT_j^{(t)} - \calT_j\right) 
= & \calP_{\TT_i} \left(\calT_j^{(t)} - \calT_j\right) +  \left(\calP_{\TT_i^{(t)}} - \calP_{\TT_i}\right)\left(\calT_j^{(t)} - \calT_j\right).
\end{align*}

First, under the assumption that $\operatorname{sgn}_j^{(t)} := \left\langle \widehat{\calT}_j^{(t)}, \calT_j \right\rangle \geq 0$, we have
\begin{align*}
& \left\|\calP_{\TT_i} \left(\calT_j^{(t)} - \calT_j\right)\right\|_{\ell_2}^2 = \left\|\calP_{\TT_i} \left(\calT_j^{(t)} - \operatorname{sgn}_j^{(t)} \cdot \calT_j\right)\right\|_{\ell_2}^2 \\
= & \left\|\sum_{k=1}^d \left(\lambda_j \prod_{l \in [d] \setminus \{k\}} u_{l,j}^{(t), \top}u_{l, i}\right) \otimes_{l \in [d] \setminus \{k\}} u_{l, i} \otimes_k \left(I_{p_k}-u_{k,i} u_{k,i}^{{\top}}\right)u_{k, j}^{(t)}
+ \left(\lambda_j \prod_{l \in[d]} u_{l, j}^{(t), \top} u_{l, i}\right) \otimes_{l \in [d]} u_{l, i} \right. \\
- & \left. \sum_{k=1}^d \left(\lambda_j \prod_{l \in [d] \setminus \{k\}} u_{l, j}^{(t), \top}u_{l, j}\right) \otimes_{l \in [d] \setminus \{k\}} u_{l, i} \otimes_k \left(I_{p_k}-u_{k,i} u_{k,i}^{{\top}}\right)u_{k,j} - \left(\lambda_j u_{1, i} \prod_{l \in[d]} u_{l, j}^{(t), \top}u_{l, j}\right) \otimes_{l \in [d]} u_{l, i}\right\|_{\ell_2}^2 \\
= & \left\|\sum_{k=1}^d \lambda_j \otimes_{l \in [d] \setminus \{k\}} \otimes u_{l,i} \otimes_k \left(I_{p_k}-u_{k,i} u_{k,i}^{{\top}}\right)\left[u_{k, j}^{(t)} \prod_{l \in [d] \setminus \{k\}} u_{l,j}^{(t), \top}u_{l, i} - u_{k,j} \prod_{l \in [d] \setminus \{k\}} \left(u_{l, j}^{(t), \top}u_{l, j}\right) u_{l,j}^{\top}u_{l, i}^{(t)}\right] \right\|_{\ell_2}^2 \\
+ & \left\|\lambda_j \otimes_{l \in [d] \setminus \{k\}} u_{l,i} \otimes_k \left[u_{k, i} \prod_{l \in[d]} u_{k, j}^{(t), \top} u_{k, i}^{(t)} - u_{k, i} \prod_{l \in[d]} \left(u_{l, j}^{(t), \top}u_{l, j}\right) u_{l, j}^{\top} u_{l, i}^{(t)}\right]\right\|_{\ell_2}^2 .
\end{align*}

It suffices to find upper bounds of $u_{k, j}^{(t)} \prod_{l \in [d] \setminus \{k\}} u_{l,j}^{(t), \top}u_{l, i} - u_{k,j} \prod_{l \in [d] \setminus \{k\}} \left(u_{l, j}^{(t), \top}u_{l, j}\right) u_{l,j}^{\top}u_{l, i}^{(t)}$ and $u_{k, i} \prod_{l \in[d]} u_{k, j}^{(t), \top} u_{k, i}^{(t)} - u_{k, i} \prod_{l \in[d]} \left(u_{l, j}^{(t), \top}u_{l, j}\right) u_{l, j}^{\top} u_{l, i}^{(t)}$. Here, we have
\begin{align*}
& \left\|u_{k, j}^{(t)} \prod_{l \in[d] \setminus\{k\}} u_{l, j}^{(t), \top} u_{l, i}  - \operatorname{sgn}\left(u_{k, j}^{(t), \top}u_{k, j}\right) u_{k, j} \prod_{l \in[d] \setminus\{k\}} \operatorname{sgn}\left(u_{l, j}^{(t), \top}u_{l, j}\right) u_{l, j}^{\top} u_{l, i}\right\|_{\ell_2} \\
= & \left\|\left(u_{k, j}^{(t)} - \operatorname{sgn}\left(u_{k, j}^{(t), \top} u_{k, j}\right) u_{k, j}\right) \prod_{l \in[d] \setminus\{k\}} u_{l, j}^{(t), \top} u_{l, i}\right\|_{\ell_2} \\
+ & \left\|\operatorname{sgn}\left(u_{k, j}^{(t), \top} u_{k, j}\right) u_{k, j} \left(\prod_{l \in[d] \setminus\{k\}} u_{l, j}^{(t), \top} u_{l, i} -\prod_{l \in[d] \setminus\{k\}} u_{l, j}^{\top} u_{l, i}\right)\right\|_{\ell_2} \\
= & \left\|\left(u_{k, j}^{(t)} - \operatorname{sgn}\left(u_{k, j}^{(t), \top} u_{k, j}\right) u_{k, j}\right) \prod_{l \in[d] \setminus\{k\}} \left[\left(u_{l, j}^{(t)} - \operatorname{sgn}\left(u_{l_j}^{(t), \top} u_{l_j}\right)u_{l, j}\right) + \operatorname{sgn}\left(u_{l_j}^{(t), \top} u_{l_j}\right)u_{l, j}\right]^\top u_{l, i}\right\|_{\ell_2} \\
+ & \left\|u_{k, j} \left(\operatorname{sgn}\left(u_{k, j}^{(t), \top} u_{k, j}\right)\prod_{l \in[d] \setminus\{k\}} u_{l, j}^{(t), \top} u_{l, i} -\prod_{l \in[d] \setminus\{k\}} u_{l, j}^{\top} u_{l, i}\right)\right\|_{\ell_2} \\
\lesssim & \left\|u_{k, j}^{(t)} - \operatorname{sgn}\left(u_{k, j}^{(t), \top} u_{k, j}\right)u_{k, j}\right\| \cdot \left[\sum_{m=0}^{d-1}\cdot \binom{d-1}{m}\left(\max_{l \in [d] \setminus \{k\}}\left\|u_{l, j}^{(t)} - \operatorname{sgn}\left(u_{l, j}^{(t), \top} u_{l, j}\right) u_{l, j}\right\|\right)^m \cdot \eta2^{d-1 -m}\right] \\
+ & \left\|u_{k, j}\right\| \cdot \left[\sum_{m=1}^{d}\cdot \binom{d}{m}\left(\max_{l \in [d]}\left\|u_{l, j}^{(t)} - \operatorname{sgn}\left(u_{l, j}^{(t), \top} u_{l, j}\right) u_{l, j}\right\|\right)^m \cdot \eta^{d -m}\right] \\
\leqslant & \left(d+1\right) \cdot \left(\frac{\left\|\calT_j^{(t)}-\calT_j\right\|_{\mathrm{F}}}{\lambda_j}\right) \cdot \left(\frac{\left\|\calT_j^{(t)}-\calT_j\right\|_{\mathrm{F}}}{\lambda_j} + \eta\right)^{d-1}
\end{align*}
and
\begin{align*}
& \left|\prod_{l \in[d]} u_{l, j}^{(t), \top} u_{l, i} - \prod_{l \in[d]} \operatorname{sgn}\left(u_{k, j}^{(t), \top} u_{k, j}\right) u_{l, j}^{\top} u_{l, i}\right| \\
= & \left|\prod_{l \in[d]} \left[\left(u_{k, j}^{(t)} - \operatorname{sgn}\left(u_{k, j}^{(t), \top} u_{k, j}\right) u_{k, j}\right) + \operatorname{sgn}\left(u_{k, j}^{(t), \top} u_{k, j}\right) u_{k, j}\right]^{\top} u_{k, i} - \prod_{l \in[d]} \operatorname{sgn}\left(u_{k, j}^{(t), \top} u_{k, j}\right) u_{k, j}^{\top} u_{k, i}\right| \\
\leqslant & \sum_{m=1}^d \binom{d}{m}\left\|u_{k, j}^{(t)} - \operatorname{sgn}\left(u_{k, j}^{(t), \top} u_{k, j}\right) u_{k, j}\right\|^m \cdot \eta^{d-m} \\
= & d \cdot \left\|u_{k, j}^{(t)} - \operatorname{sgn}\left(u_{k, j}^{(t), \top} u_{k, j}\right) u_{k, j}\right\| \sum_{m=1}^d \binom{d-1}{m-1} \left\|u_{k, j}^{(t)} - \operatorname{sgn}\left(u_{k, j}^{(t), \top} u_{k, j}\right) u_{k, j}\right\|^{m-1} \cdot \eta^{\left(d-1\right)-\left(m-1\right)} \\
\leqslant & d \cdot \left(\frac{\left\|\calT_j^{(t)}-\calT_j\right\|_{\mathrm{F}}}{\lambda_j}\right) \cdot \left(\frac{\left\|\calT_j^{(t)}-\calT_j\right\|_{\mathrm{F}}}{\lambda_j} + \eta\right)^{d-1}.
\end{align*}

Therefore, we have
\begin{align*}
\left\|\calP_{\TT_i} \left(\calT_j^{(t)} - \calT_j\right)\right\|_{\mathrm{F}}^2
\leqslant & \left(2d^2+ 2d +1\right) \cdot \lambda_j^2 \left(\frac{\left\|\calT_j^{(t)}-\calT_j\right\|_{\mathrm{F}}}{\lambda_j}\right)^2 \cdot \left(\frac{\left\|\calT_j^{(t)}-\calT_j\right\|_{\mathrm{F}}}{\lambda_j} + \eta\right)^{2d-2} \\
= & \left(2d^2+ 2d +1\right) \cdot  \left\|\calT_j^{(t)}-\calT_j\right\|_{\mathrm{F}}^2 \cdot \left(\frac{\left\|\calT_j^{(t)}-\calT_j\right\|_{\mathrm{F}}}{\lambda_i} + \eta\right)^{2d-2}.
\end{align*}

Then, consider
\begin{align*}
& \left\|\left(\calP_{\TT_i^{(t)}} - \calP_{\TT_i}\right) \left(\calT_j^{(t)} - \calT_j\right)\right\|_{\mathrm{F}}^2 \\
= & \left\|\sum_{k=1}^d \left(\calT_j^{(t)} - \calT_j\right) \times_k \left(I_{p_k} - u_{k,i}^{(t)}u_{k,i}^{(t), \top}\right) \times_{l \in [d] \setminus \{k\}} u_{l,i}^{(t)}u_{l,i}^{(t), \top} + \left(\calT_j^{(t)} - \calT_j\right) \otimes_{l \in [d]} u_{l,i}^{(t)}u_{l,i}^{(t), \top}\right. \\
- & \left. \sum_{k=1}^d \left(\calT_j^{(t)} - \calT_j\right) \times_k \left(I_{p_k} - u_{k,i}u_{k,i}^{\top}\right) \times_{l \in [d] \setminus \{j\}} u_{l,i}u_{l,i}^{\top} 
 - \left(\calT_j^{(t)} - \calT_j\right) \otimes_{l \in [d]} u_{l,i}u_{l,i}^{\top}\right\|_{\mathrm{F}}^2 \\
= & \left\|\sum_{k=1}^d \left(\calT_j^{(t)} - \calT_j\right) \times_k \left[\left(u_{k,i}u_{k,i}^\top -u_{k,i}^{(t)}u_{k,i}^{(t), \top}\right) + \left(I_{p_k} - u_{k,i}u_{k,i}^\top\right)\right] \times_{l \in [d] \setminus \{k\}} \left[\left(u_{l,i}^{(t)}u_{l,i}^{(t), \top} - u_{l,i}u_{l,i}^{\top}\right) + u_{l,i}u_{l,i}^{\top}\right]\right. \\
+ & \left(\calT_j^{(t)} - \calT_j\right) \otimes_{l \in [d]} \left[\left(u_{l,i}^{(t)}u_{l,i}^{(t), \top} - u_{l,i}u_{l,i}^{\top}\right) + u_{l,i}u_{l,i}^{\top}\right] \\
- & \left. \sum_{k=1}^d \left(\calT_j^{(t)} - \calT_j\right) \times_k \left(I_{p_k} - u_{k,i}u_{k,i}^{\top}\right) \times_{l \in [d] \setminus \{j\}} u_{l,i}u_{l,i}^{\top} 
- \left(\calT_j^{(t)} - \calT_j\right) \otimes_{l \in [d]} u_{l,i}u_{l,i}^{\top}\right\|_{\mathrm{F}}^2 \\
\leqslant & 2d \sum_{m=1}^d \binom{d}{m} \max_{l \in [d]}\left\|u_{l, i}^{(t)} u_{l, i}^{(t), \top} - u_{l, i} u_{l, i}^{\top}\right\|^{2m} \cdot \left\|\calT_j^{(t)}-\calT_j\right\|^2 \\
\leqslant & 2d^2 \left\|\calT_j^{(t)}-\calT_j\right\|^2 \cdot \frac{\left\|\calT_i^{(t)}-\calT_i\right\|^2}{\lambda_i^2}.
\end{align*}

It implies that
$$
\begin{aligned}
& \left\|\calP_{\TT_i^{(t)}} \left(\calT_j^{(t)} - \calT_j\right)\right\|_{\mathrm{F}}^2 \\
\leqslant & \left\|\calP_{\TT_i} \left(\calT_j^{(t)} - \calT_j\right)\right\|_{\mathrm{F}}^2 +  \left\|\left(\calP_{\TT_i^{(t)}} - \calP_{\TT_i}\right)\left(\calT_j^{(t)} - \calT_j\right)\right\|_{\mathrm{F}}^2 \\
\leqslant & \left(2d^2+ 2d +1\right) \cdot \left\|\calT_j^{(t)}-\calT_j\right\|_{\mathrm{F}}^2 \cdot \left(\frac{\left\|\calT_j^{(t)}-\calT_j\right\|_{\mathrm{F}}}{\lambda_j} + \eta\right)^{2d-2} + 2d \left\|\calT_j^{(t)}-\calT_j\right\|_{\mathrm{F}}^2 \cdot \frac{\left\|\calT_i^{(t)}-\calT_i\right\|_{\mathrm{F}}^2}{\lambda_i^2}.
\end{aligned}
$$

Therefore, we have
$$
\begin{aligned}
\left\|\calP_{\TT_i^{(t)}} \left(\calT_j^{(t)} - \calT_j\right)\right\|
\leqslant & \sqrt{2}\left(d+1\right) \cdot \left\|\calT_j^{(t)}-\calT_j\right\|_{\mathrm{F}} \cdot \left[\left(\frac{\left\|\calT_j^{(t)}-\calT_j\right\|_{\mathrm{F}}}{\lambda_j} + \eta\right)^{d-1} + \frac{\left\|\calT_i^{(t)}-\calT_i\right\|}{\lambda_i}\right].
\end{aligned}
$$

\end{proof}


\end{document}